\newtheorem{theorem}{Theorem}
\newtheorem{proposition}{Proposition}
\newtheorem{lemma}[theorem]{Lemma}
\theoremstyle{definition}
\theoremstyle{remark}
\newtheorem{remark}{Remark}
\theoremstyle{definition}
\newtheorem{assumption}{Assumption}
\theoremstyle{definition}
\newtheorem{example}{Example}
\newcommand{\simiid}{\overset{\mathrm{iid}}{\sim}}
\newcommand{\convas}{\overset{\mathrm{a.s.}}{\to}}
\newcommand{\HSIC}{\mathrm{HSIC}}
\newcommand{\calM}{\mathcal{M}}
\newcommand{\calS}{\mathcal{S}}
\newcommand{\calZ}{\mathcal{Z}}
\newcommand{\calX}{\mathcal{X}}
\newcommand{\calY}{\mathcal{Y}}
\newcommand{\calF}{\mathcal{F}}
\newcommand{\calG}{\mathcal{G}}
\newcommand{\calC}{\mathcal{C}}
\newcommand{\calD}{\mathcal{D}}
\newcommand{\calK}{\mathcal{K}}
\newcommand{\calH}{\mathcal{H}}
\newcommand{\calU}{\mathcal{U}}
\newcommand{\quant}{\mathbf{Q}}
\newcommand{\tr}[1]{\text{tr}\roundbrack{#1}}
\newcommand{\abs}[1]{\left\lvert#1\right\rvert}
\newcommand{\norm}[2]{\left\lVert#2\right\rVert_{#1}}
\DeclareMathOperator*{\argmax}{arg\,max}
\newcommand{\Real}{\mathbb{R}}
\newcommand{\indicator}[1]{\mathbbm{1}\curlybrack{#1}}
\newcommand{\roundbrack}[1]{\left( #1 \right)}
\newcommand{\curlybrack}[1]{\left\lbrace #1 \right\rbrace}
\newcommand{\squarebrack}[1]{\left\lbrack #1 \right\rbrack}
\newcommand{\anglebrack}[1]{\left\langle #1 \right\rangle}
\newcommand{\Prob}{\mathbb{P}}
\newcommand{\Exp}[2]{\mathbb{E}_{#1}\left\lbrack#2\right\rbrack}
\newcommand{\Var}[2]{\mathbb{V}_{#1}\left\lbrack#2\right\rbrack}
\newcommand{\EmpVar}[2]{\hat{\mathbb{V}}_{#1}\left\lbrack#2\right\rbrack}
\newcommand{\Cov}[1]{\mathrm{Cov}\left(#1\right)}
\newcommand{\indep}{\perp \!\!\! \perp}
\newcommand{\sign}[1]{\mathrm{sign}\left(#1\right)}
\definecolor{myred}{RGB}{160, 50, 50}
\definecolor{myblue}{RGB}{50, 100, 200}
\definecolor{mygreen}{RGB}{40, 180, 40}
    \title{Sequential Kernelized Independence Testing}
\author{Aleksandr Podkopaev$^{1}$\thanks{A large fraction of this work was completed while AP was an intern at Amazon in Summer 2022.}, Patrick Blöbaum$^2$, \\
    Shiva Prasad Kasiviswanathan$^2$, Aaditya Ramdas$^{1,2}$\\ 
	Carnegie Mellon University$^1$\\
	Amazon Web Services$^2$
}
\date{}
\begin{document}

\maketitle

\begin{abstract}
Independence testing is a classical statistical problem that has been extensively studied in the batch setting when one fixes the sample size before collecting data. However, practitioners often prefer procedures that adapt to the complexity of a problem at hand instead of setting sample size in advance. Ideally, such procedures should (a) stop earlier on easy tasks (and later on harder tasks), hence making better use of available resources, and (b) continuously monitor the data and efficiently incorporate statistical evidence after collecting new data, while controlling the false alarm rate. Classical batch tests are not tailored for streaming data: valid inference after data peeking requires correcting for multiple testing which results in low power. Following the principle of testing by betting, we design sequential kernelized independence tests that overcome such shortcomings. We exemplify our broad framework using bets inspired by kernelized dependence measures, e.g., the Hilbert-Schmidt independence criterion. Our test is also valid under non-i.i.d.\ time-varying settings. We demonstrate the power of our approaches on both simulated and real data.
\end{abstract}

\tableofcontents

\section{Introduction}\label{sec:intro}

Independence testing is a fundamental statistical problem that has also been studied within information theory and machine learning. Given paired observations $(X,Y)$ sampled from some (unknown) joint distribution $P_{XY}$, the goal is to test the null hypothesis that $X$ and $Y$ are independent. The literature on independence testing is vast as there is no unique way to measure dependence, and different measures give rise to different tests. Traditional measures of dependence, such as Pearson's $r$, Spearman's $\rho$, and Kendall's $\tau$, are limited to the case of univariate random variables. Kernel tests~\citep{bach2003kernel_cca,gretton2005kernel_coco,gretton2005measuring} are amongst the most prominent modern tools for nonparametric independence testing that work for general $\calX,\calY$ spaces. 

In the literature, heavy emphasis has been placed on \emph{batch} testing when one has access to a sample whose size is specified in advance. However, 
even if random variables are dependent, the sample size that suffices to detect dependence is never known a priori. If the results of a test are promising yet non-conclusive (e.g., a p-value is slightly larger than a chosen significance level), one may want to collect more data and re-conduct the study. This is not allowed by traditional batch tests. We focus on sequential tests that allow peeking at observed data to decide whether to stop and reject the null or to continue collecting data. 

\paragraph{Problem Setup.} Suppose that one observes a stream of data: $\roundbrack{Z_t}_{t\geq 1}$, where $Z_t=(X_t, Y_t)\simiid P_{XY}$. We design sequential tests for the following pair of hypotheses:
\begin{subequations}\label{eq:ind_testing}
\begin{align}
    H_0: & \ Z_t\simiid P_{XY}, \ t\geq 1 \ \text{and} \ P_{XY}=P_X\times P_Y, \label{eq:null_iid}\\
    H_1: & \ Z_t\simiid P_{XY}, \ t\geq 1 \ \text{and} \ P_{XY}\neq P_X\times P_Y.\label{eq:alt_iid}
\end{align}
\end{subequations}
Following the framework of ``tests of power one''~\citep{darling1968some}, we define a level-$\alpha$ sequential test  as a mapping $\Phi: \cup_{t=1}^\infty (\calX\times\calY)^t\to \curlybrack{0,1}$ that satisfies
\begin{equation*}
    \Prob_{H_0}\roundbrack{\exists t\geq 1:\Phi(Z_1,\dots,Z_t)=1} \leq \alpha.
\end{equation*} 
As is standard, $0$ stands for ``do not reject the null yet'' and $1$ stands for ``reject the null and stop''. Defining the stopping time $\tau := \inf\curlybrack{t\geq 1: \Phi(Z_1,\dots, Z_t)=1}$ as the first time that the test outputs 1, a sequential test must satisfy
\begin{align*}
    &\Prob_{H_0}\roundbrack{\tau<\infty}\leq \alpha. 
\end{align*}
We work in a very general composite nonparametric setting: $H_0$ and $H_1$ consist of huge classes of distributions (discrete/continuous) for which there may not be a common reference measure, making it impossible to define densities and thus ruling out likelihood-ratio based methods. 

\paragraph{Our Contributions.} Following the principle of testing by betting, we design consistent sequential nonparametric independence tests. Our bets are inspired by popular kernelized dependence measures: Hilbert-Schmidt independence criterion (HSIC)~\citep{gretton2005measuring}, constrained covariance criterion (COCO)~\citep{gretton2005kernel_coco}, and kernelized canonical correlation (KCC)~\citep{bach2003kernel_cca}. We provide theoretical guarantees on \emph{time-uniform} type I error control for these tests --- the type I error is controlled even if the test is continuously monitored and adaptively stopped --- and further establish consistency and asymptotic rates for our sequential HSIC under the i.i.d.\ setting. Our tests also remain valid even if the underlying distribution changes over time. Additionally, while the initial construction of our tests requires bounded kernels, we also develop variants based on symmetry-based betting that overcome this requirement. This strategy can be readily used with a linear kernel to construct a sequential linear correlation test. We justify the practicality of our tests through a detailed empirical study.

We start by highlighting two major shortcomings of existing tests that our new tests overcome.

\paragraph{(i) Limitations of Corrected Batch tests and Reduction to Two-sample Testing.} Batch tests (without corrections for multiple testing) have an inflated false alarm rate under continuous monitoring (see Appendix~\ref{appsubsec:batch_failure}). Na\"ive Bonferroni corrections restore type I error control, but generally result in tests with low power. This motivates a direct design of sequential tests (not by correcting batch tests). It is tempting to reduce sequential independence testing to sequential two-sample testing, for which a powerful solution has been recently designed~\citep{shekhar2022one_two_sample}. This can be achieved by splitting a single data stream into two and permuting the $X$ data in one of the streams (see Appendix~\ref{appsubsec:two_sample_reduction}). Still, the splitting results in inefficient use of data and thus low power, compared to our new direct approach (Figure~\ref{fig:intro_fig}).

\begin{figure}[!htb]
\centering
        \includegraphics[width=0.5\linewidth]{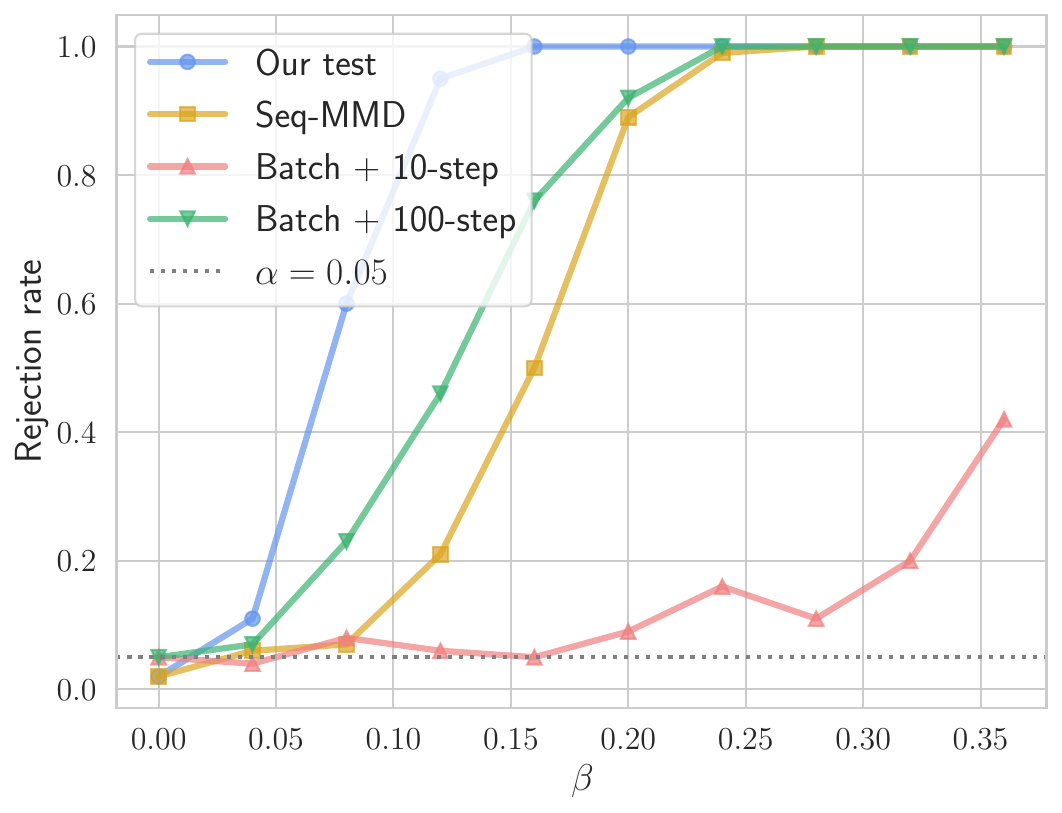}
        \caption{Valid sequential independence tests for: $Y_t=X_t\beta+\varepsilon_t$, $X_t,\varepsilon_t\sim \mathcal{N}(0,1)$. Batch + $n$-step is batch HSIC with Bonferroni correction applied every $n$ steps (allowing monitoring only at those steps). Seq-MMD refers to the reduction to two-sample testing (Appendix~\ref{appsubsec:two_sample_reduction}). Our test outperforms other tests.}
        \label{fig:intro_fig}
\end{figure}

\paragraph{(ii) Time-varying Independence Testing: Beyond the i.i.d.\ Setting.} A common practice of using a permutation p-value for batch independence testing requires $(X, Y)$-pairs to be i.i.d.\ (more generally, exchangeable). If data distribution drifts, the resulting test is no longer valid, and even under mild changes, an inflated false alarm rate is observed empirically. Our tests handle more general non-stationary settings. For a stream of independent data: $(Z_t)_{t\geq 1}$, where $Z_t\sim P_{XY}^{(t)}$, consider the following pair of hypotheses:
\begin{subequations}\label{eq:ind_test_generalization_indep}
\begin{align}
    H_0: & \ P_{XY}^{(t)}= P_{X}^{(t)}\times P_{Y}^{(t)}, \ \forall t, \label{eq:null_noniid_drift}\\
    H_1: & \ \exists t': P_{XY}^{(t')}\neq P_{X}^{(t')}\times P_{Y}^{(t')}. \label{eq:alt_noniid_drift}
\end{align}
\end{subequations}
Suppose that under $H_0$ in~\eqref{eq:null_noniid_drift}, it holds that either $P_{X}^{(t-1)}=P_{X}^{(t)}$ or $P_{Y}^{(t-1)}=P_{Y}^{(t)}$ for each $t\geq 1$, meaning that either the distribution of $X$ may change or that of $Y$ may change, but not both simultaneously. In this case, our tests control the type I error, whereas batch independence tests fail to. 

\begin{example}\label{ex:dist_drift_ex}
Let $\roundbrack{(W_{t}, V_{t})}_{t\geq 1}$ be a sequence of i.i.d.\ jointly Gaussian random variables with zero mean and covariance matrix with ones on the diagonal and $\rho$ off the diagonal. For $t=1,2,\dots$ and $i\in\curlybrack{0,1}$, consider the following stream:
\begin{equation}\label{eq:dist_drift_example}
\begin{cases}
    X_{2t-i} = 2c \sin(t)+W_{2t-1},\\
    Y_{2t-i} = 3 c\sin(t) + V_{2t-1},
\end{cases}
\end{equation}
Setting $\rho=0$ falls into the null case~\eqref{eq:null_noniid_drift}, whereas any $\rho\neq 0$ implies dependence as per~\eqref{eq:alt_noniid_drift}. Visually, it is hard to distinguish between $H_0$ and $H_1$: the drift makes data seem dependent (see Appendix~\ref{appsubsec:inst_dep}). In Figure~\ref{subfig:testing_distribution_drift_null}, we show that our test controls type I error, whereas batch test fails\footnote{This is also related to Yule's nonsense correlation~\citep{yule1926why,ernst2017yule}, which would not pose a problem for our method.}.
\end{example}

\begin{figure}[!htb]
\begin{center}
        \subfigure[The null is true, meaning that $W \indep V$ in Example~\ref{ex:dist_drift_ex}. (Batch) HSIC: dashed lines, SKIT: solid lines.]{
            \includegraphics[width=0.45\linewidth]{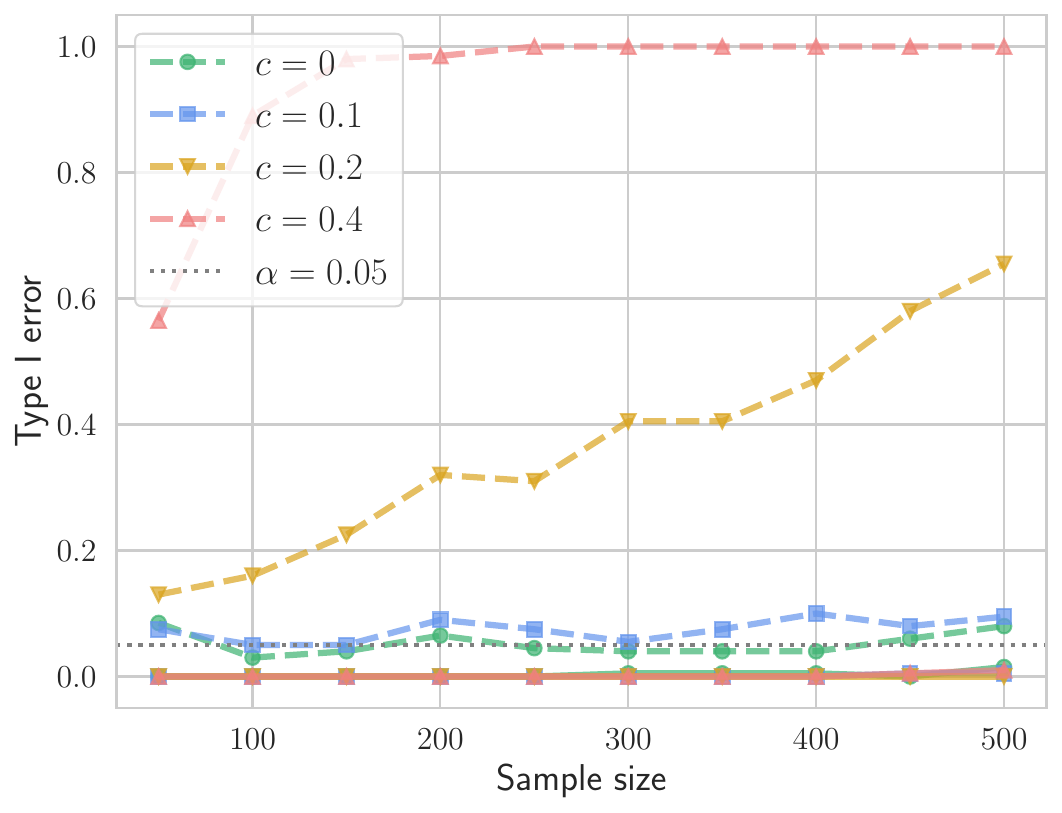}
            \label{subfig:testing_distribution_drift_null}}
        \subfigure[The alternative is true with $\rho=1/2$.]{
            \includegraphics[width=0.45\linewidth]{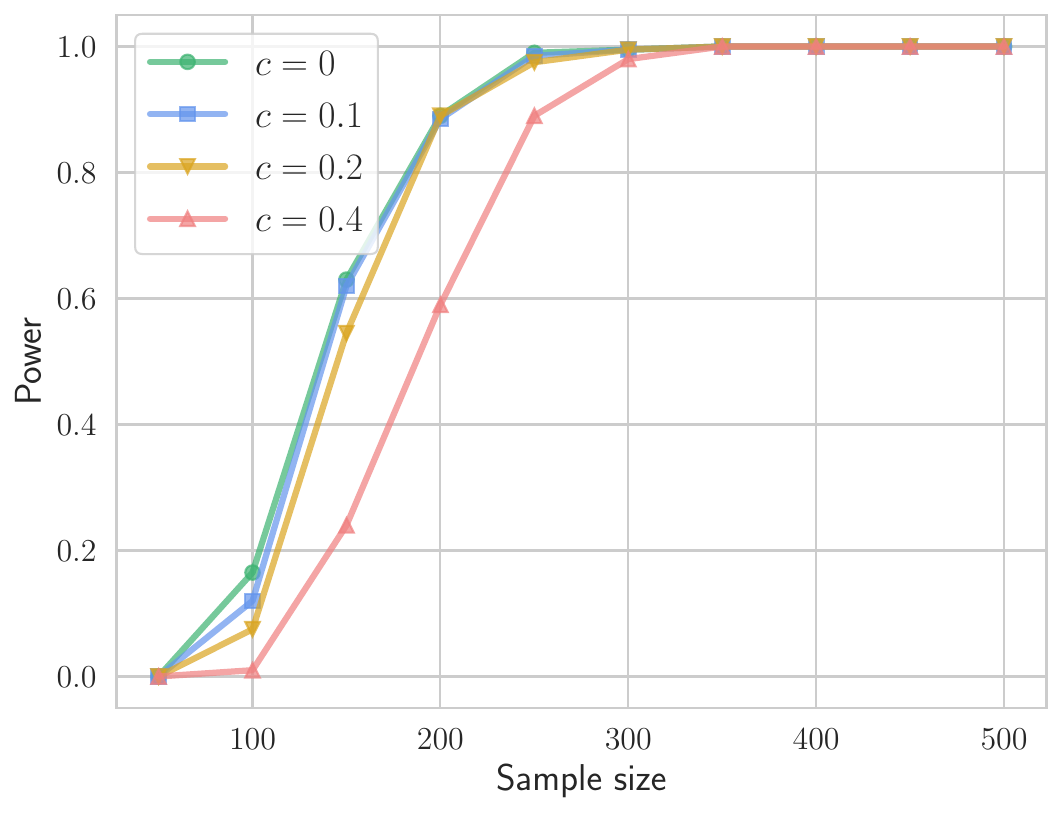}
            \label{subfig:testing_distribution_drift_alt}}
        \caption{Under distribution drift~\eqref{eq:dist_drift_example}, SKIT controls type I error under $H_0$ and has high power under $H_1$. Batch HSIC fails to control type I error under $H_0$ (hence we do not plot its power).}
        \label{fig:dist_drift_fig}
\end{center}
\end{figure}

\paragraph{Related Work.} In addition to the aforementioned papers on batch independence testing, our work is also related to methods for ``safe, anytime-valid inference'', e.g., confidence sequences~\citep[and references therein]{ws2020est_means} and e-processes~\citep{grunwald2020safe,ramdas2022testing_exch}. Sequential nonparametric two-sample tests of~\citet{balsubramani2016seq}, based on linear-time test statistics and empirical Bernstein inequality for random walks, are amongst the first results in this area. While such tests are valid in the same sense as ours, betting-based tests are much better empirically~\citep{shekhar2022one_two_sample}. 

The roots of the principle of testing by betting can be traced back to Ville's 1939 doctoral thesis~\citep{ville1939etude} and was recently popularized
by~\citet{shafer2021testing_by_betting}. The latter work considered it mainly in the context of parametric and simple hypotheses, far from our setting. The most closely related works to the current paper are~\citep{shekhar2022one_two_sample,shaer2022seq_crt,grunwald2022anytime} which also handle composite and nonparametric hypotheses. \citet{shekhar2022one_two_sample} use testing by betting to design sequential nonparametric two-sample tests, including a state-of-the-art sequential kernel maximum mean discrepancy test. Two recent works by \citet{shaer2022seq_crt,grunwald2022anytime}, developed in parallel to the current paper, extend these ideas to the setting of sequential conditional independence tests ($H_0: X\indep Y\mid Z$) under the model-X assumption, i.e., when the distribution $X\mid Z$ is assumed to be known. Our methods are very different from the aforementioned papers because when $Z=\emptyset$, the model-X assumption reduces to assuming $P_X$ is known, which we of course avoid. The current paper can be seen as extending the ideas from~\citep{shekhar2022one_two_sample} to nonparametric independence testing.


\section{Sequential Kernel Independence Test}\label{sec:seq_indep_testing}

We begin with a brief summary of the principle of testing by betting~\citep{shafer2021testing_by_betting, shafer2019game}. Suppose that one observes a sequence of random variables $\roundbrack{Z_t}_{t\geq 1}$, where $Z_t\in \calZ$. A player begins with initial capital $\calK_0=1$. At round $t$ of the game, she selects a payoff function $f_t:\calZ\to [-1,\infty)$ that satisfies $\Exp{Z\sim P_Z}{f_t(Z)\mid \calF_{t-1}}=0$ for all $P_Z\in H_0$, where $\calF_{t-1}=\sigma(Z_1,\dots,Z_{t-1})$, and bets a fraction of her wealth $\lambda_t\calK_{t-1}$ for an $\calF_{t-1}$-measurable $\lambda_t\in [0,1]$. Once $Z_t$ is revealed, her wealth is updated as
\begin{equation}\label{eq:wealth_update}
\calK_t 
= \calK_{t-1}(1+\lambda_t f_t(Z_t)).    
\end{equation}
A level-$\alpha$ sequential test is obtained using the following stopping rule: $\Phi(Z_1,\dots,Z_t)=\indicator{\calK_t\geq 1/\alpha}$, i.e., the null is rejected once the player's capital exceeds $1/\alpha$.
If the null is true, imposed constraints on sequences of payoffs $\roundbrack{f_t}_{t\geq 1}$ and betting fractions $\roundbrack{\lambda_t}_{t\geq 1}$ prevent the player from making money. Formally, the wealth process $\roundbrack{\calK_t}_{t\geq 0}$ is a nonnegative martingale. The validity of the resulting test then follows from Ville's inequality~\citep{ville1939etude}.

To ensure that the resulting test has power under the alternative, payoffs and betting fractions have to be chosen carefully. Inspired by sequential two-sample tests of~\citet{shekhar2022one_two_sample}, our construction relies on dependence measures: $m(P_{XY};\calC)$, which admit a variational representation:
\begin{equation}\label{eq:dep_var_rep}
    \sup_{c\in\calC} \squarebrack{\mathbb{E}_{P_{XY}}c(X,Y)-\mathbb{E}_{P_X\times P_Y}c(X,Y)},
\end{equation}
for some class $\calC$ of bounded functions $c:\calX\times \calY\to\Real$. 
The supremum above is often achieved at some $c^* \in \calC$, and in this case, $c^*$ is called the ``witness function''. In what follows, we use sufficiently rich functional classes $\calC$ for which the following characteristic condition holds:
\begin{equation}\label{eq:charac_cond}
    \begin{cases}
        m(P_{XY};\calC)=0,& \text{under $H_0$},\\
        m(P_{XY};\calC)>0,& \text{under $H_1$},
    \end{cases}
\end{equation}
for $H_0$ and $H_1$ defined in~\eqref{eq:ind_testing}. To proceed, we bet on \emph{pairs} of points from $P_{XY}$. Swapping $Y$-components in a pair of points from $P_{XY}$: $Z_{2t-1}$ and $Z_{2t}$, gives two points from $P_X\times P_Y$: $\tilde{Z}_{2t-1} = (X_{2t-1},Y_{2t})$ and $\tilde{Z}_{2t} = (X_{2t},Y_{2t-1})$. We consider payoffs $f(Z_{2t-1},Z_{2t})$ of the form:
\begin{equation}\label{eq:general_payoff}
    s\cdot\roundbrack{(c(Z_{2t-1})+c(Z_{2t})) - (c(\tilde{Z}_{2t-1})+c(\tilde{Z}_{2t}))},
\end{equation}
where the scaling factor $s>0$ ensures that $f(z,z')\in [-1,1]$ for any $z,z'\in\calX\times\calY$. When the witness function $c^*$ is used in the above, we denote the resulting function as the ``oracle payoff'' $f^*$. Let the oracle wealth process $\roundbrack{\calK^*_t}_{t\geq 0}$ be defined by using $f^*$ along with the betting fraction 
\begin{equation}\label{eq:oracle_closed_form}
    \lambda^\star = \frac{\Exp{}{f^\star(Z_{1},Z_{2})}}{\Exp{}{f^\star(Z_{1},Z_{2})}+\Exp{}{(f^\star(Z_{1},Z_{2}))^2}}.
\end{equation}
We have the following result regarding the above quantities, whose proof is presented in Appendix~\ref{appsubsubsec:sec_2_main}. 

\begin{restatable}{theorem}{thmoracletest}\label{thm:oracle_test}
Let $\calC$ denote a class of functions $c:\calX\times\calY\to\Real$ for measuring dependence as per~\eqref{eq:dep_var_rep}. 
\begin{enumerate}
    \item Under $H_0$ in~\eqref{eq:null_iid} and~\eqref{eq:null_noniid_drift}, any payoff $f$ of the form~\eqref{eq:general_payoff} satisfies $\Exp{H_0}{f(Z_{1},Z_{2})}=0$.
\item Suppose that $\calC$ satisfies~\eqref{eq:charac_cond}. Under $H_1$ in~\eqref{eq:alt_iid}, the oracle payoff $f^*$ based on the witness function $c^*$ satisfies $\Exp{H_1}{f^\star(Z_{1},Z_{2})}>0$. Further, for $\lambda^\star$ defined in~\eqref{eq:oracle_closed_form}, it holds that $\Exp{H_1}{\log(1+\lambda^\star f^\star(Z_{1},Z_{2})}>0$. Hence, $\calK_t^\star\convas+\infty$, which implies that the oracle test is consistent: $\Prob_{H_1}(\tau^\star<\infty)=1$,
where $\tau^\star = \inf\curlybrack{t\geq 1: \calK_t^\star\geq 1/\alpha}$.
\end{enumerate}
\end{restatable}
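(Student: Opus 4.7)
The statement comprises three pieces: a null-mean identity, a strictly positive alternative mean, and an almost-sure divergence of the oracle wealth process. The plan is to handle the first two by direct moment computations that exploit the $Y$-swap construction behind~\eqref{eq:general_payoff}, and the third by lower bounding the expected log-increment at $\lambda^\star$ and then invoking the strong law of large numbers.

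For Part 1, I would observe that under either form of the null, swapping $Y$-components inside a pair yields the same mixed moments of $c$ as the original pair. Under~\eqref{eq:null_iid}, all four random vectors $Z_{2t-1},Z_{2t},\tilde Z_{2t-1},\tilde Z_{2t}$ are i.i.d.\ $P_X\times P_Y$, so $\Exp{}{c(Z_{2t-1})+c(Z_{2t})}=\Exp{}{c(\tilde Z_{2t-1})+c(\tilde Z_{2t})}$ for every bounded $c$, which makes $\Exp{H_0}{f(Z_1,Z_2)}=0$ immediate. Under the time-varying null~\eqref{eq:null_noniid_drift}, I would write each of the four expectations as an integral against the corresponding marginal product; the assumption that within each pair either the $X$-marginal or the $Y$-marginal is constant lets one factor out the shared marginal and see that the sum over the original pair equals the sum over the swapped pair, so again $\Exp{}{f}=0$. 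For Part 2, the witness function attains the supremum in~\eqref{eq:dep_var_rep}, so a direct computation gives $\Exp{}{f^\star(Z_1,Z_2)}=2s\cdot m(P_{XY};\calC)>0$ by~\eqref{eq:charac_cond}.

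The substantive step in Part 3 is showing that $g(\lambda)\eqdef\Exp{}{\log(1+\lambda f^\star(Z_1,Z_2))}$ is strictly positive at $\lambda=\lambda^\star$. My plan is to lower-bound the integrand by a quadratic inequality of the form $\log(1+y)\geq y-y^2$ valid on an appropriate range of $y$, which after taking expectations produces $g(\lambda)\geq \lambda\,\Exp{}{f^\star}-\lambda^2\,\Exp{}{(f^\star)^2}$. Substituting $\lambda^\star=\Exp{}{f^\star}/(\Exp{}{f^\star}+\Exp{}{(f^\star)^2})$ then collapses the right-hand side to $\Exp{}{f^\star}^3/(\Exp{}{f^\star}+\Exp{}{(f^\star)^2})^2$, which is strictly positive by Part 2. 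Given $g(\lambda^\star)>0$, the divergence is routine: $\log\calK_t^\star$ decomposes as a sum of i.i.d.\ increments $\log(1+\lambda^\star f^\star(Z_{2i-1},Z_{2i}))$ with positive mean and bounded variance, so the strong law of large numbers yields $t^{-1}\log\calK_t^\star\convas \tfrac{1}{2}g(\lambda^\star)>0$, whence $\calK_t^\star\convas+\infty$ and $\tau^\star<\infty$ almost surely under $H_1$.

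The main obstacle I anticipate is justifying the quadratic lower bound on $\log(1+y)$ at $y=\lambda^\star f^\star$: since $f^\star\in[-1,1]$ and $\lambda^\star<1$ can in principle approach $1$ (for example, when $\Exp{}{(f^\star)^2}$ is very small relative to $\Exp{}{f^\star}$), the naive inequality $\log(1+y)\geq y-y^2$ breaks down near $y=-1$. A careful argument will likely use either a sharper bound valid on all of $(-1,1)$ with a coefficient depending on $\lambda^\star$, or a verification that $\lambda^\star$ must lie in the regime (e.g.\ $\lambda^\star\leq 1/2$) where the simple quadratic bound applies; this restriction can plausibly be extracted from the specific algebraic form of $\lambda^\star$ together with the boundedness $|f^\star|\leq 1$.
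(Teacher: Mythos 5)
Your overall architecture coincides with the paper's: Part 1 is proved there exactly as you describe (under~\eqref{eq:null_iid} the four points $Z_{2t-1},Z_{2t},\tilde Z_{2t-1},\tilde Z_{2t}$ are equal in distribution, and under~\eqref{eq:null_noniid_drift} the fixed marginal lets one match the original pair with the swapped pair in distribution, so linearity of expectation finishes it), and Part 2 is the same one-line computation giving $\Exp{}{f^\star(Z_1,Z_2)}=2s\cdot m(P_{XY};\calC)>0$ under~\eqref{eq:charac_cond}. The SLLN step at the end of Part 3 is also identical (note only that the paper normalizes by the number of betting rounds, so the limit is $\Exp{}{\log(1+\lambda^\star f^\star)}$ rather than half of it --- a bookkeeping point).

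The genuine issue is the one you flagged yourself: the inequality $\log(1+y)\geq y-y^2$ fails near $y=-1$, and your fallback of verifying $\lambda^\star\leq 1/2$ cannot be carried out, because boundedness only gives $\Exp{}{(f^\star)^2}\geq(\Exp{}{f^\star})^2$, so $\lambda^\star=a/(a+b)$ with $a=\Exp{}{f^\star}$, $b=\Exp{}{(f^\star)^2}$ can be as large as $1/(1+a)$, i.e.\ arbitrarily close to $1$ when the payoff is concentrated near a small positive value; one can simultaneously arrange a small-probability atom of $f^\star$ near $-1$, so $\lambda^\star f^\star$ genuinely enters the region where the naive quadratic bound reverses. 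The paper closes this with your \emph{first} suggested remedy: it invokes the inequality of \citet{fan2015exp_ineq}, $\log(1+\lambda y)\geq \lambda y+y^2\roundbrack{\log(1-\lambda)+\lambda}$, valid for \emph{all} $y\geq -1$ and $\lambda\in[0,1)$, takes expectations, and then uses $\log(1-\lambda)+\lambda\geq -\lambda^2/(2(1-\lambda))$ to obtain
\begin{equation*}
\Exp{H_1}{\log(1+\lambda^\star f^\star(Z_1,Z_2))}\ \geq\ \frac{(\Exp{}{f^\star(Z_1,Z_2)})^2/2}{\Exp{}{f^\star(Z_1,Z_2)}+\Exp{}{(f^\star(Z_1,Z_2))^2}}\ >\ 0
\end{equation*}
after substituting $\lambda^\star=a/(a+b)$. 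So your plan is salvageable exactly along the lines you anticipated, but as written the critical positivity step is not yet a proof; replacing $y-y^2$ with the $\lambda$-dependent bound above is the missing ingredient.
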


\begin{remark}\label{rmk:kelly_bet}
    While the betting fraction~\eqref{eq:oracle_closed_form} suffices to guarantee the consistency of the corresponding test, the fastest growth rate of the wealth process is ensured by considering
    \begin{equation*}
         \lambda^\star_{\mathrm{K}} \in \argmax_{\lambda\in (0,1)} \Exp{}{\log(1+\lambda f^\star(Z_{1},Z_{2})}.
    \end{equation*}
    \emph{Overshooting} with the betting fraction may, however, result in the wealth tending to zero almost surely.
\end{remark}

\begin{example}\label{ex:overshoot}
    Consider a sequence $\roundbrack{W_t}_{t\geq 1}$, where
\begin{equation*}
    W_t = \begin{cases}
        1, & \text{with probability $3/5$},\\
        -1, & \text{with probability $2/5$}.
    \end{cases}
\end{equation*}
In this case, we have $\lambda^\star_{\mathrm{K}} = 1/5$ and $\Exp{}{\log(1+\lambda^\star W_t)}>0$, implying that $\calK_t\convas+\infty$. On the other hand, it is easy to check that for $\tilde{\lambda}=2\lambda^\star_{\mathrm{K}}$ we have: $\mathbb{E}[\log(1+\tilde{\lambda} W_t)]<0$. As a consequence, for the wealth process $\calK_t$ corresponding to the pair $(f^*,\tilde{\lambda})$ it holds that $\calK_t\convas 0$.
\end{example}

To construct a practical test, we select an appropriate class $\calC$ for which the condition~\eqref{eq:charac_cond} holds and replace the oracle $f^\star$ and $\lambda^\star$ with predictable estimates $\roundbrack{f_t}_{t\geq 1}$ and $\roundbrack{\lambda_t}_{t\geq 1}$, meaning that those are computed using data observed prior to a given round of the game. We begin with a particular dependence measure, namely HSIC~\citep{gretton2005measuring}, and defer extensions to other measures to Section~\ref{sec:alt_measures}. 

\paragraph{HSIC-based Sequential Kernel Independence Test (SKIT).} Let $\calG$ be a separable RKHS\footnote{Recall that an RKHS is a Hilbert space $\calG$ of real-valued functions over $\calX$, for which the evaluation functional $\delta_x:\calG\to\Real$, which maps $g\in\calG$ to $g(x)$, is a continuous map, and this fact must hold for every $x\in\calX$. Each RKHS is associated with a unique positive-definite kernel $k: \calX\times\calX\to\Real$, which can be viewed as a generalized inner product on $\calX$. We refer the reader to~\citep{muander2017kernel} for an extensive recent survey of kernel methods.} with positive-definite kernel $k(\cdot,\cdot)$ and feature map $\varphi(\cdot)$ on $\calX$. Let $\calH$ be a separable RKHS with positive-definite kernel $l(\cdot,\cdot)$ and feature map $\psi(\cdot)$ on $\calY$. 

\begin{assumption}\label{assump:kernels}
Suppose that:
\begin{enumerate}[label=(A\arabic*),itemsep=0em]
    \item Kernels $k$ and $l$ are nonnegative and bounded by one: $\sup_{x\in\calX} k(x,x)\leq 1$ and $\sup_{y\in\calY} l(y,y)\leq 1$.
    \item The product kernel $k\otimes l:(\calX\times\calY)^2\to\Real$, defined as $(k\otimes l)((x,y),(x',y')):=k(x,x')l(y,y')$, 
    is a characteristic kernel on the joint domain.
\end{enumerate}
\end{assumption}
Assumption (A1) is used to justify that the mean embeddings introduced later are well-defined elements of RKHSs, and the particular bounds are used to simplify the constants. 
Assumption (A2) is a sufficient condition for the characteristic condition~\eqref{eq:charac_cond} to hold~\citep{fukumizu2007kernel_measures}, and we use it to argue about the consistency of our test. Under mild assumptions, it can be further relaxed to characteristic property of the kernels on the respective domains~\citep{gretton2015simpler_cond}. We note that the most common kernels on $\Real^d$: Gaussian (RBF) and Laplace, satisfy both (A1) and (A2). Define mean embeddings of the joint and marginal distributions:
\begin{equation}\label{eq:mean_emb_pop}
\begin{gathered}
    \mu_{XY} := \Exp{P_{XY}}{\varphi(X)\otimes \psi(Y)}, \\ 
    \mu_X := \Exp{P_X}{\varphi(X)},\quad \mu_Y := \Exp{P_Y}{\psi(Y)}.
\end{gathered}
\end{equation}
The cross-covariance operator $C_{XY}:\calH\to \calG$ associated with the joint measure $P_{XY}$ is defined as
\begin{equation*}\label{eq:cross_cov_def}
\begin{aligned}
    C_{XY} := \mu_{XY}  - \mu_X\otimes \mu_Y,
\end{aligned}
\end{equation*}
where $\otimes$ is the outer product operation. This operator generalizes the covariance matrix.
\emph{Hilbert-Schmidt independence criterion} (HSIC) is a criterion defined as Hilbert-Schmidt norm, a generalization of Frobenius norm for matrices, of the cross-covariance operator~\citep{gretton2005measuring}:
\begin{equation}\label{eq:hsic_def}
\begin{aligned}
    \HSIC(P_{XY};\calG,\calH)&:= \norm{\mathrm{HS}}{C_{XY}}^2.
\end{aligned}
\end{equation}
HSIC is the squared kernel maximum mean discrepancy (MMD) between mean embeddings of $P_{XY}$ and $P_X\times P_Y$ in the \emph{product RKHS} $\calG\otimes\calH$ on $\calX\times\calY$, defined by a product kernel $k\otimes l$. 
We can rewrite~\eqref{eq:hsic_def} as
\begin{equation*}
\begin{aligned}
\bigg(\sup_{\substack{g\in\calG\otimes\calH\\ \norm{\calG\otimes\calH}{g}\leq 1}}\Exp{P_{XY}}{g(X,Y)}-\Exp{P_X\times P_Y}{g(X',Y')}\bigg)^2,
\end{aligned}
\end{equation*}
which matches the form~\eqref{eq:dep_var_rep}. The witness function for HSIC admits a closed form (see Appendix~\ref{appsec:ommited_der}):
\begin{align}
    g^\star &= \frac{\mu_{XY}-\mu_X\otimes\mu_Y}{\norm{\calG\otimes\calH}{\mu_{XY}-\mu_X\otimes\mu_Y}},\label{eq:hsic_oracle_wf}
\end{align}
where $\mu_{XY}$, $\mu_{X}$ and $\mu_{Y}$ are defined in~\eqref{eq:mean_emb_pop}. The oracle payoff based on HSIC: $f^\star(Z_{2t-1},Z_{2t})$, is given by
\begin{equation}\label{eq:hsic_payoff_fn_oracle}
   \frac{1}{2}\roundbrack{g^\star(Z_{2t-1})+g^\star(Z_{2t})- g^\star(\tilde{Z}_{2t-1})-g^\star(\tilde{Z}_{2t})},
\end{equation}
which has the form~\eqref{eq:general_payoff} with $s=1/2$. To construct the test, we use estimators $\roundbrack{f_t}_{t\geq 1}$ of the oracle payoff $f^\star$ obtained by replacing $g^\star$ in~\eqref{eq:hsic_payoff_fn_oracle} with the plug-in estimator:
\begin{align}
    \hat{g}_t &= \frac{\hat{\mu}_{XY}-\hat{\mu}_X\otimes\hat{\mu}_Y}{\norm{\calG\otimes\calH}{\hat{\mu}_{XY}-\hat{\mu}_X\otimes\hat{\mu}_Y}},\label{eq:hsic_plugin_wf}
\end{align}
where $\hat{\mu}_{XY}, \hat{\mu}_{X},\hat{\mu}_{Y}$ denote the empirical mean embeddings, computed at round $t$ as\footnote{At round $t$, evaluating HSIC-based payoff requires a number of operations that is linear in $t$ (see Appendix~\ref{appsubsec:hsic_updates}). Thus after $T$ steps, we have expended a total of $O(T^2)$ computation, the same as batch HSIC. However, our test threshold is $1/\alpha$, but batch HSIC requires permutations to determine the right threshold, requiring recomputing HSIC hundreds of times. Thus, our test is actually more computationally feasible than batch HSIC.}
\begin{equation}\label{eq:emp_mean_emb}
    \begin{gathered}
        \hat{\mu}_{XY} = \tfrac{1}{2(t-1)} \sum_{i=1}^{2(t-1)} \varphi(X_{i})\otimes \psi(Y_{i}),\\
        \hat{\mu}_{X} = \tfrac{1}{2(t-1)} \sum_{i=1}^{2(t-1)} \varphi(X_{i}), \quad
        \hat{\mu}_{Y} = \tfrac{1}{2(t-1)} \sum_{i=1}^{2(t-1)}  \psi(Y_{i}).
    \end{gathered}
\end{equation}

Note that in~\eqref{eq:hsic_plugin_wf} the witness function is defined as an operator. We clarify this
point in Appendix~\ref{appsec:ommited_der}. To select betting fractions, we follow~\citet{cutkosky2018bb_reductions} who state the problem of choosing the optimal betting fraction for coin betting as an online optimization problem with exp-concave losses and propose a strategy based on online Newton step (ONS)~\citep{hazan2007log_regret} as a solution. ONS betting fractions are inexpensive to compute while being supported by strong theoretical guarantees. We also consider other strategies for selecting betting fractions and defer a detailed discussion to Appendix~\ref{appsec:betting_fractions}. 
We conclude with formal guarantees on time-uniform type I error control and consistency of HSIC-based SKIT. In fact, we establish a stronger result: we show that the wealth process grows exponentially and characterize the rate of the growth of wealth in terms of the true HSIC score. The proof is deferred to Appendix~\ref{appsubsubsec:sec_2_main}.

\begin{algorithm}[!htb]
\caption{Online Newton step (ONS) strategy for selecting betting fractions}\label{alg:ons}
\begin{algorithmic}
\State \textbf{Input:} sequence of payoffs $\roundbrack{f_t(Z_{2t-1},Z_{2t})}_{t\geq 1}$, $\lambda^{\mathrm{ONS}}_1=0$, $a_0=1$. 
\For {$t=1,2,\dots$}
\State Observe $f_t(Z_{2t-1},Z_{2t})$;
\State Set $z_t=f_t(Z_{2t-1},Z_{2t})/(1+\lambda_{t}^{\mathrm{ONS}}f_t(Z_{2t-1},Z_{2t}))$;
\State Set $a_t=a_{t-1}+z_t^2$;
\State Set $\lambda^{\mathrm{ONS}}_{t+1}:=\frac{1}{2}\wedge\roundbrack{0 \vee \roundbrack{\lambda^{\mathrm{ONS}}_t+\frac{2}{2-\log 3} \cdot \frac{z_t}{a_t}}}$;
\EndFor
\end{algorithmic}
\end{algorithm}

\begin{algorithm}[!htb]
\caption{HSIC-based SKIT}\label{alg:ind_test_framework}
\begin{algorithmic}
\State \textbf{Input:} significance level $\alpha\in (0,1)$, data stream $\roundbrack{Z_i}_{i\geq 1}$, where $Z_i=(X_i,Y_i)\sim P_{XY}$, $\lambda_1^{\mathrm{ONS}}=0$.
\For {$t=1,2,\dots$}
\State Use $Z_1,\dots, Z_{2(t-1)}$ to compute $\hat{g}_{t}$ as in~\eqref{eq:hsic_plugin_wf};
\State Compute HSIC payoff $f_t(Z_{2t-1}, Z_{2t})$;
\State Update the wealth process $\calK_t$ as in~\eqref{eq:wealth_update};
\If{$\calK_t\geq 1/\alpha$}
\State Reject $H_0$ and stop;
\Else 
\State Compute $\lambda_{t+1}^{\mathrm{ONS}}$ (Algorithm~\ref{alg:ons});
\EndIf
\EndFor
\end{algorithmic}
\end{algorithm}

\begin{restatable}{theorem}{thmhsicskitconsistency}\label{thm:hsic_skit}
Suppose that Assumption~\ref{assump:kernels} is satisfied. The following claims hold for HSIC-based SKIT (Algorithm~\ref{alg:ind_test_framework}):
\begin{enumerate}
    \item Under $H_0$ in~\eqref{eq:null_iid} or~\eqref{eq:null_noniid_drift}, SKIT ever stops with probability at most $\alpha$: $\Prob_{H_0}\roundbrack{\tau<\infty}\leq \alpha$.
\item Suppose that $H_1$ in~\eqref{eq:alt_iid} is true. Then it holds that $\calK_t\overset{\mathrm{a.s.}}{\longrightarrow}+\infty$, and thus SKIT is consistent: $\Prob_{H_1}(\tau<\infty) = 1$.
Further, the wealth grows exponentially, and the corresponding growth rate satisfies
    \begin{equation}\label{eq:log_wealth_lower_bound}
        \liminf_{t\to\infty} \tfrac{\log \calK_t}{t} \overset{\mathrm{a.s.}}{\geq} \tfrac{\Exp{}{f^\star(Z_{1},Z_{2})}}{4}\cdot \roundbrack{\tfrac{\Exp{}{f^\star(Z_{1},Z_{2})}}{\Exp{}{(f^\star(Z_{1},Z_{2}))^2}} \wedge 1},
    \end{equation}
    where $f^\star$ is the oracle payoff defined in~\eqref{eq:hsic_payoff_fn_oracle}.
\end{enumerate}
\end{restatable}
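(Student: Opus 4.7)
The plan is to treat the two parts of the theorem in sequence. For part 1, I would show that $(\calK_t)_{t\geq 0}$ is a nonnegative martingale under either formulation of the null and then invoke Ville's inequality to conclude $\Prob_{H_0}(\tau<\infty)\leq\alpha$. The ONS betting fraction $\lambda_t^{\mathrm{ONS}}\in[0,1/2]$ is $\calF_{t-1}$-measurable by construction (Algorithm~\ref{alg:ons}, with $\calF_{t-1}=\sigma(Z_1,\dots,Z_{2(t-1)})$), and the plug-in witness $\hat{g}_t$ in~\eqref{eq:hsic_plugin_wf} depends only on $Z_1,\dots,Z_{2(t-1)}$, so the payoff $f_t$ is also $\calF_{t-1}$-measurable. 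Conditioning on $\calF_{t-1}$ and applying part 1 of Theorem~\ref{thm:oracle_test} to the fresh pair $(Z_{2t-1},Z_{2t})$, we obtain $\Exp{H_0}{f_t(Z_{2t-1},Z_{2t})\mid\calF_{t-1}}=0$ under both the iid null~\eqref{eq:null_iid} and the drift null~\eqref{eq:null_noniid_drift} (the latter because the swapping-based payoff~\eqref{eq:general_payoff} has zero conditional mean whenever either $P_X$ or $P_Y$ is constant across the consecutive pair). Boundedness $|f_t|\leq 1$ (from $s=1/2$) together with $\lambda_t^{\mathrm{ONS}}\in[0,1/2]$ keeps $\calK_t\geq 0$, so $(\calK_t)$ is a nonnegative martingale with $\calK_0=1$ and Ville's inequality finishes part 1.

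For part 2, I would work with $\log\calK_t=\sum_{s=1}^{t}\log(1+\lambda_s^{\mathrm{ONS}}f_s(Z_{2s-1},Z_{2s}))$. Applying the ONS regret bound to the exp-concave losses $\ell_s(\lambda)=-\log(1+\lambda f_s(Z_{2s-1},Z_{2s}))$ yields, for every fixed $\lambda\in[0,1/2]$, that $\sum_s \ell_s(\lambda_s^{\mathrm{ONS}})-\sum_s\ell_s(\lambda)=O(\log t)$, i.e.
\[
\log\calK_t \;\geq\; \sum_{s=1}^{t}\log(1+\lambda f_s(Z_{2s-1},Z_{2s})) - O(\log t).
\]
Since $|\lambda f_s|\leq 1/2$, the elementary inequality $\log(1+x)\geq x-x^2$ on $[-1/2,1/2]$ gives
\[
\log\calK_t \;\geq\; \lambda \sum_{s=1}^{t} f_s(Z_{2s-1},Z_{2s}) - \lambda^2 \sum_{s=1}^{t} f_s(Z_{2s-1},Z_{2s})^2 - O(\log t).
\]

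The crux is then to replace the data-dependent payoffs $f_s$ by the deterministic oracle $f^\star$ inside the Cesaro averages. By the SLLN in the separable Hilbert space $\calG\otimes\calH$ (applicable because $\norm{\calG\otimes\calH}{\varphi(X)\otimes\psi(Y)}\leq 1$ a.s.\ under (A1)) we have $\hat\mu_{XY}\to\mu_{XY}$, $\hat\mu_X\to\mu_X$, $\hat\mu_Y\to\mu_Y$ a.s. Under $H_1$, Assumption (A2) together with~\eqref{eq:charac_cond} makes the denominator $\norm{\calG\otimes\calH}{\mu_{XY}-\mu_X\otimes\mu_Y}$ strictly positive, so continuity of the normalization yields $\norm{\calG\otimes\calH}{\hat{g}_t-g^\star}\to 0$ a.s. The reproducing property with (A1) then gives the uniform bound $|f_s(z,z')-f^\star(z,z')|\leq 2\norm{\calG\otimes\calH}{\hat{g}_s-g^\star}$, and a Toeplitz-style averaging combined with the ordinary SLLN for $f^\star(Z_{2s-1},Z_{2s})$ and its square yields $\frac{1}{t}\sum_s f_s\to\Exp{}{f^\star(Z_1,Z_2)}$ and $\frac{1}{t}\sum_s f_s^2\to\Exp{}{f^\star(Z_1,Z_2)^2}$ a.s. Dividing the previous display by $t$, taking $\liminf$, and optimizing the quadratic in $\lambda\in[0,1/2]$ reproduces~\eqref{eq:log_wealth_lower_bound}: the unconstrained maximizer is $\lambda^\circ=\Exp{}{f^\star}/(2\Exp{}{(f^\star)^2})$, and splitting into $\lambda^\circ\leq 1/2$ (giving $(\Exp{}{f^\star})^2/(4\Exp{}{(f^\star)^2})$) versus $\lambda^\circ>1/2$ (using $\lambda=1/2$ and $\Exp{}{(f^\star)^2}\leq\Exp{}{f^\star}$ to get $\Exp{}{f^\star}/4$) produces exactly the $\wedge 1$ factor. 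Since $\Exp{}{f^\star(Z_1,Z_2)}>0$ under $H_1$ by part 2 of Theorem~\ref{thm:oracle_test}, this rate is strictly positive, so $\calK_t\convas+\infty$ and therefore $\Prob_{H_1}(\tau<\infty)=1$.

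The main obstacle I anticipate is the Cesaro/SLLN transition from the random payoffs $f_s$ to the oracle $f^\star$: the summands $f_s(Z_{2s-1},Z_{2s})$ are neither independent nor identically distributed because $f_s$ evolves with $\hat{g}_s$, so an ordinary SLLN does not apply directly. The argument has to be decomposed into (i) a deterministic approximation piece using $\norm{\calG\otimes\calH}{\hat{g}_s-g^\star}\to 0$ together with Toeplitz summation, and (ii) a martingale SLLN applied to the centered sequence $f^\star(Z_{2s-1},Z_{2s})-\Exp{}{f^\star(Z_1,Z_2)}$. A secondary technical point is to guarantee the normalization denominator in $\hat{g}_s$ stays bounded away from zero eventually, which one extracts from the strict positivity of $\norm{\calG\otimes\calH}{\mu_{XY}-\mu_X\otimes\mu_Y}$ under $H_1$ plus the Hilbert-space SLLN above.
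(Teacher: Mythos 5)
Your proposal is correct and follows essentially the same route as the paper: part 1 via the nonnegative-martingale property of the wealth process (predictable $\hat g_t$ and $\lambda_t^{\mathrm{ONS}}$, bounded fair payoffs) plus Ville's inequality, and part 2 via the ONS regret bound against a constant bet, the inequality $\log(1+x)\geq x-x^2$ on $[-1/2,1/2]$, almost-sure convergence of the Cesaro averages $\tfrac1t\sum_s f_s$ and $\tfrac1t\sum_s f_s^2$ to the oracle moments, and the same case split on the optimal constant $\lambda$ that produces the $\wedge\, 1$ factor. The only genuine divergence is in the sub-lemma establishing $\norm{\calG\otimes\calH}{\hat g_t-g^\star}\overset{\mathrm{a.s.}}{\to}0$: you invoke a strong law of large numbers in the separable Hilbert space $\calG\otimes\calH$ for the empirical mean embeddings and then use continuity of the normalization, whereas the paper gets convergence of the norms from the finite-sample concentration bound for the biased HSIC estimator plus Borel--Cantelli and then shows $\langle\hat g_t,g^\star\rangle_{\calG\otimes\calH}\to 1$ via scalar SLLNs (including the SLLN for U-statistics); both routes are valid given the boundedness of the feature maps under (A1).
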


\noindent Since $\Exp{}{f^\star(Z_{1},Z_{2})}= \sqrt{\mathrm{HSIC}(P_{XY};\calG,\calH)}$ and $\Exp{}{(f^\star(Z_{1},Z_{2}))^2}\leq 1$, Theorem~\ref{thm:hsic_skit} implies that:
\begin{equation*}
    \liminf_{t\to\infty} \Big(\tfrac{1}{t}\log \calK_t\Big) \overset{\mathrm{a.s.}}{\geq} \tfrac{1}{4}\cdot \mathrm{HSIC}(P_{XY};\calG,\calH).
\end{equation*}
However, the lower bound~\eqref{eq:log_wealth_lower_bound} is never worse. Further, if the variance of the oracle payoffs: $\sigma^2 =\Var{}{f^\star(Z_1,Z_2)}$, is small, i.e., $\sigma^2\leq \Exp{}{f^\star(Z_{1},Z_{2})}(1-\Exp{}{f^\star(Z_{1},Z_{2})})$, we get a faster rate: $\sqrt{\mathrm{HSIC}(P_{XY};\calG,\calH)}/4$, reminiscent of an empirical Bernstein type adaptation. Up to some small constants, we show that this is the best possible exponent that adapts automatically between the low- and high-variance regimes. We do this by considering the oracle test, i.e., assuming that the oracle HSIC payoff $f^\star$ is known. Amongst the betting fractions that are constrained to lie in $[-0.5, 0.5]$, like ONS bets, the optimal growth rate is ensured by taking
\begin{equation}\label{eq:oracle_const_bet}
    \lambda^\star = \argmax_{\lambda\in [-0.5,0.5]}\Exp{}{\log(1+\lambda f^\star(Z_{1},Z_{2}))}.
\end{equation}
We have the following result about the growth rate of the oracle test, whose proof is deferred to Appendix~\ref{appsubsubsec:sec_2_main}. 

\begin{restatable}{proposition}{proporaclebounds}\label{prop:oracle_bounds}
    The optimal log-wealth $S^\star := \Exp{}{\log(1+\lambda^\star f^\star(Z_{1},Z_{2}))}$ --- that can be achieved by an oracle betting scheme~\eqref{eq:oracle_const_bet} which knows $f^\star$ from ~\eqref{eq:hsic_payoff_fn_oracle} and the underlying distribution --- satisfies:
    \begin{equation}\label{eq:oracle_lw_bounds}
        S^\star \leq  \frac{\Exp{}{f^\star(Z_{1},Z_{2})}}{2} \roundbrack{ \frac{8\Exp{}{f^\star(Z_{1},Z_{2})}}{3\Exp{}{(f^\star(Z_{1},Z_{2}))^2}}\wedge 1}.
    \end{equation}
\end{restatable}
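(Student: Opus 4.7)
The plan is to upper bound $S^\star$ by two independent quadratic proxies for $\log(1+\cdot)$ and take their minimum, which will factor as in the claim. Write $\mu := \Exp{}{f^\star(Z_1,Z_2)}$ and $\nu := \Exp{}{(f^\star(Z_1,Z_2))^2}$. By Theorem~\ref{thm:oracle_test}, $\mu>0$ under $H_1$; and by construction of the payoff, $|f^\star|\leq 1$, so for any $\lambda\in[-1/2,1/2]$ we have $\lambda f^\star\in[-1/2,1/2]$, which keeps $\log(1+\lambda f^\star)$ finite.

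\textbf{Step 1: the $\mu/2$ bound.} I first apply $\log(1+x)\leq x$ for $x>-1$ and take expectations to obtain $\Exp{}{\log(1+\lambda f^\star(Z_1,Z_2))}\leq \lambda\mu$ for every feasible $\lambda$. Maximizing over $\lambda\in[-1/2,1/2]$ and using $\mu>0$ gives $S^\star\leq \mu/2$, which is the ``$\wedge 1$'' branch of the stated bound.

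\textbf{Step 2: the variance-adaptive bound.} Here I would first establish the elementary inequality
\begin{equation*}
\log(1+x)\leq x-\tfrac{3}{16}x^2,\quad x\in[-1/2,1/2].
\end{equation*}
This follows by setting $\phi(x):=x-\tfrac{3}{16}x^2-\log(1+x)$ and checking that $\phi'(x)=\frac{x(5-3x)}{8(1+x)}$ is nonpositive on $[-1/2,0]$ and nonnegative on $[0,1/2]$, so that $x=0$ is the global minimum on $[-1/2,1/2]$ with $\phi(0)=0$. Substituting $x=\lambda f^\star(Z_1,Z_2)$, taking expectations, and then relaxing the constraint on $\lambda$ to all of $\mathbb{R}$ (which only enlarges the supremum of the quadratic upper bound), the maximizer is $\lambda^\dagger=8\mu/(3\nu)$ and
\begin{equation*}
S^\star\leq \sup_{\lambda\in\mathbb{R}}\roundbrack{\lambda\mu-\tfrac{3}{16}\lambda^2\nu}=\frac{4\mu^2}{3\nu}.
\end{equation*}

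\textbf{Step 3: combine.} Taking the minimum of the two bounds yields
\begin{equation*}
S^\star\leq \min\roundbrack{\frac{\mu}{2},\frac{4\mu^2}{3\nu}}=\frac{\mu}{2}\min\roundbrack{\frac{8\mu}{3\nu},1},
\end{equation*}
which is exactly~\eqref{eq:oracle_lw_bounds}. The only nonroutine element is the calibration of the constant $3/16$ in the quadratic inequality: it is the unique choice for which the unconstrained quadratic optimum produces the factor $8/3$ appearing in the claim. A smaller constant (e.g.\ the naive $\tfrac{1}{8}$ from a crude Taylor bound) still yields a valid upper bound, just a looser one; the main obstacle is thus verifying that $3/16$ is admissible over the full range $[-1/2,1/2]$, which the derivative argument above handles directly.
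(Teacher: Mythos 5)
Your proof is correct. It follows the same skeleton as the paper's argument --- combine the trivial linear bound $S^\star \le \Exp{}{f^\star(Z_1,Z_2)}/2$ with a quadratic upper bound on $\log(1+x)$ over $[-1/2,1/2]$, then take the minimum --- but the execution of the second step differs in a way worth noting. The paper uses the tighter inequality $\log(1+x)\le x-\tfrac{3}{8}x^2$, performs the \emph{constrained} maximization over $\lambda\in[-1/2,1/2]$ (which requires a side condition for the unconstrained optimizer to be feasible), obtains the intermediate bound $\tfrac{2}{3}\,(\Exp{}{f^\star})^2/\Exp{}{(f^\star)^2}$, and then multiplies by two to arrive at the stated form. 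You instead calibrate the quadratic coefficient to $3/16$ (your derivative computation $\phi'(x)=x(5-3x)/(8(1+x))$ is correct, and in fact establishes the inequality on all of $(-1,5/3]$) and relax to an unconstrained maximization over $\lambda\in\Real$, which directly yields $4\mu^2/(3\nu)$ and hence the stated constant with no case analysis and no ad hoc factor of two. The trade-off: the paper's route actually proves a bound that is tighter by a factor of two in the low-variance regime (before deliberately loosening it), while your route is cleaner and matches the stated claim exactly; both are valid proofs of~\eqref{eq:oracle_lw_bounds}.
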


\begin{remark}[Minibatching]
    While our test processes the data stream in pairs, it is possible to use larger batches of points from the joint distribution $P_{XY}$. For a batch size is $b\geq 2$, at round $t$, the bet is placed on $\curlybrack{(X_{b(t-1)+1},Y_{b(t-1)+1}),\dots, (X_{bt},Y_{bt})}$. In this case, the empirical mean embeddings are computed analogous to~\eqref{eq:emp_mean_emb} but using $\{(X_{i}, Y_{i})\}_{i\leq b(t-1)}$. We defer the details to Appendix~\ref{appsec:ommited_der}. Such payoff function satisfies the necessary conditions for the wealth process to be a nonnegative martingale, and hence, the resulting sequential test has time-uniform type I error control. The same argument as in the proof of Theorem~\ref{thm:hsic_skit} can be used to show that the resulting test is consistent. The main downside of minibatching is that monitoring of the test (and hence, optional stopping) is allowed only after processing every $b$ points from $P_{XY}$. 
\end{remark}

\paragraph{Distribution Drift.} As discussed in Section~\ref{sec:intro}, batch independence tests have an inflated false alarm rate even under mild changes in distribution. In contrast, SKIT remains valid even when the data distribution drifts over time. For a stream of independent points, we claimed that our test controls the type I error as long as only one of the marginal distributions changes at each round. In Appendix~\ref{appsec:ommited_der}, we provide an example that shows that this assumption is necessary for the validity of our tests.
Our tests can also be used to test instantaneous independence between two streams. Formally, define $\calD_t:= \curlybrack{(X_i, Y_i)}_{i\leq 2t}$ and consider:
\begin{subequations}\label{eq:ind_test_generalization}
\begin{align}
     H_0:& \ \forall t, \ X_{2t-1} \indep Y_{2t-1} \mid \calD_{t-1} \text{ and } X_{2t} \indep Y_{2t}\mid \calD_{t-1}, \label{eq:null_noniid}\\
     H_1:& \ \exists t':  X_{2t'-1} \ \cancel{\indep} \ Y_{2t'-1} \mid \calD_{t-1} \text{ or } X_{2t'} \ \cancel{\indep} \ Y_{2t'}\mid \calD_{t-1}. \label{eq:alt_noniid}
\end{align}
\end{subequations}

\begin{assumption}\label{assump:noniid_setting}
    Suppose that under $H_0$ in~\eqref{eq:null_noniid}, it also holds that:
    \begin{enumerate}[label=(\alph*),itemsep=0em]
    \item The cross-links between $X$ and $Y$ streams are not allowed, meaning that for all $t\geq 1$,
    \begin{equation}\label{eq:cross_links}
        \begin{aligned}
        Y_{t}\indep X_{t-1} & \mid Y_{t-1}, \{(X_i, Y_i)\}_{i\leq t-2},\\
        X_{t}\indep Y_{t-1} & \mid X_{t-1},\{(X_i, Y_i)\}_{i\leq t-2}.
        \end{aligned}
    \end{equation}
    \item For all $t\geq 1$, either $(X_t, X_{t-1})$ or $(Y_t, Y_{t-1})$ are exchangeable conditional on $\curlybrack{(X_i, Y_i)}_{i\leq t-2}$. 
\end{enumerate}
\end{assumption}

In the above, (a) relaxes the independence assumption within each pair, and (b) generalizes the assumption about allowed changes in the marginal distributions of $X$ and $Y$. Under the above setting, we deduce that our test retains type-1 error control, and the proof is deferred to Appendix~\ref{appsubsubsec:sec_2_main}. 

\begin{restatable}{theorem}{thmnoniid}\label{thm:non_iid}
Suppose that $H_0$ in~\eqref{eq:null_noniid} is true. Further, assume that Assumption~\ref{assump:noniid_setting} holds. Then HSIC-based SKIT (Algorithm~\ref{alg:ind_test_framework}) satisfies: $\Prob_{H_0}\roundbrack{\tau<\infty}\leq \alpha$.
\end{restatable}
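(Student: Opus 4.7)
The approach is to invoke Ville's inequality for the wealth process $(\calK_t)_{t\geq 0}$ produced by SKIT. Since $\lambda_t\in[0,1]$ (in fact $[0,1/2]$ via ONS) and $f_t\in[-1,1]$, we have $\calK_t\geq 0$, and $\calK_0=1$. If $(\calK_t)$ is additionally a martingale with respect to $\calF_t=\sigma(\calD_t)$, then $\Prob_{H_0}(\sup_t \calK_t\geq 1/\alpha)\leq \alpha$ is exactly the stated conclusion. Because both $\lambda_t$ and the witness $\hat{g}_t$ used to construct $f_t$ are $\calF_{t-1}$-measurable, the martingale property reduces to the single identity
\begin{equation*}
\Exp{}{f_t(Z_{2t-1},Z_{2t})\mid \calF_{t-1}} = 0 \qquad \text{for every } t\geq 1.
\end{equation*}
By the form~\eqref{eq:general_payoff}, this is equivalent to showing that the conditional law of the quadruple $(X_{2t-1},Y_{2t-1},X_{2t},Y_{2t})$ given $\calF_{t-1}$ is invariant under the swap $(Y_{2t-1},Y_{2t})\leftrightarrow (Y_{2t},Y_{2t-1})$, or equivalently under swapping the $X$'s.

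The core of the argument is then a factorization lemma:
\begin{equation*}
p(x_{2t-1},y_{2t-1},x_{2t},y_{2t}\mid\calD_{t-1}) \;=\; p(x_{2t-1})\,p(y_{2t-1})\,p(x_{2t}\mid x_{2t-1})\,p(y_{2t}\mid y_{2t-1}),
\end{equation*}
with the conditioning on $\calD_{t-1}$ suppressed on the right. I would establish this by chain-ruling the left-hand side and applying the assumptions in three pieces. The null~\eqref{eq:null_noniid} at index $2t-1$ gives $p(x_{2t-1},y_{2t-1})=p(x_{2t-1})p(y_{2t-1})$. The two no-cross-link identities~\eqref{eq:cross_links} at index $2t$ give $p(x_{2t}\mid x_{2t-1},y_{2t-1})=p(x_{2t}\mid x_{2t-1})$ and $p(y_{2t}\mid x_{2t-1},y_{2t-1})=p(y_{2t}\mid y_{2t-1})$. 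Finally, the null at index $2t$ combines with these cross-link equalities to decouple $p(x_{2t},y_{2t}\mid x_{2t-1},y_{2t-1})$ into the product $p(x_{2t}\mid x_{2t-1})\,p(y_{2t}\mid y_{2t-1})$.

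With the factorization in hand, Assumption~\ref{assump:noniid_setting}(b) closes the argument. WLOG $(Y_{2t-1},Y_{2t})$ is exchangeable conditional on $\calD_{t-1}$, so the bivariate density $p(y_{2t-1})\,p(y_{2t}\mid y_{2t-1})$ is symmetric in its two arguments; hence swapping the $Y$'s leaves the full joint invariant, and the case where $(X_{2t-1},X_{2t})$ is the exchangeable pair is handled symmetrically. The conditional expectation of $f_t$ therefore vanishes, $(\calK_t)$ is a nonnegative martingale with mean $1$, and Ville's inequality delivers $\Prob_{H_0}(\tau<\infty)\leq\alpha$. The main obstacle is the final step of the factorization: the null~\eqref{eq:null_noniid} provides $X_{2t}\indep Y_{2t}$ only conditional on $\calD_{t-1}$, not on the larger $\sigma$-algebra also containing $(X_{2t-1},Y_{2t-1})$, while each cross-link in~\eqref{eq:cross_links} removes only one of those two variables from the conditioning. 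Merging them into the clean decoupling above requires careful conditional-independence bookkeeping (equivalently, a $d$-separation reading of the DAG whose edges are precisely those permitted by~\eqref{eq:cross_links}), and this is where I would expect to spend most of the technical effort.
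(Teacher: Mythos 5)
Your high-level skeleton (show the wealth process is a nonnegative martingale, then apply Ville's inequality) matches the paper, and you have correctly located the dangerous step --- but that step genuinely fails rather than merely requiring effort. The factorization $p(x_{2t-1},y_{2t-1},x_{2t},y_{2t}\mid\calD_{t-1})=p(x_{2t-1})p(y_{2t-1})p(x_{2t}\mid x_{2t-1})p(y_{2t}\mid y_{2t-1})$ is not implied by Assumption~\ref{assump:noniid_setting} together with the null, and neither is the swap-invariance of the conditional law of the quadruple that you want to deduce from it. The obstruction is exactly the one you flag: the null gives $X_{2t}\indep Y_{2t}$ only given $\calD_{t-1}$, and no amount of conditional-independence bookkeeping upgrades this to $X_{2t}\indep Y_{2t}$ given $(X_{2t-1},Y_{2t-1},\calD_{t-1})$. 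Concretely, take $\calD_{t-1}$ trivial, let $X_{2t-1},Y_{2t-1}$ be i.i.d.\ symmetric Rademacher, and, given $(X_{2t-1},Y_{2t-1})=(x,y)$, let $(X_{2t},Y_{2t})$ be symmetric Rademacher with conditional correlation $x\varepsilon$. Then every conditional marginal of $X_{2t}$ and of $Y_{2t}$ is uniform, so both cross-links~\eqref{eq:cross_links} hold and both pairs $(X_{2t-1},X_{2t})$ and $(Y_{2t-1},Y_{2t})$ are i.i.d., hence exchangeable; moreover $\Exp{}{X_{2t}Y_{2t}}=\varepsilon\,\Exp{}{X_{2t-1}}=0$, so the null holds at index $2t$ as well. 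Yet $X_{2t}\not\indep Y_{2t}$ given $(X_{2t-1},Y_{2t-1})$, your factorization fails, and the law is not swap-invariant, since $\Exp{}{X_{2t-1}X_{2t}Y_{2t}}=\varepsilon\neq 0=\Exp{}{X_{2t-1}X_{2t}Y_{2t-1}}$.

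The repair is to prove something strictly weaker, which is all the theorem needs: not invariance of the law, but vanishing of the conditional mean of one specific bilinear functional. The paper expands the payoff into four tensor-product terms and shows
\begin{equation*}
\Exp{}{\roundbrack{\varphi(X_{2t})-\varphi(X_{2t-1})}\otimes\roundbrack{\psi(Y_{2t})-\psi(Y_{2t-1})}\mid\calD_{t-1}}=\roundbrack{\mu_X^{2t}-\mu_X^{2t-1}}\otimes\roundbrack{\mu_Y^{2t}-\mu_Y^{2t-1}},
\end{equation*}
where each of the four terms $\Exp{}{\varphi(X_i)\otimes\psi(Y_j)\mid\calD_{t-1}}$ is factored separately: the diagonal terms via the null, and each cross term (e.g.\ $i=2t$, $j=2t-1$) by first conditioning on $X_{2t-1}$, applying the single relevant cross-link, and then the null at index $2t-1$. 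This never requires decoupling $X_{2t}$ from $Y_{2t}$ given $(X_{2t-1},Y_{2t-1})$ --- the residual dependence in the counterexample above is simply invisible to $\langle\hat g_t,\cdot\rangle_{\calG\otimes\calH}$. Exchangeability of one coordinate pair then kills one factor of the right-hand side, giving $\Exp{}{f_t(Z_{2t-1},Z_{2t})\mid\calD_{t-1}}=0$, and Ville's inequality finishes as you describe. A secondary point: the paper deliberately works without assuming a common dominating measure for $H_0$, so the argument should be phrased with conditional expectations and mean embeddings rather than conditional densities.
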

\citet{chwialkowski2014kernel_ind_test} considered a related (at a high level) problem of testing instantaneous independence between a pair of time series. Similar to distribution drift, HSIC fails to test independence between innovations in time series since naively permuting one series destroys the underlying structure. \citet{chwialkowski2014kernel_ind_test} used a subset of permutations --- rotations by circular shifting (allowed by their assumption of strict stationarity) of one series for preserving the structure --- to design a p-value and used the assumption of mixing (decreasing memory of a process) to justify the asymptotic validity. The setting we consider is very different, and we make no assumptions of mixing or stationarity anywhere. Related works on independence testing for time series also include~\citep{chwialkowski2014wild,besserve2013stat_analysis}. 

\section{Alternative Dependence Measures}\label{sec:alt_measures}

Let $\calC_1$ and $\calC_2$ denote some classes of bounded functions $c_1:\calX\to\Real$ and $c_2:\calY\to\Real$ respectively. For a class $\calC$ of functions $c:\calX\times\calY\to\Real$ that factorize into the product: $c(x,y) = c_1(x)c_2(y)$ for some $c_1\in\calC_1$ and $c_2\in\calC_2$, the general form of dependence measures~\eqref{eq:dep_var_rep} reduces to
\begin{equation*}
\begin{aligned}
    m(P_{XY};\calC_1,\calC_2)=\sup_{\substack{c_1\in\calC_1, c_2\in\calC_2}} \Cov{c_1(X),c_2(Y)}.
\end{aligned}
\end{equation*}
Next, we develop SKITs based on two dependence measures of this form: COCO and KCC. While the corresponding witness functions do not admit a closed form, efficient algorithms for computing the plug-in estimates are available.

\paragraph{Witness Functions for COCO.}
\emph{Constrained covariance} (COCO) is a criterion for measuring dependence based on covariance between smooth functions of random variables:
\begin{equation}\label{eq:coco_cross_cov}
\begin{aligned}
    \sup_{\substack{g,h: \\ \norm{\calG}{g}\leq 1, \norm{\calH}{h}\leq 1} } \Cov{g(X),h(Y)} =\sup_{\substack{g,h: \\ \norm{\calG}{g}\leq 1,  \norm{\calH}{h}\leq 1} } \langle g, C_{XY}h\rangle_{\calG},
\end{aligned}
\end{equation}
where the supremum is taken over the unit balls in the respective RKHSs~\citep{gretton2005kernel_coco,gretton05a}. At round $t$, we are interested in empirical witness functions computed from $\curlybrack{(X_i, Y_i)}_{i\leq 2(t-1)}$. The key observation is that maximizing the objective function in~\eqref{eq:coco_cross_cov} with the plug-in estimator of the cross-covariance operator requires considering only functions in $\calG$ and $\calH$ that lie in the span of the data:
\begin{equation}\label{eq:coco_witness_fns}
    \begin{aligned}
        \hat{g}_t &= \sum_{i=1}^{2(t-1)} \alpha_i \bigg(\varphi(X_i)-\frac{1}{2(t-1)}\sum_{j=1}^{2(t-1)} \varphi(X_j)\bigg),\\ \hat{h}_t &= \sum_{i=1}^{2(t-1)} \beta_i \bigg(\psi(Y_i)-\frac{1}{2(t-1)}\sum_{j=1}^{2(t-1)} \psi(Y_j)\bigg).
    \end{aligned}
\end{equation}
Coefficients $\alpha$ and $\beta$ that solve the maximization problem~\eqref{eq:coco_cross_cov} define the leading eigenvector of the following generalized eigenvalue problem (see Appendix~\ref{appsec:ommited_der}):
\begin{align}
     \begin{pmatrix}
    0 & \frac{1}{2(t-1)}\tilde{K}\tilde{L} \\ 
    \frac{1}{2(t-1)}\tilde{L}\tilde{K} & 0
    \end{pmatrix}
    \begin{pmatrix}
    \alpha \\ \beta
    \end{pmatrix} &= \gamma \begin{pmatrix}
    \tilde{K} & 0 \\ 
    0 & \tilde{L}
    \end{pmatrix} \begin{pmatrix}
    \alpha \\ \beta
    \end{pmatrix},\label{eq:gen_eig_coco}
\end{align}
where $\tilde{K}=HKH$, $\tilde{L}=HLH$, and $H=\textbf{I}_{2(t-1)}-(1/(2(t-1))\textbf{1}\textbf{1}^\top$ is centering projection matrix. Computing the leading eigenvector for~\eqref{eq:gen_eig_coco} is computationally demanding for moderately large $t$. A common practice is to use low-rank approximations of $K$ and $L$ with fast-decaying spectrum~\citep{bach2003kernel_cca}. We present an approach based on incomplete Cholesky decomposition in Appendix~\ref{appsubsec:coco_updates}. 

\paragraph{Witness Functions for KCC.}
\emph{Kernelized canonical correlation} (KCC) relies on the regularized correlation between smooth functions of random variables:
\begin{equation}\label{eq:kcc_def}
\begin{aligned}
    \sup_{\substack{g\in \calG, \\ h \in \calH}} \frac{\Cov{g(X),h(Y)}}{\sqrt{\Var{}{g(X)}+\kappa_1\norm{\calG}{g}^2}\cdot \sqrt{\Var{}{h(Y)}+\kappa_2\norm{\calH}{h}^2}},
\end{aligned}
\end{equation}
where regularization is necessary for obtaining meaningful estimates of correlation~\citep{bach2003kernel_cca,fukumizu2007stat_consistency}. Witness functions for KCC have the same form as for COCO~\eqref{eq:coco_witness_fns}, but $\alpha$ and $\beta$ define the leading eigenvector of a modified problem (Appendix~\ref{appsec:ommited_der}).

\paragraph{SKIT based on COCO or KCC.} Given a pair of the witness functions $g^\star$ and $h^\star$ for COCO (or KCC) criterion, the corresponding oracle payoff: $f^\star (Z_{2t-1},Z_{2t})$, is given by
\begin{equation}\label{eq:coco_payoff_fn_oracle}
\begin{aligned}
     \frac{1}{2}\roundbrack{g^\star(X_{2t})-g^\star(X_{2t-1})}\roundbrack{h^\star(Y_{2t})-h^\star(Y_{2t-1})},
\end{aligned}
\end{equation}
which has the form~\eqref{eq:general_payoff} with $s=1/2$. To construct the test, we rely on estimates $(f_t)_{t\geq 1}$ of the oracle payoff $f^\star$ obtained by using $\hat{g}_t$ and $\hat{h}_t$, defined in~\eqref{eq:coco_witness_fns}, in~\eqref{eq:coco_payoff_fn_oracle}. We assume that $\alpha$ and $\beta$ in~\eqref{eq:gen_eig_coco} are normalized: $\alpha^\top\tilde{K}\alpha=1$ and $\beta^\top\tilde{L}\beta=1$. We conclude with a guarantee on time-uniform false alarm rate control of SKITs based on COCO (Algorithm~\ref{alg:ind_test_coco_kcc}), whose proof is deferred to Appendix~\ref{appsubsec:proofs_sec_3}.

\begin{algorithm}[!htb]
\caption{SKIT based on COCO (or KCC)}\label{alg:ind_test_coco_kcc}
\begin{algorithmic}
\State \textbf{Input:} significance level $\alpha\in (0,1)$, data stream $(Z_i)_{i\geq 1}$, where $Z_i=(X_i,Y_i)\sim P_{XY}$, $\lambda_1^{\mathrm{ONS}}=0$. 
\For {$t=1,2,\dots$}
\State Use $Z_1,\dots, Z_{2(t-1)}$ to compute $\hat{g}_{t}$ and $\hat{h}_{t}$ as in~\eqref{eq:coco_witness_fns};
\State Compute COCO payoff $f_t(Z_{2t-1}, Z_{2t})$;
\State Update the wealth process $\calK_t$ as in~\eqref{eq:wealth_update};
\If{$\calK_t\geq 1/\alpha$}
\State Reject $H_0$ and stop;
\Else 
\State Compute $\lambda_{t+1}^{\mathrm{ONS}}$ (Algorithm~\ref{alg:ons});
\EndIf
\EndFor
\end{algorithmic}
\end{algorithm}

\begin{restatable}{theorem}{thmvaliditycocokcc}\label{thm:validity_coco_kcc}
Suppose that (A1) in Assumption~\ref{assump:kernels} is satisfied. Then, under $H_0$ in~\eqref{eq:null_iid} and~\eqref{eq:null_noniid}, COCO/KCC-based SKIT (Algorithm~\ref{alg:ind_test_coco_kcc}) satisfies: $\Prob_{H_0}\roundbrack{\tau<\infty}\leq \alpha$.
\end{restatable}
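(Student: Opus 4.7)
The plan is to apply Ville's inequality to the wealth process $(\calK_t)_{t\ge 0}$ produced by Algorithm~\ref{alg:ind_test_coco_kcc}, which reduces the task to verifying that $(\calK_t)$ is a non-negative martingale under $H_0$ with $\calK_0 = 1$. Once that is done, Ville's inequality immediately gives $\Prob_{H_0}(\sup_t \calK_t \geq 1/\alpha) \leq \alpha$, i.e.\ $\Prob_{H_0}(\tau<\infty)\leq\alpha$.

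First I would verify measurability and the martingale property. The plug-in estimates $\hat{g}_t,\hat{h}_t$ in~\eqref{eq:coco_witness_fns} and the ONS fraction $\lambda_t$ are all measurable with respect to $\calF_{t-1} := \sigma(Z_1,\ldots,Z_{2(t-1)})$, so the COCO/KCC payoff~\eqref{eq:coco_payoff_fn_oracle} fits the general predictable form~\eqref{eq:general_payoff} by choosing $c(x,y) := \hat{g}_t(x)\hat{h}_t(y)$ and $s = 1/2$ (the product structure $c(x,y)=g(x)h(y)$ is precisely what turns expression~\eqref{eq:coco_payoff_fn_oracle} into a difference of differences of $c$ evaluated on original versus swapped pairs). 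Hence Part~1 of Theorem~\ref{thm:oracle_test} yields $\Exp{}{f_t(Z_{2t-1},Z_{2t})\mid \calF_{t-1}} = 0$ under the iid null~\eqref{eq:null_iid}; under the non-iid null~\eqref{eq:null_noniid} the same vanishing is obtained by repeating the swap/exchangeability argument used in Theorem~\ref{thm:non_iid} (so that Assumption~\ref{assump:noniid_setting} makes swapping the $Y$-components of consecutive observations produce a conditionally exchangeable pair given $\calF_{t-1}$, and hence the expectation of $c(Z_{2t-1})+c(Z_{2t})-c(\tilde{Z}_{2t-1})-c(\tilde{Z}_{2t})$ is zero). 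Combining this with the update~\eqref{eq:wealth_update} gives $\Exp{}{\calK_t\mid \calF_{t-1}}=\calK_{t-1}$.

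Next I would verify non-negativity of $\calK_t$. By the stated normalization $\alpha^\top\tilde K\alpha = \beta^\top\tilde L\beta=1$ we have $\|\hat{g}_t\|_\calG = \|\hat{h}_t\|_\calH=1$; combined with Assumption~(A1) and the Cauchy--Schwarz / reproducing property, this gives $|\hat g_t(x)|\le \|\hat g_t\|_\calG \sqrt{k(x,x)} \le 1$ and $|\hat h_t(y)|\le 1$, whence $|f_t(Z_{2t-1},Z_{2t})|\le\tfrac12\cdot 2\cdot 2 = 2$. Because Algorithm~\ref{alg:ons} constrains $\lambda_t\in[0,1/2]$, the update factor satisfies $1+\lambda_t f_t \ge 1-(1/2)(2)=0$, so $\calK_t\ge 0$ for every $t$. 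Together with the martingale property, Ville's inequality closes the argument.

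The main obstacle I anticipate is the bookkeeping for the non-iid null: one must verify that the product-witness payoff still integrates to zero under the weaker conditional independence/exchangeability conditions of Assumption~\ref{assump:noniid_setting} rather than the original iid assumption, mirroring the steps of Theorem~\ref{thm:non_iid} but now with $c$ depending on both coordinates through a product structure. Beyond this, the argument is a direct, routine specialization of the testing-by-betting template and is substantially simpler than the proof of Theorem~\ref{thm:hsic_skit} because no growth-rate analysis is required.
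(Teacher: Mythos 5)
Your proposal is correct in outline and follows the same template as the paper's proof: reduce the claim to showing that $(\calK_t)_{t\ge 0}$ is a nonnegative martingale with $\calK_0=1$, verify the conditional-mean-zero property of the payoff under both nulls by the swap argument (deferring to the arguments of Theorems~\ref{thm:hsic_skit} and~\ref{thm:non_iid}), verify boundedness via the RKHS normalization, and finish with Ville's inequality. The one substantive divergence is the boundedness step. You bound each witness evaluation separately, $|\hat g_t(x)|\le 1$ and $|\hat h_t(y)|\le 1$, and conclude $|f_t|\le 2$, then rescue nonnegativity of the wealth by invoking the ONS cap $\lambda_t\le 1/2$ so that $1+\lambda_t f_t\ge 0$. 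The paper instead bounds the \emph{differences}: $|\hat g_t(x')-\hat g_t(x)|=|\langle \hat g_t,\varphi(x')-\varphi(x)\rangle_{\calG}|\le \|\varphi(x')-\varphi(x)\|_{\calG}=\sqrt{2(1-k(x,x'))}\le\sqrt{2}$, where the last step uses the \emph{nonnegativity} of the kernel from (A1), and likewise for $\hat h_t$; multiplying and including $s=1/2$ gives the canonical $|f_t|\le 1$. Your route does suffice for the theorem as stated (Algorithm~\ref{alg:ind_test_coco_kcc} uses ONS, so $\lambda_t\le 1/2$), and it even uses less of (A1) (only boundedness, not nonnegativity), but it is more fragile: the wealth can hit exactly zero, at which point the ONS update $z_t=f_t/(1-\lambda_t f_t)$ divides by zero, and the argument would not carry over to the other betting schemes the paper uses with the same payoffs (aGRAPA truncates at $c=0.9$ and the mixture method integrates over $\lambda\in[0,1]$, both of which require $f_t\ge -1$). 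The paper's tighter bound keeps the payoff in the standard $[-1,1]$ range demanded by the general testing-by-betting framework, which is why it is the preferable form of the boundedness argument.
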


\begin{remark} The above result does not contain a claim regarding the consistency of the corresponding tests. If (A2) in Assumption~\ref{assump:kernels} holds, the same argument as in the proof of Theorem~\ref{thm:hsic_skit} can be used to deduce that SKITs based on the \emph{oracle} payoffs (with oracle witness functions $g^\star$ and $h^\star$) are consistent. In contrast to HSIC, for which the oracle witness function is closed-form and the respective plug-in estimator is amenable for the analysis, to argue about the consistency of SKITs based on COCO/KCC, it is necessary to place additional assumptions, especially since low-rank approximations of kernel matrices are involved. We note that a sufficient condition for consistency is that the payoffs are positive on average:
        $\liminf_{t\to\infty}\frac{1}{t}\sum_{i=1}^t f_i(Z_{2i-1}, Z_{2i}) \overset{\mathrm{a.s.}}{>} 0$.
\end{remark}

\paragraph{Synthetic Experiments.} To compare SKITs based on HSIC, COCO, and KCC payoffs, we use RBF kernel with hyperparameters taken to be inversely proportional to the second moment of the underlying variables; we observed no substantial difference when such selection is data-driven (median heuristic). We consider settings where the complexity of a task is controlled through a single univariate parameter:
\begin{enumerate}[label=(\alph*),itemsep=0em]
    \item \emph{Gaussian model.} For $t\geq 1$, we consider $Y_t =  X_t\beta + \varepsilon_t$, where $X_t,\varepsilon_t \sim \mathcal{N}(0,1)$.
We have that $\beta\neq 0$ implies nonzero linear correlation (hence dependence). We consider 20 values for $\beta$, spaced uniformly in [0,0.3], and use $\lambda_X = 1/4$ and $\lambda_Y = 1/(4(1+\beta^2))$ as kernel hyperparameters.

\item \emph{Spherical model.} We generate a sequence of dependent but linearly uncorrelated random variables by taking $(X_t, Y_t) = ((U_t)_{(1)},(U_t)_{(2)})$, where $U_t\simiid \mathrm{Unif}(\mathbb{S}^d),$
for $t\geq 1$. $\mathbb{S}^d$ denotes a unit sphere in $\Real^d$ and $u_{(i)}$ is the $i$-th coordinate of $u$. We consider $d\in\curlybrack{3,\dots,15}$, and use $\lambda_X=\lambda_Y = d/4$ as kernel hyperparameters.
\end{enumerate}

We stop monitoring after 20000 points from $P_{XY}$ (if SKIT does not stop by that time, we retain the null) and aggregate the results over 200 runs for each value of $\beta$ and $d$. In Figure~\ref{fig:sim_data_power}, we confirm that SKITs control the type I error and adapt to the complexity of a task. In settings with a very low signal-to-noise ratio (small $\beta$ or large $d$), SKIT's power drops, but in such cases, both sequential and batch independence tests inevitably require a lot of data to reject the null. We defer additional experiments to Appendix~\ref{appsubsec:hard_to_det_vis}. 

\begin{figure}[!htb]
\begin{center}
        \subfigure[Gaussian model.]{\includegraphics[width=0.45\linewidth]{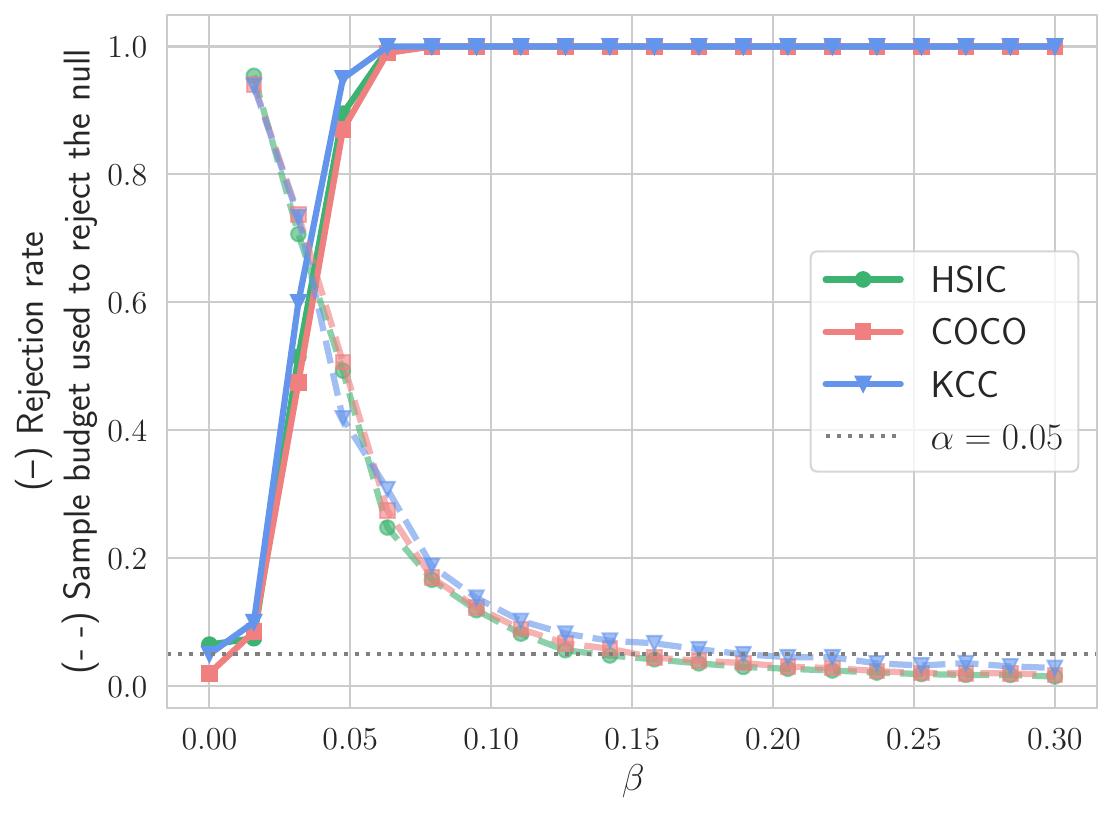}
                \label{subfig:gaus_sim_comp}}
        \subfigure[Spherical model.]{
                \includegraphics[width=0.45\linewidth]{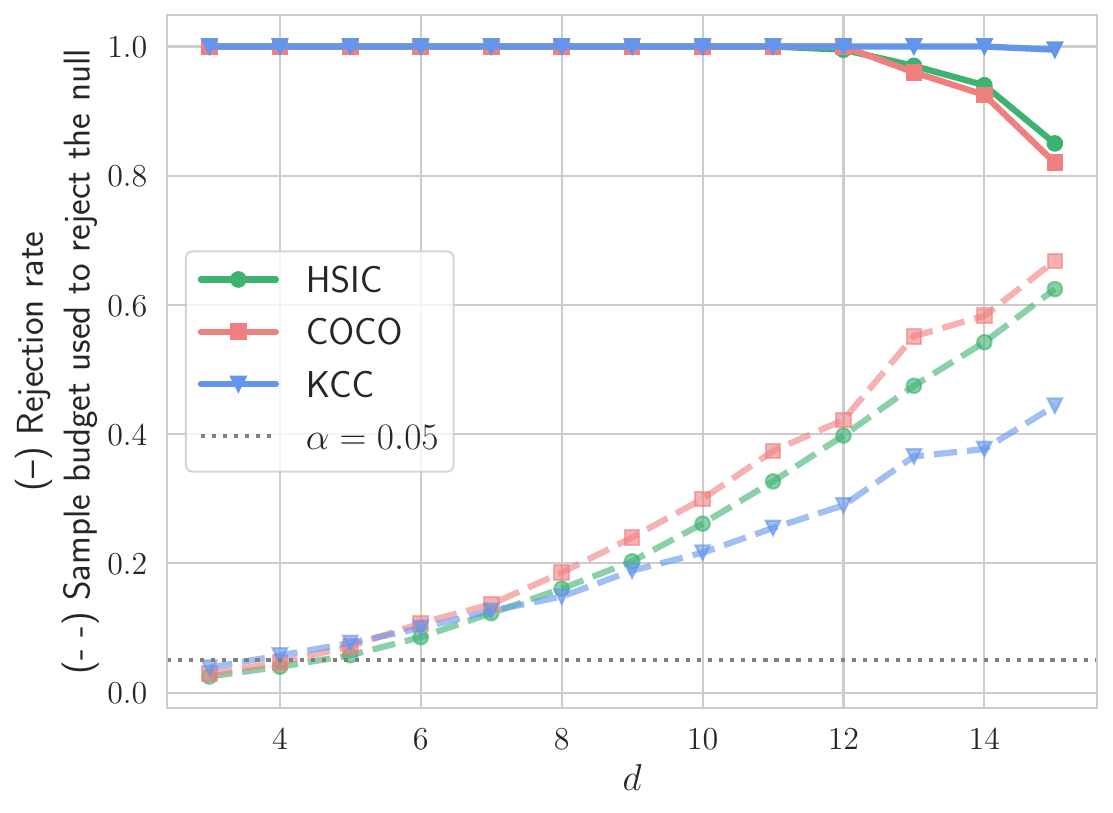}
                \label{subfig:sphere_sim_comp}}
        \caption{Rejection rate and scaled sample size used to reject the null hypothesis for synthetic data. Inspecting the rejection rate for $\beta=0$ (independence holds) confirms that the type I error is controlled. Further, we confirm that SKITs are adaptive to the complexity (smaller $\beta$ and larger $d$ correspond to harder settings).}
        \label{fig:sim_data_power}
\end{center}
\end{figure}

\section{Symmetry-based Betting Strategies}\label{sec:alt_bet}
In this section, we develop a betting strategy that relies on symmetry properties, whose advantage is that it overcomes the kernel boundedness assumption that underlined the SKIT construction. For example, using this betting strategy with a linear kernel: $k(x,y)=l(x,y)=\langle x,y\rangle$ readily implies a valid sequential linear correlation test. Consider
\begin{align}
    W_{t} &= \hat{g}_t(Z_{2t-1})+\hat{g}_t(Z_{2t}) - \hat{g}_t(\tilde{Z}_{2t-1})-\hat{g}_t(\tilde{Z}_{2t}), \label{eq:sym_rv}
\end{align}
where $\hat{g}_t = \hat{\mu}_{XY}-\hat{\mu}_X\otimes\hat{\mu}_Y$ is the \emph{unnormalized} plug-in witness function computed from $\curlybrack{Z_i}_{i\leq 2(t-1)}$. Symmetry-based betting strategies rely on the following key fact.
\begin{proposition}\label{prop:symmetry}
    Under any distribution in $H_0$, $W_{t}$ is symmetric around zero, conditional on $\calF_{t-1}$.
\end{proposition}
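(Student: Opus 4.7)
The plan is to exhibit a measure-preserving involution $\pi$ which, conditional on $\calF_{t-1}$ and under $H_0$, sends $W_t$ to $-W_t$; the claimed symmetry then follows from the conditional identity $W_t \overset{d}{=} W_t\circ\pi = -W_t$ given $\calF_{t-1}$.

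The natural choice is the swap $\pi:(Y_{2t-1},Y_{2t})\mapsto(Y_{2t},Y_{2t-1})$; an entirely analogous argument using the swap of $X_{2t-1}$ and $X_{2t}$ applies when it is the $X$-pair, rather than the $Y$-pair, that is conditionally exchangeable. Because $\hat{g}_t$ is built from $Z_1,\dots,Z_{2(t-1)}$ it is $\calF_{t-1}$-measurable, so after conditioning I may treat it as a fixed Borel function of $(x,y)$. A direct substitution shows that $\pi$ acts on the four arguments appearing in \eqref{eq:sym_rv} as $Z_{2t-1}\leftrightarrow\tilde{Z}_{2t-1}$ and $Z_{2t}\leftrightarrow\tilde{Z}_{2t}$, so the first two summands in the definition of $W_t$ exchange with the last two, yielding $W_t\circ\pi = -W_t$ pointwise.

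It remains to verify that $\pi$ preserves the conditional distribution given $\calF_{t-1}$. Under the i.i.d.\ null \eqref{eq:null_iid}, $P_{XY}=P_X\times P_Y$ together with the i.i.d.\ assumption makes $(Y_{2t-1},Y_{2t})$ a pair of i.i.d.\ draws from $P_Y$ that are jointly independent of $\calF_{t-1}$ and of $(X_{2t-1},X_{2t})$, so invariance under the swap is immediate. In the more general non-i.i.d.\ settings \eqref{eq:null_noniid_drift} or \eqref{eq:null_noniid}, the same invariance follows from Assumption~\ref{assump:noniid_setting}: part (b) supplies conditional exchangeability of the relevant pair, and the no-cross-links property \eqref{eq:cross_links} in part (a) guarantees that further conditioning on the non-swapped component does not destroy it; one then picks $\pi$ to act on whichever pair is exchangeable at round $t$.

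The only delicate point --- and the one place the assumptions beyond $H_0$ are used --- is to maintain conditional exchangeability of the swapped pair after also conditioning on the non-swapped pair and on $\calF_{t-1}$. In the i.i.d.\ case this is automatic; in the non-i.i.d.\ case it is precisely what the no-cross-links condition \eqref{eq:cross_links} is engineered to provide.
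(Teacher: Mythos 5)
Your proof is correct and uses the argument the authors evidently have in mind, although the paper itself never writes out a proof of Proposition~\ref{prop:symmetry}: it is simply invoked inside the proof of Theorem~\ref{thm:validity_sym} in Appendix~\ref{appsubsec:proofs_alt_bet}. Your verification that the swap $Y_{2t-1}\leftrightarrow Y_{2t}$ acts on the four evaluation points as $Z_{2t-1}\leftrightarrow\tilde{Z}_{2t-1}$, $Z_{2t}\leftrightarrow\tilde{Z}_{2t}$ (hence negates $W_t$ pointwise, since $\hat{g}_t$ is $\calF_{t-1}$-measurable) is exactly right, and under the i.i.d.\ null~\eqref{eq:null_iid} the swap-invariance of the conditional law is immediate, so that case is complete. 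The one place where you are asserting rather than proving is the non-i.i.d.\ case: what you need is that the \emph{full joint} conditional law of $(X_{2t-1},X_{2t},Y_{2t-1},Y_{2t})$ given $\calD_{t-1}$ is invariant under the swap, which requires (i) conditional exchangeability of the swapped pair and (ii) conditional independence of the entire $X$-block from the entire $Y$-block given $\calD_{t-1}$. Deriving (ii) from $H_0$ in~\eqref{eq:null_noniid} together with~\eqref{eq:cross_links} takes a short factorization of the conditional density in arrival order (the null gives $p(x_{2t-1},y_{2t-1}\mid\calD_{t-1})=p(x_{2t-1}\mid\calD_{t-1})p(y_{2t-1}\mid\calD_{t-1})$, and the two no-cross-link conditions strip the opposite stream's lagged variable from each subsequent conditional); this is a genuinely stronger conclusion than the mean-embedding factorization the paper proves for Theorem~\ref{thm:non_iid}, which only yields conditional mean zero rather than distributional symmetry. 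Writing out that factorization would close the only gap; as it stands, your proof is at least as complete as the paper's treatment and fully rigorous in the i.i.d.\ setting.
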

By construction, we expect the sign and magnitude of $W_t$ to be positively correlated under the alternative. We consider three payoff functions that aim to exploit this fact.

\begin{enumerate}
    \item \emph{Composition with an odd function.} This approach is based on the idea from sequential symmetry testing~\citep{ramdas2022admissible} that composition with an odd function of a symmetric around zero random variable is mean-zero. Absent knowledge regarding the scale of considered random variables, it is natural to standardize $\curlybrack{W_{i}}_{i\geq 1}$ in a predictable way. We consider
    \begin{equation}\label{eq:tanh_bet}
        f_t^{\mathrm{odd}}(W_{t}) = \tanh{\roundbrack{W_{t}/N_{t-1}}},
    \end{equation}
    where $N_{t}= \quant_{0.9}(\curlybrack{\abs{W_i}}_{i\leq t}) - \quant_{0.1}(\curlybrack{\abs{W_i}}_{i\leq t})$, and $\quant_\alpha(\curlybrack{\abs{W_i}}_{i\leq t})$ is the $\alpha$-quantile of the empirical distribution of the absolute values of $\curlybrack{W_i}_{i\leq t}$. (The choices of 0.1 and 0.9 are heuristic, and can be replaced by other constants without violating the validity of the test.) 
    The composition approach has demonstrated promising empirical performance for the betting-based two-sample tests of~\citet{shekhar2022one_two_sample} and conditional independence tests of~\citet{shaer2022seq_crt}.

\item \emph{Rank-based approach.} Inspired by sequential signed-rank test of symmetry around zero of~\citet{reynolds1975seq_signed_rank}, we consider the following payoff function:
\begin{align}
    f_t^{\mathrm{rank}}(W_{t}) &= \mathrm{sign}(W_{t}) \cdot \frac{\mathrm{rk}(\abs{W_{t}})}{t}, \label{eq:rank_bet}
\end{align}
where $\mathrm{rk}(\abs{W_{t}}) = \sum_{i=1}^{t} \indicator{|W_{i}|\leq |W_{t}|}$.
\item \emph{Predictive approach.} At round $t$, we fit a probabilistic predictor $p_t:\Real_+ \to [0,1]$, e.g., logistic regression, using $\curlybrack{\abs{W_i},\sign{W_i}}_{i\leq t-1}$ as feature-label pairs. We consider the following payoff function:
\begin{equation}\label{eq:pred_bet}
    f_t^{\mathrm{pred}}(W_{t})=
    (2p_t(\abs{W_{t}})-1)_+\cdot \roundbrack{1- 2\ell_t(W_{t})},
\end{equation}
where $\roundbrack{\cdot}_+=\max\curlybrack{\cdot,0}$ and $\ell_t(\abs{W_{t}},\sign{W_t})$ is the misclassification loss of the predictor $p_t$. 
\end{enumerate}

Next, we show that symmetry-based SKITs are valid; the proof is deferred to Appendix~\ref{appsubsec:proofs_alt_bet}.

\begin{algorithm}[!htb]
\caption{SKIT with symmetry-based betting}\label{alg:skit_symmetry}
\begin{algorithmic}
\State \textbf{Input:} significance level $\alpha\in (0,1)$, data stream $(Z_i)_{i\geq 1}$, where $Z_i=(X_i,Y_i)\sim P_{XY}$, $\lambda_1^{\mathrm{ONS}}=0$. 
\For {$t=1,2,\dots$}
\State After observing $Z_{2t-1}, Z_{2t}$, compute $W_{t}$ as in~\eqref{eq:sym_rv} and $f_t^{\mathrm{odd}}(W_t)$ as in~\eqref{eq:tanh_bet};
\State Update the wealth process $\calK_t$ as in~\eqref{eq:wealth_update};
\If{$\calK_t\geq 1/\alpha$}
\State Reject $H_0$ and stop;
\Else 
\State Compute $\lambda_{t+1}^{\mathrm{ONS}}$ (Algorithm~\ref{alg:ons});
\EndIf
\EndFor
\end{algorithmic}
\end{algorithm}

\begin{restatable}{theorem}{thmvaliditysym}\label{thm:validity_sym}
Under $H_0$ in~\eqref{eq:null_iid} or~\eqref{eq:null_noniid}, the symmetry-based SKIT (Algorithm~\ref{alg:skit_symmetry}) satisfies: $\Prob_{H_0}\roundbrack{\tau<\infty}\leq \alpha$.
\end{restatable}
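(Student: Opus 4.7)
The plan is to verify the standard testing-by-betting template: show that the wealth process $(\calK_t)_{t\geq 0}$ is a nonnegative martingale with $\calK_0 = 1$ with respect to $\calF_t = \sigma(Z_1,\dots,Z_{2t})$, and then invoke Ville's inequality to conclude $\Prob_{H_0}(\sup_t \calK_t \geq 1/\alpha)\leq \alpha$, which is exactly $\Prob_{H_0}(\tau<\infty)\leq \alpha$.

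First I will check nonnegativity. The ONS scheme (Algorithm~\ref{alg:ons}) returns $\lambda_t^{\mathrm{ONS}}\in[0,1/2]$, and each of the three payoff options produces $f_t(W_t)\in[-1,1]$ by inspection: $|\tanh(\cdot)|\leq 1$; $\mathrm{rk}(|W_t|)/t\in[0,1]$; and $(2p_t(|W_t|)-1)_+\in[0,1]$ together with $1-2\ell_t(W_t)\in\{-1,1\}$. Thus $1+\lambda_t f_t(W_t)\in[1/2,3/2]$ and $\calK_t\geq 0$ almost surely.

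The substantive step is showing $\Exp{}{f_t(W_t)\mid\calF_{t-1}}=0$ under $H_0$. The unifying tool is Proposition~\ref{prop:symmetry}, which gives that the conditional law of $W_t$ given $\calF_{t-1}$ is symmetric about zero; equivalently, conditionally on $\calF_{t-1}$ and $|W_t|$, the sign $\mathrm{sign}(W_t)$ is uniform on $\{-1,+1\}$. For the composition payoff, $N_{t-1}$ is $\calF_{t-1}$-measurable and $\tanh$ is odd, so $f_t^{\mathrm{odd}}(W_t)=\tanh(W_t/N_{t-1})$ is an odd function of $W_t$ with bounded coefficients, hence mean zero by symmetry. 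For the rank payoff, I will condition on $\calG_{t-1}:=\calF_{t-1}\vee\sigma(|W_t|)$: the rank $\mathrm{rk}(|W_t|)/t$ is $\calG_{t-1}$-measurable, so
\begin{equation*}
\Exp{}{f_t^{\mathrm{rank}}(W_t)\mid\calG_{t-1}}=\tfrac{\mathrm{rk}(|W_t|)}{t}\,\Exp{}{\mathrm{sign}(W_t)\mid\calG_{t-1}}=0,
\end{equation*}
and towering over $\calF_{t-1}$ preserves this. For the predictive payoff, the trained predictor $p_t$ is $\calF_{t-1}$-measurable, hence both $(2p_t(|W_t|)-1)_+$ and the predicted label are $\calG_{t-1}$-measurable; because $1-2\ell_t(W_t)=+1$ when the predicted label matches $\mathrm{sign}(W_t)$ and $-1$ otherwise, the conditional mean of $1-2\ell_t(W_t)$ given $\calG_{t-1}$ equals zero by the uniform-sign property, and the nonnegative multiplier can be pulled out.

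Combining the two facts, $\Exp{}{\calK_t\mid\calF_{t-1}}=\calK_{t-1}(1+\lambda_t\Exp{}{f_t(W_t)\mid\calF_{t-1}})=\calK_{t-1}$ since $\lambda_t$ is $\calF_{t-1}$-measurable; hence $(\calK_t)$ is a nonnegative martingale starting at $1$, and Ville's inequality yields the claim. The only non-routine part is the mean-zero verification for the rank and predictive payoffs, which is handled by the layered conditioning on $|W_t|$; once the symmetry of $W_t\mid\calF_{t-1}$ is in hand (Proposition~\ref{prop:symmetry}), and noting that this symmetry argument is unaffected by whether $H_0$ in~\eqref{eq:null_iid} or the non-i.i.d.\ version~\eqref{eq:null_noniid} (under Assumption~\ref{assump:noniid_setting}) is in force, the proof closes uniformly across all three payoff choices.
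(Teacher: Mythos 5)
Your proof is correct and follows essentially the same route as the paper's: both reduce the claim to showing that each symmetry-based payoff is conditionally mean-zero via Proposition~\ref{prop:symmetry} (oddness for the $\tanh$ payoff; conditional independence of $\mathrm{sign}(W_t)$ and $\abs{W_t}$ for the rank and predictive payoffs), then conclude that the wealth is a nonnegative martingale and apply Ville's inequality. Your explicit layered conditioning on $\calF_{t-1}\vee\sigma(\abs{W_t})$ is just a cleaner writing of the same sign--magnitude argument the paper uses, and your boundedness/nonnegativity checks match the paper's.
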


\paragraph{Synthetic Experiments.} To compare the symmetry-based payoffs, we consider the Gaussian model along with aGRAPA betting fractions. For visualization purposes, we complete monitoring after observing 2000 points from the joint distribution. In Figure~\ref{subfig:sym_large_scale}, we observe that the resulting SKITs demonstrate similar performance. In Figure~\ref{fig:linear_kernel}, we demonstrate that SKIT with a linear kernel has high power under the Gaussian model, whereas its false alarm rate does not exceed $\alpha$ under the spherical model. Additional synthetic experiments can be found in Appendix~\ref{appsubsec:sym_bet}.

\begin{figure}[!htb]
\begin{center}
        \subfigure[]{              
                \includegraphics[width=0.45\linewidth]{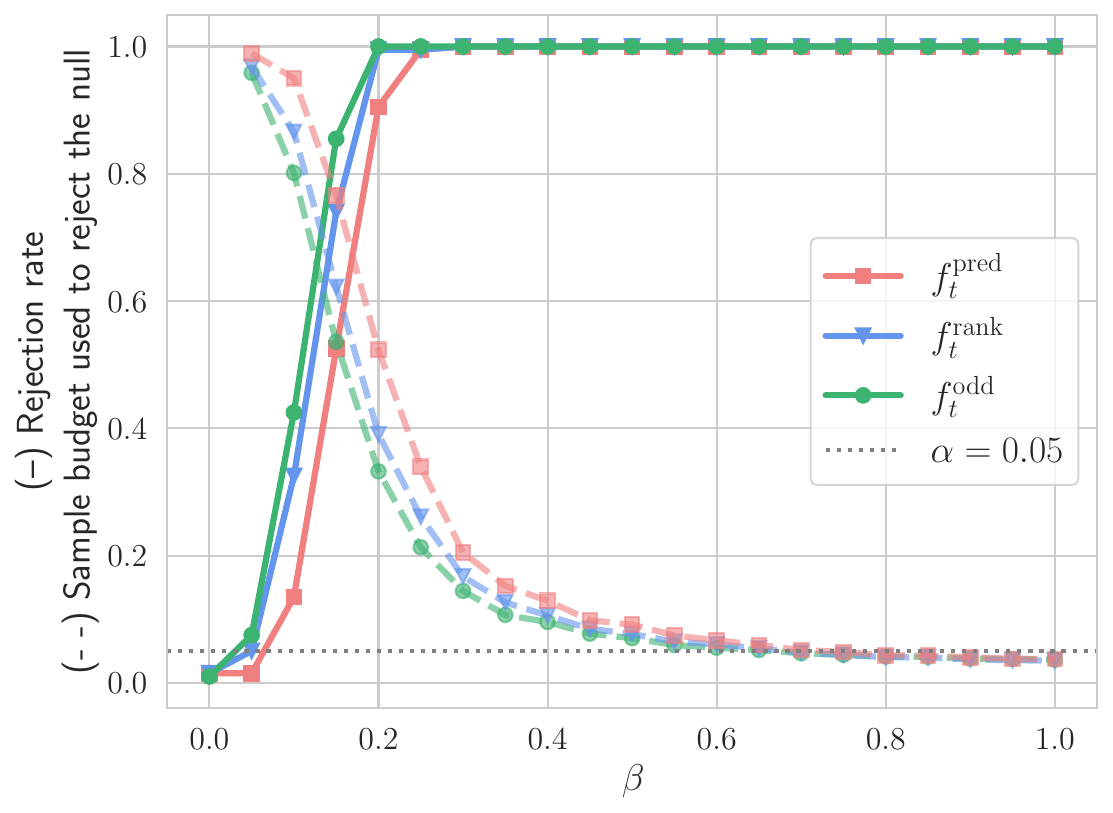}
                \label{subfig:sym_large_scale}}
        \subfigure[]{
                \includegraphics[width=0.45\linewidth]{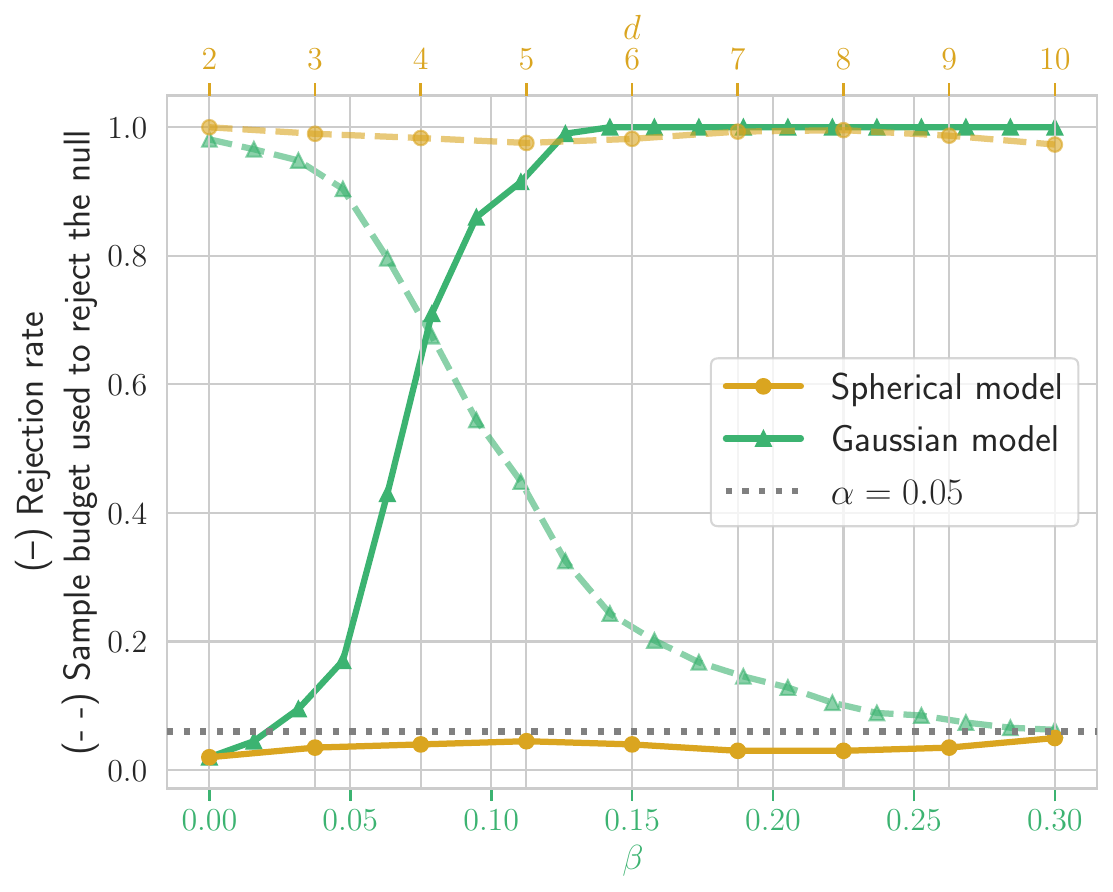}
                \label{fig:linear_kernel}}
        \caption{(a) SKITs with symmetry-based payoffs have high power under the Gaussian model. (b) SKIT with linear kernel has high power under the Gaussian model ($X$ and $Y$ are linearly correlated for $\beta\neq 0$), and its false alarm rate is controlled under the spherical model ($X$ and $Y$ are linearly uncorrelated but dependent).}
        \label{fig:ons_grapa_power}
\end{center}
\end{figure}

\paragraph{Real Data Experiments.} We analyze average daily temperatures\footnote{data source: https://www.wunderground.com} in four European cities: London, Amsterdam, Zurich, and Nice, from January 1, 2017, to May 31, 2022. The processes underlying temperature formation are complex and act both on macro (e.g., solar phase) and micro (e.g., local winds) levels. While average daily temperatures in selected cities share similar cyclic patterns, one may still expect the temperature fluctuations occurring in nearby locations to be dependent. We use SKIT for testing instantaneous independence (as per~\eqref{eq:ind_test_generalization}) between fluctuations (assuming that the conditions that underlie our test hold).  

We run SKIT with the rank-based payoff and ONS betting fractions for each pair of cities using $6/\alpha$ as a rejection threshold (accounting for multiple testing). We select the kernel hyperparameters via the median heuristic using recordings for the first 20 days. In Figure~\ref{subfig:europe_map}, we illustrate 
that SKIT supports our conjecture that temperature fluctuations are dependent in nearby locations. We also run this experiment for four cities in South Africa (see Appendix~\ref{appsubsec:real_data}). 

In addition, we analyze the performance of SKIT on MNIST data; the details are deferred to Appendix~\ref{appubsec:mnist_exp}.

\begin{figure}[!htb]
\begin{center}              
    \includegraphics[width=0.5\linewidth]{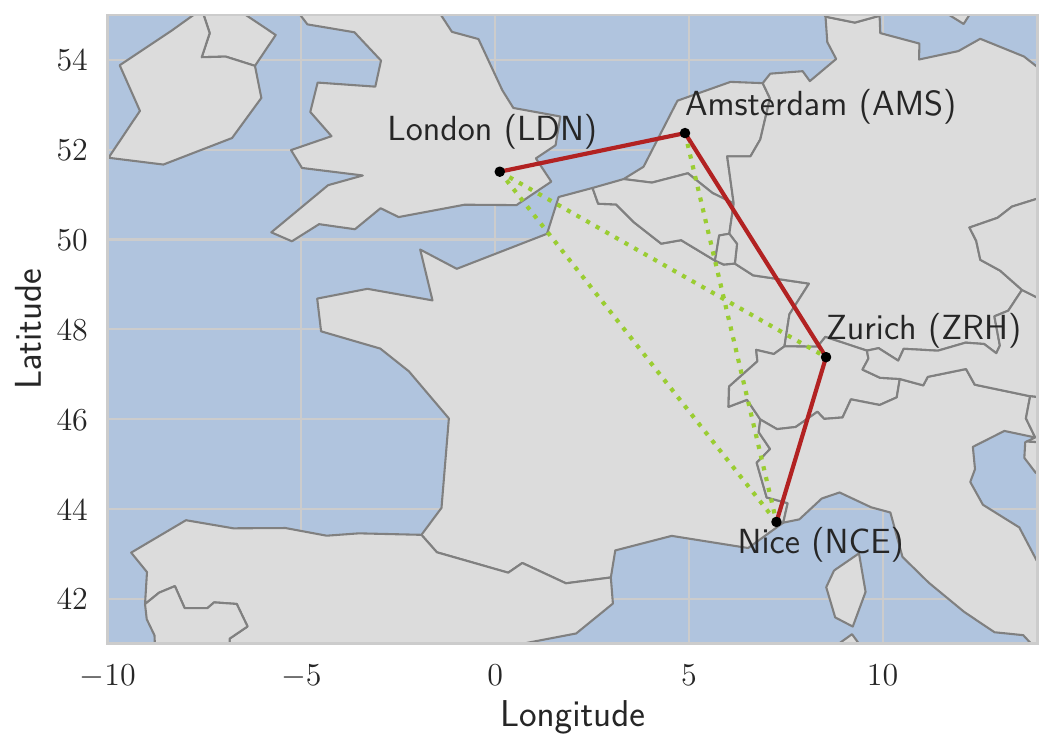}
    \caption{Solid lines connect cities for which the null is rejected. SKIT supports the conjecture regarding dependent temperature fluctuations in nearby locations. }
    \label{subfig:europe_map}
\end{center}
\end{figure}

\section{Conclusion}
A key advantage of sequential tests is that they can be continuously monitored, allowing an analyst to adaptively decide whether to stop and reject the null hypothesis or to continue collecting data, without inflating the false positive rate. In this paper, we design consistent sequential kernel independence tests (SKITs) following the principle of testing by betting. 
SKITs are also valid beyond the i.i.d.\ setting, allowing the data distribution to drift over time. Experiments on synthetic and real data confirm the power of SKITs.

\paragraph{Acknowledgements.} The authors thank Ian Waudby-Smith, Tudor Manole, and the anonymous reviewers for constructive feedback. The authors also acknowledge Lucas Janson and Will Hartog for thoughtful questions and comments at the International Seminar for Selective Inference.

\bibliographystyle{plainnat}
\bibliography{refs}

\newpage
\appendix
\onecolumn
\noindent\rule{\textwidth}{1pt}
\begin{center}
\vspace{7pt}
{\Large  Appendix}
\end{center}
\noindent\rule{\textwidth}{1pt}

\section{Independence Testing for Streaming Data}

In Section~\ref{appsubsec:batch_failure}, we describe a permutation-based approach for conducting batch HSIC and show that continuous monitoring of batch HSIC (without corrections for multiple testing) leads to an inflated false alarm rate. In Section~\ref{appsubsec:two_sample_reduction}, we introduce the sequential two-sample testing (2ST) problem and describe a reduction of sequential independence testing to sequential 2ST. In Section~\ref{appsubsec:batch_comp}, we compare our test to HSIC in the batch setting.

\subsection{Failure of Batch HSIC under Continuous Monitoring}\label{appsubsec:batch_failure}

To conduct independence testing using batch HSIC, we use permutation p-value (with $M=1000$ random permutations):
   $P = \tfrac{1}{M+1}(1+\sum_{m=1}^M\indicator{T_m\geq T})$,
where $T_m$ is the value of HS-norm computed from the $m$-th permutation and $T$ is HS-norm value on the original data. In other words, suppose that we are given a sample $Z_1,\dots, Z_t$, where $Z_i=(X_i,Y_i)$. Let $\calS_t$ denote the set of all permutations of $t$ indices and let $\sigma\sim \mathrm{Unif}(\calS_t)$ be a random permutation of indices. Then:
\begin{equation*}
    \begin{aligned}
        (X_1,Y_1), \dots, (X_t,Y_t) &\Longrightarrow T = \widehat{\HSIC}_b\roundbrack{(X_1,Y_1), \dots, (X_t,Y_t)}\\
        (X_1,Y_{\sigma_m(1)}), \dots, (X_t,Y_{\sigma_m(t)}) , &\Longrightarrow T_m = \widehat{\HSIC}_b\roundbrack{(X_1,Y_{\sigma_m(1)}), \dots, (X_t,Y_{\sigma_m(t)})}, \quad m\in \curlybrack{1,\dots,M},
    \end{aligned}
\end{equation*}
where we use a biased estimator of HSIC:
\begin{equation*}
    \widehat{\HSIC}_b = \frac{1}{t^2}\sum_{i,j} K_{ij}L_{ij} + \frac{1}{t^4}\sum_{i,j,q,r} K_{ij}L_{qr} - \frac{2}{t^3} \sum_{i,j,q} K_{ij}L_{iq} = \frac{1}{t^2}\tr{KHLH}. 
\end{equation*}
For brevity, we use $K_{ij}=k(X_i,X_j)$, $L_{ij}=l(Y_i,Y_j)$ for $i,j\in\curlybrack{1,\dots,t}$. Next, we study batch HSIC under (a) \emph{fixed-time} and (b) \emph{continuous} monitoring.  We consider a simple case when $X$ and $Y$ are independent standard Gaussian random variables. We consider (re)conducting a test at 12 different sample sizes: $t\in \curlybrack{50,100,\dots,600}$:
\begin{enumerate}[label=(\alph*)]
    \item Under fixed-time monitoring, for each value of $t$, we sample a sequence $Z_1,\dots,Z_t$ (100 times for each $t$) and conduct batch-HSIC test. The goal is to confirm that batch-HSIC controls type I error by tracking the standard miscoverage rate.
    \item Under continuous monitoring, we sample new datapoints and re-conduct the test. We illustrate inflated type I error by tracking the \emph{cumulative miscoverage rate}, that is, the fraction of times, the test falsely rejects the independence null. 
\end{enumerate}
The results are presented in Figure~\ref{fig:inflated_type_one_error}. For Bonferroni correction, we decompose the error budget as:
    $\alpha = \sum_{i=1}^\infty \frac{\alpha}{i(i+1)}$,
that is, for $t$-th test we use threshold $\alpha_t = \alpha/(t(t+1))$ for testing.

\begin{figure}[!htb]
    \centering
    \includegraphics[width=0.425\linewidth]{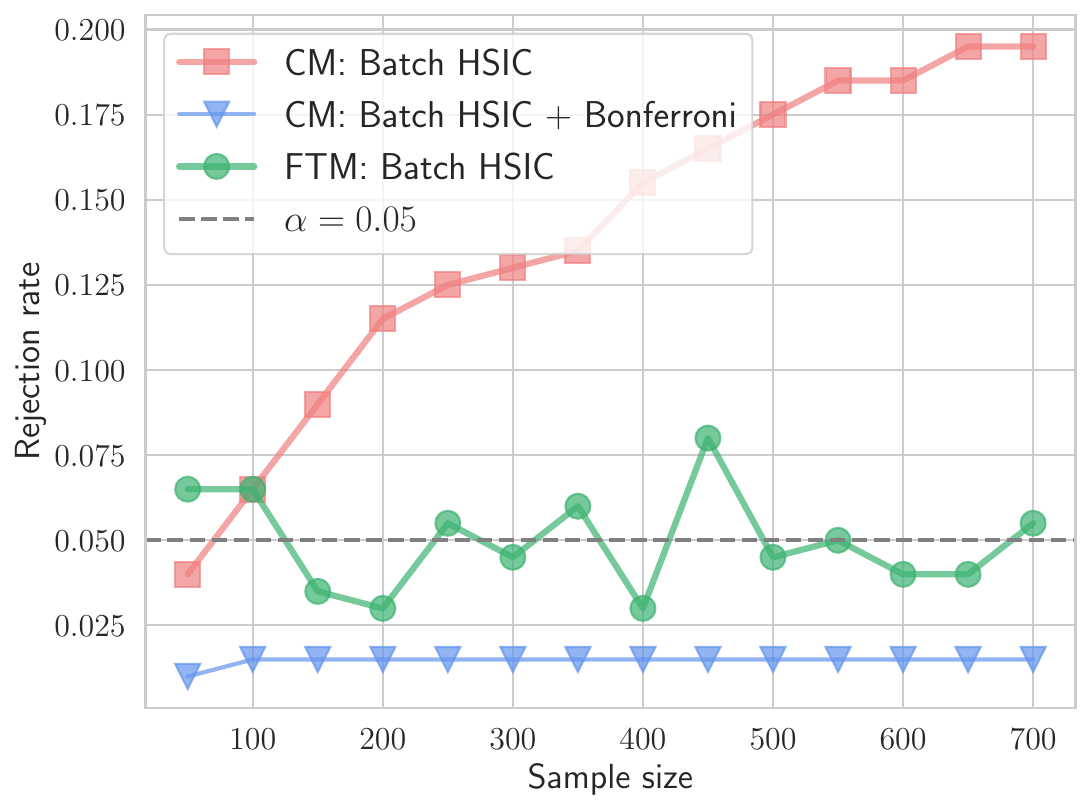}
    \caption{Inflated false alarm rate of batch HSIC under continuous monitoring (CM, red line with squares) for the case when $X$ and $Y$ are independent standard Gaussian random variables. Bonferroni correction (CM, blue line with triangles) restores type I error control. As expected, type I error is controlled at a specified level under fixed-time monitoring (FTM, green line with circles). }
    \label{fig:inflated_type_one_error}
\end{figure}

\subsection{Sequential Independence Testing via Sequential Two-Sample Testing}\label{appsubsec:two_sample_reduction}

First, we introduce the sequential two-sample testing problem. Suppose that we observe a stream of data: $(\tilde{X}_1,\tilde{Y}_1),(\tilde{X}_2,\tilde{Y}_2),\dots$, where $(\tilde{X}_t,\tilde{Y}_t)\simiid P\times Q$. Two-sample testing refers to testing:
\begin{equation*}
\begin{aligned}
    H_0: \ (\tilde{X}_t,\tilde{Y}_t)\simiid P\times Q \ \text{and} \  P=Q, \quad  \text{vs.} \quad H_1: \ (\tilde{X}_t,\tilde{Y}_t)\simiid P\times Q \ \text{and} \  P\neq Q.
\end{aligned}
\end{equation*}

In Figure~\ref{fig:intro_fig}, we compared our test against the approach based on the reduction of independence testing to two-sample testing. We used the sequential two-sample kernel MMD test of~\citet{shekhar2022one_two_sample} with the product kernel $\tilde{K}$ (that is, a product of Gaussian kernels) and the same set of hyperparameters as for our test for a fair comparison. To reduce sequential independence testing to any off-the-shelf sequential two-sample testing procedure, we convert the original sequence of points from $P_{XY}$ to a sequence of i.i.d.\ $(\tilde{X}_t,\tilde{Y}_t)$-pairs, where $\tilde{X}_t\sim P_{XY}$ and $\tilde{Y}_t\sim P_{X}\times P_Y$ respectively; see Figure~\ref{subfig:two_sample_iid}. At $t$-th round, we randomly choose one point as $\tilde{X}_t$, e.g., $(X_1,Y_1)$ for the first triple. Next, we obtain $\tilde{Y}_t$ by randomly matching $X$ and $Y$ from two other pairs, e.g., $(X_2,Y_3)$ or $(X_3,Y_2)$ for the first triple. In fact, the betting-based sequential two-sample test of~\citep{shekhar2022one_two_sample} allows removing the effect of randomization (i.e., throwing away one observation in each triple), by averaging payoffs evaluated on $(\tilde{X}_t,\tilde{Y}_t^{(1)})$ and $(\tilde{X}_t,\tilde{Y}_t^{(2)})$. Other approaches --- which do not require throwing data away --- are also available (Figures~\ref{subfig:two_sample_triple}) but those only yield an i.i.d.\ sequence only under the null. 

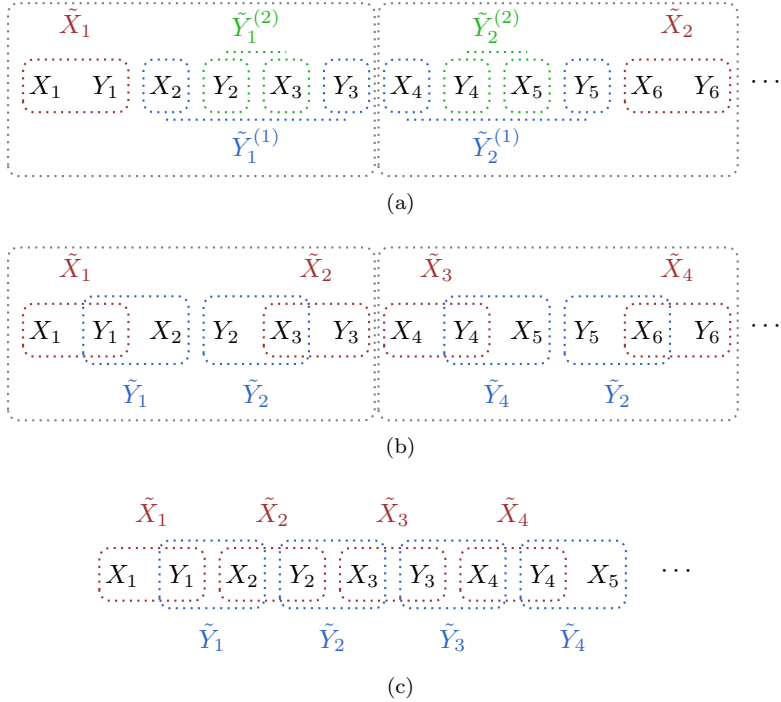
\begin{figure}[!htb]
\begin{center}
    \subfigure[]{\begin{tikzpicture}
\filldraw[black] (0,0) circle (0pt) node[anchor=north]{$X_1$};
\filldraw[black] (0.8,0) circle (0pt) node[anchor=north]{$Y_1$};
\filldraw[black] (1.6,0) circle (0pt) node[anchor=north]{$X_2$};
\filldraw[black] (2.4,0) circle (0pt) node[anchor=north]{$Y_2$};
\filldraw[black] (3.2,0) circle (0pt) node[anchor=north]{$X_3$};
\filldraw[black] (4,0) circle (0pt) node[anchor=north]{$Y_3$};
\filldraw[black] (4.8,0) circle (0pt) node[anchor=north]{$X_4$};
\filldraw[black] (5.6,0) circle (0pt) node[anchor=north]{$Y_4$};
\filldraw[black] (6.4,0) circle (0pt) node[anchor=north]{$X_5$};
\filldraw[black] (7.2,0) circle (0pt) node[anchor=north]{$Y_5$};
\filldraw[black] (8,0) circle (0pt) node[anchor=north]{$X_6$};
\filldraw[black] (8.8,0) circle (0pt) node[anchor=north]{$Y_6$};
\filldraw[black] (9.6,0) circle (0pt) node[anchor=north]{$\cdots$};
\filldraw[black] (0.4,0.9) circle (0pt) node[anchor=north]{$\textcolor{myred}{\tilde{X}_1}$};
\filldraw[black] (8.4,0.9) circle (0pt) node[anchor=north]{$\textcolor{myred}{\tilde{X}_2}$};
\filldraw[black] (2.8,0.9) circle (0pt) node[anchor=north]{$\textcolor{mygreen}{\tilde{Y}_1^{(2)}}$};
\filldraw[black] (6,0.9) circle (0pt) node[anchor=north]{$\textcolor{mygreen}{\tilde{Y}_2^{(2)}}$};
\filldraw[black] (2.8,-1.4) circle (0pt) node[anchor=south]{$\textcolor{myblue}{\tilde{Y}_1^{(1)}}$};
\filldraw[black] (6,-1.4) circle (0pt) node[anchor=south]{$\textcolor{myblue}{\tilde{Y}_2^{(1)}}$};
\draw[myred, dotted, thick, rounded corners] (-0.3,-0.6) rectangle (1.1,0.1);
\draw[myred, dotted, thick, rounded corners] (7.7,-0.6) rectangle (9.1,0.1);
\draw[myblue, dotted, thick, rounded corners] (1.3,-0.6) rectangle (1.9,0.1);
\draw[myblue, dotted, thick, rounded corners] (3.7,-0.6) rectangle (4.3,0.1);
\draw[myblue, dotted, thick, rounded corners] (4.5,-0.6) rectangle (5.1,0.1);
\draw[myblue, dotted, thick, rounded corners] (6.9,-0.6) rectangle (7.5,0.1);
\draw[mygreen, dotted, thick, rounded corners] (2.1,-0.6) rectangle (2.7,0.1);
\draw[mygreen, dotted, thick, rounded corners] (2.9,-0.6) rectangle (3.5,0.1);
\draw[mygreen, dotted, thick, rounded corners] (5.3,-0.6) rectangle (5.9,0.1);
\draw[mygreen, dotted, thick, rounded corners] (6.1,-0.6) rectangle (6.7,0.1);
\draw[gray, dotted, thick, rounded corners] (-0.5,-1.45) rectangle (4.375,0.85);
\draw[gray, dotted, thick, rounded corners] (4.425,-1.45) rectangle (9.2,0.85);
\draw[mygreen, dotted, thick] (2.4,0.2) -- (3.2,0.2);
\draw[mygreen, dotted, thick] (5.6,0.2) -- (6.4,0.2);
\draw[myblue, dotted, thick] (1.6,-0.7) -- (4,-0.7);
\draw[myblue, dotted, thick] (4.8,-0.7) -- (7.2,-0.7);
\end{tikzpicture}
\label{subfig:two_sample_iid}}
\subfigure[]{\begin{tikzpicture}
\filldraw[black] (0,0) circle (0pt) node[anchor=north]{$X_1$};
\filldraw[black] (0.8,0) circle (0pt) node[anchor=north]{$Y_1$};
\filldraw[black] (1.6,0) circle (0pt) node[anchor=north]{$X_2$};
\filldraw[black] (2.4,0) circle (0pt) node[anchor=north]{$Y_2$};
\filldraw[black] (3.2,0) circle (0pt) node[anchor=north]{$X_3$};
\filldraw[black] (4,0) circle (0pt) node[anchor=north]{$Y_3$};
\filldraw[black] (4.8,0) circle (0pt) node[anchor=north]{$X_4$};
\filldraw[black] (5.6,0) circle (0pt) node[anchor=north]{$Y_4$};
\filldraw[black] (6.4,0) circle (0pt) node[anchor=north]{$X_5$};
\filldraw[black] (7.2,0) circle (0pt) node[anchor=north]{$Y_5$};
\filldraw[black] (8,0) circle (0pt) node[anchor=north]{$X_6$};
\filldraw[black] (8.8,0) circle (0pt) node[anchor=north]{$Y_6$};
\filldraw[black] (9.6,0) circle (0pt) node[anchor=north]{$\cdots$};
\filldraw[black] (0.4,0.9) circle (0pt) node[anchor=north]{$\textcolor{myred}{\tilde{X}_1}$};
\filldraw[black] (3.6,0.9) circle (0pt) node[anchor=north]{$\textcolor{myred}{\tilde{X}_2}$};
\filldraw[black] (5.2,0.9) circle (0pt) node[anchor=north]{$\textcolor{myred}{\tilde{X}_3}$};
\filldraw[black] (8.4,0.9) circle (0pt) node[anchor=north]{$\textcolor{myred}{\tilde{X}_4}$};
\filldraw[black] (1.2,-1.4) circle (0pt) node[anchor=south]{$\textcolor{myblue}{\tilde{Y}_1}$};
\filldraw[black] (2.8,-1.4) circle (0pt) node[anchor=south]{$\textcolor{myblue}{\tilde{Y}_2}$};
\filldraw[black] (6,-1.4) circle (0pt) node[anchor=south]{$\textcolor{myblue}{\tilde{Y}_4}$};
\filldraw[black] (7.6,-1.4) circle (0pt) node[anchor=south]{$\textcolor{myblue}{\tilde{Y}_2}$};
\draw[myred, dotted, thick, rounded corners] (-0.3,-0.6) rectangle (1.1,0.1);
\draw[myred, dotted, thick, rounded corners] (2.9,-0.6) rectangle (4.3,0.1);
\draw[myred, dotted, thick, rounded corners] (4.5,-0.6) rectangle (5.9,0.1);
\draw[myred, dotted, thick, rounded corners] (7.7,-0.6) rectangle (9.1,0.1);
\draw[myblue, dotted, thick, rounded corners] (0.5,-0.7) rectangle (1.9,0.2);
\draw[myblue, dotted, thick, rounded corners] (2.1,-0.7) rectangle (3.5,0.2);
\draw[myblue, dotted, thick, rounded corners] (5.3,-0.7) rectangle (6.7,0.2);
\draw[myblue, dotted, thick, rounded corners] (6.9,-0.7) rectangle (8.3,0.2);
\draw[gray, dotted, thick, rounded corners] (-0.5,-1.45) rectangle (4.375,0.85);
\draw[gray, dotted, thick, rounded corners] (4.425,-1.45) rectangle (9.2,0.85);
\end{tikzpicture}
\label{subfig:two_sample_triple}}
\subfigure[]{\begin{tikzpicture}
\filldraw[black] (0,0) circle (0pt) node[anchor=north]{$X_1$};
\filldraw[black] (0.8,0) circle (0pt) node[anchor=north]{$Y_1$};
\filldraw[black] (1.6,0) circle (0pt) node[anchor=north]{$X_2$};
\filldraw[black] (2.4,0) circle (0pt) node[anchor=north]{$Y_2$};
\filldraw[black] (3.2,0) circle (0pt) node[anchor=north]{$X_3$};
\filldraw[black] (4,0) circle (0pt) node[anchor=north]{$Y_3$};
\filldraw[black] (4.8,0) circle (0pt) node[anchor=north]{$X_4$};
\filldraw[black] (5.6,0) circle (0pt) node[anchor=north]{$Y_4$};
\filldraw[black] (6.4,0) circle (0pt) node[anchor=north]{$X_5$};
\filldraw[black] (7.4,0) circle (0pt) node[anchor=north]{$\cdots$};
\filldraw[black] (0.4,0.9) circle (0pt) node[anchor=north]{$\textcolor{myred}{\tilde{X}_1}$};
\filldraw[black] (2,0.9) circle (0pt) node[anchor=north]{$\textcolor{myred}{\tilde{X}_2}$};
\filldraw[black] (3.6,0.9) circle (0pt) node[anchor=north]{$\textcolor{myred}{\tilde{X}_3}$};
\filldraw[black] (5.2,0.9) circle (0pt) node[anchor=north]{$\textcolor{myred}{\tilde{X}_4}$};
\filldraw[black] (1.2,-1.4) circle (0pt) node[anchor=south]{$\textcolor{myblue}{\tilde{Y}_1}$};
\filldraw[black] (2.8,-1.4) circle (0pt) node[anchor=south]{$\textcolor{myblue}{\tilde{Y}_2}$};
\filldraw[black] (4.4,-1.4) circle (0pt) node[anchor=south]{$\textcolor{myblue}{\tilde{Y}_3}$};
\filldraw[black] (6,-1.4) circle (0pt) node[anchor=south]{$\textcolor{myblue}{\tilde{Y}_4}$};
\draw[myred, dotted, thick, rounded corners] (-0.3,-0.6) rectangle (1.1,0.1);
\draw[myred, dotted, thick, rounded corners] (1.3,-0.6) rectangle (2.7,0.1);
\draw[myred, dotted, thick, rounded corners] (2.9,-0.6) rectangle (4.3,0.1);
\draw[myred, dotted, thick, rounded corners] (4.5,-0.6) rectangle (5.9,0.1);
\draw[myblue, dotted, thick, rounded corners] (0.5,-0.7) rectangle (1.9,0.2);
\draw[myblue, dotted, thick, rounded corners] (2.1,-0.7) rectangle (3.5,0.2);
\draw[myblue, dotted, thick, rounded corners] (3.7,-0.7) rectangle (5.1,0.2);
\draw[myblue, dotted, thick, rounded corners] (5.3,-0.7) rectangle (6.7,0.2);
\end{tikzpicture}
\label{subfig:two_sample_fail}}
    \caption{Reducing sequential independence testing to sequential two-sample testing. Processing as per (a) results in a sequence of i.i.d.\ observations both under the null and under the alternative (making the results about power valid). Processing data as per (b) gives an i.i.d.\ sequence only under the null. Reduction (b) is very similar to reduction (c). However, the latter makes $\tilde{X}_i$, $i\geq 2$, dependent on the past, and thus can not be used directly for considered sequential two-sample tests.}
    \label{subfig:two_sample_reductions}
\end{center}
\end{figure}

\paragraph{Additional Details of the Simulation Presented in Figure~\ref{fig:intro_fig}.}

We consider the Gaussian model: $Y_t=X_t\beta+\varepsilon_t$, where $X_t,\varepsilon_t\sim\mathcal{N}(0,1)$, $t\geq 1$. We consider 10 values of $\beta$: $\beta\in\curlybrack{0,0.04,\dots,0.36}$, and for each $\beta$ we repeat the simulation 100 times. In this simulation, we compare three approaches for testing independence (valid under continuous monitoring):
\begin{enumerate}
    \item \textbf{HSIC-based SKIT} proposed in this work (Algorithm~\ref{alg:ind_test_framework});
    \item \textbf{Batch HSIC} adapted to continuous monitoring via Bonferroni correction. We allow monitoring after processing every $n$, $n\in\curlybrack{10,100}$, new points from $P_{XY}$, that is, the permutation p-value (computed over 2500 randomly sampled permutations) is compared against rejection thresholds: $\alpha_n = \alpha/(n(n+1))$, $n=1,2,\dots$ 
    \item \textbf{Sequential independence testing via reduction sequential 2ST} as described above.  
\end{enumerate}

We use RBF kernel with the same set of kernel hyperparameters for all testing procedures: $\lambda_X = 1/4$, $\lambda_Y=1/(4(1+\beta^2)).$

\subsection{Comparison in the Batch Setting}\label{appsubsec:batch_comp}

Sequential tests are complementary to batch tests and are not intended to replace them, and hence comparing the two on equal footing is hard. To highlight this, consider two simple scenarios. If we have 2000 data points, and HSIC fails to reject, there is not much we can do to rescue the situation. But if SKIT fails to reject, an analyst can collect more data and continue testing, retaining type I error control. In contrast, with 2 million points, HSIC will take forever to run, especially due to permutations. But if the alternative is true and the signal is strong, then SKIT may reject within 200 samples and stop. In short, the ability of SKIT to continue collecting and analyzing data is helpful for hard problems, and the ability of SKIT to stop early is helpful for easy problems. There is no easy sense in which one can compare them apples to apples and there is no sense in which batch HSIC uniformly dominates SKIT or vice versa. In a real setting, if an analyst has a strong hunch that the null is false and has the ability to collect data and run HSIC, the question is how much data should be collected? The answer depends on the underlying data distribution, which is of course unknown. With SKIT, data can be collected and analyzed sequentially. Theorem~\ref{thm:hsic_skit} implies that SKIT will stop early on easy problems and later on harder problems, all without knowing anything about the problem in advance. If however, one has a fixed batch of data, no chance to collect more, and no computational constraints, then running HSIC makes more sense. 
    
To illustrate that batch HSIC can be superior to SKIT, we compare tests on a dataset with a prespecified sample size (500 observations from the Gaussian model) and track the empirical rejection rates of two tests. In Figure~\ref{fig:batch_comparison}, we show that HSIC actually has higher power than SKIT. However, for $\beta=0.1$ (where all tests have low power), Figure~\ref{subfig:gaus_sim_comp} shows that collecting just a bit more data (which is allowed) is needed for SKIT to reach perfect power. We also added a third method (D-SKIT) which removes the effect of the ordering of random variables under the assumptions that $\{(X_i,Y_i)\}_{i=1}^n$ are independent draws from $P_{XY}$. Let $\{\sigma_b\}_{b=1}^B$ define $B$ random permutations of $n$ indices, and let $\calK_n^b$ denote the wealth after betting on a sequence ordered according to $\sigma_b$. For each $b$, $\calK_n^b$ has expectation at most one, and hence (by linearity of expectation and Markov's inequality) $\indicator{\frac{1}{B}\sum_{i=1}^B \calK_n^b\geq 1/\alpha}$ is a valid level-$\alpha$ batch test. This test is a bit more stable: it improves SKIT's power on moderate-complexity setups at the cost of a slight power loss on more extreme ones.

\begin{figure}[!htb]
\centering
        \includegraphics[width=0.425\linewidth]{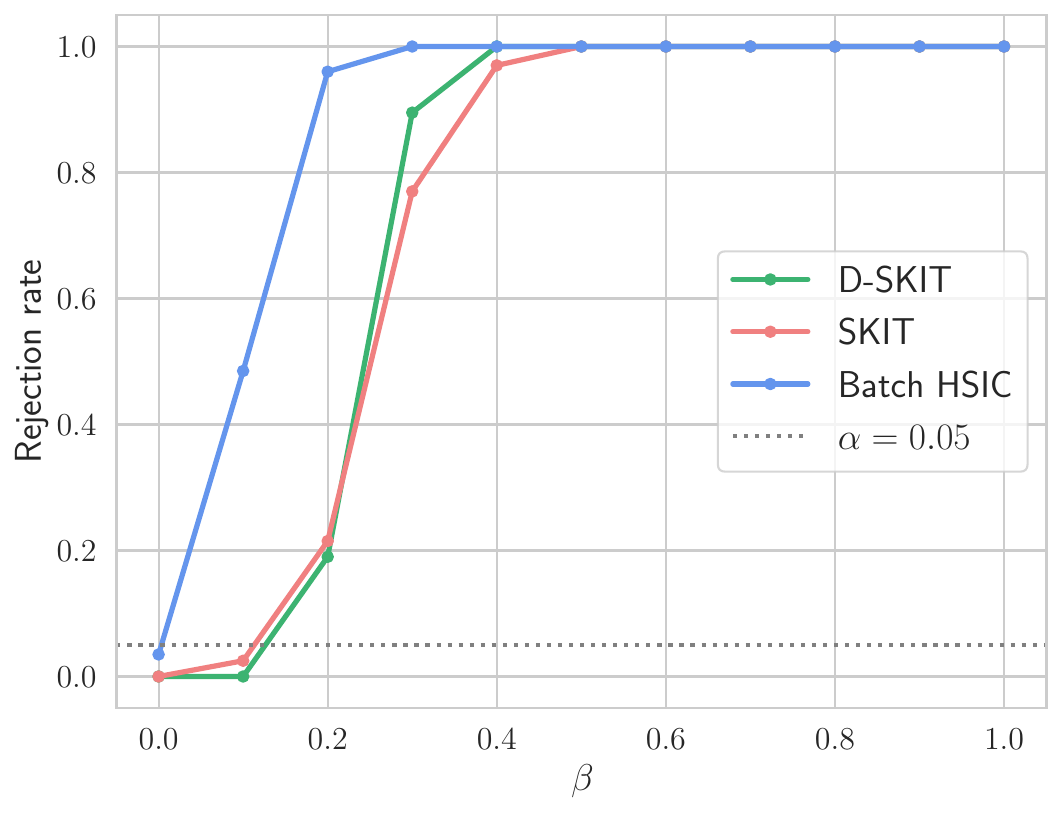}
        \caption{Comparison of SKIT and HSIC under Gaussian model in the batch setting. Non-surprisingly, batch HSIC performs best. D-SKIT improves over SKIT's power on moderate-complexity setups at the cost of a slight power loss on more extreme ones.}
        \label{fig:batch_comparison}
\end{figure}

\clearpage

\section{Proofs}\label{appsec:proofs}

Section~\ref{appsubsec:aux_res} contains auxiliary results needed to prove the results presented in this paper. In Section~\ref{appsubsec:proofs_sec_2}, we prove the results from Section~\ref{sec:seq_indep_testing}. In Secton~\ref{appsubsec:proofs_sec_3}, we prove the results from Section~\ref{sec:alt_measures}.

\subsection{Auxiliary Results}\label{appsubsec:aux_res}

\begin{theorem}[Ville's inequality~\citep{ville1939etude}]\label{thm:villes_ineq}
Suppose that $\roundbrack{\calM_t}_{t\geq 0}$ is a nonnegative supermartingale process adapted to a filtration $\curlybrack{\calF_t:t\geq 0}$. Then, for any $a>0$ it holds that:
\begin{equation*}
    \Prob\roundbrack{\exists t\geq 1:\calM_t\geq a}\leq \frac{\Exp{}{\calM_0}}{a}.
\end{equation*}
\end{theorem}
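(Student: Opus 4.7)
The plan is to prove Ville's inequality via a stopping-time argument that reduces the maximal statement to an ordinary expectation bound at a single (random) time, exactly as in the standard martingale proof of Doob's maximal inequality. The key observation is that the supermartingale property together with nonnegativity lets us control the probability of ever crossing level $a$ by comparing to the initial expectation $\Exp{}{\calM_0}$.

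First I would introduce the stopping time $\tau = \inf\{t \geq 0 : \calM_t \geq a\}$ with the convention $\inf\emptyset = +\infty$, so that the event of interest $\{\exists t \geq 0 : \calM_t \geq a\}$ coincides with $\{\tau < \infty\}$. For any fixed deterministic horizon $T \geq 0$, the bounded stopping time $\tau \wedge T$ is clearly at most $T$, so by the optional stopping theorem applied to the nonnegative supermartingale $(\calM_t)$ we obtain
\begin{equation*}
    \Exp{}{\calM_{\tau \wedge T}} \leq \Exp{}{\calM_0}.
\end{equation*}

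Next, I would split the left-hand expectation according to whether $\tau \leq T$ or $\tau > T$:
\begin{equation*}
    \Exp{}{\calM_{\tau \wedge T}} = \Exp{}{\calM_\tau \indicator{\tau \leq T}} + \Exp{}{\calM_T \indicator{\tau > T}} \geq \Exp{}{\calM_\tau \indicator{\tau \leq T}},
\end{equation*}
where the inequality uses $\calM_T \geq 0$. On the event $\{\tau \leq T\}$ we have $\calM_\tau \geq a$ by right-continuity of the indexing (here the process is discrete-time, so $\calM_\tau \geq a$ is immediate from the definition of $\tau$), so $\Exp{}{\calM_\tau \indicator{\tau \leq T}} \geq a \cdot \Prob(\tau \leq T)$. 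Combining yields $\Prob(\tau \leq T) \leq \Exp{}{\calM_0}/a$ for every $T$.

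Finally, letting $T \to \infty$ and applying monotone convergence to the increasing events $\{\tau \leq T\} \uparrow \{\tau < \infty\}$ delivers the desired bound
\begin{equation*}
    \Prob(\exists t \geq 1 : \calM_t \geq a) \leq \Prob(\tau < \infty) \leq \frac{\Exp{}{\calM_0}}{a}.
\end{equation*}
There is essentially no obstacle in the proof; the only subtle point is ensuring that the optional stopping theorem applies, which is guaranteed here because $\tau \wedge T$ is a bounded stopping time and $(\calM_t)$ is an adapted supermartingale with finite first moment at each fixed time (which follows from nonnegativity together with $\Exp{}{\calM_t} \leq \Exp{}{\calM_0}$). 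No integrability assumption beyond what is implicit in the supermartingale definition is needed.
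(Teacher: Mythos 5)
Your proof is correct: the stopping-time argument via optional stopping at the bounded time $\tau \wedge T$, discarding the nonnegative term on $\{\tau > T\}$, and passing to the limit by monotone convergence is the standard and complete proof of Ville's inequality in discrete time. The paper itself states this result as a cited classical theorem (Ville, 1939) without reproducing a proof, so there is nothing to compare against; your argument fills that in correctly, and your handling of the $t\geq 0$ versus $t\geq 1$ indexing and of the integrability needed for optional stopping is sound.
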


\begin{theorem}[Theorem 3 in~\citep{gretton2005measuring}]\label{thm:concen_hsic}
    Assume that $k$ and $l$ are bounded almost everywhere by 1, and are nonnegative. Then for $n>1$ and any $\delta\in (0,1)$, it holds with probability at least $1-\delta$ that:
    \begin{equation*}
        \abs{\mathrm{HSIC}(P_{XY};\calG,\calH)-\widehat{\mathrm{HSIC}}_b(P_{XY};\calG,\calH)}\leq \sqrt{\frac{\log (6/\delta)}{\alpha^2n}}+\frac{C}{n},
    \end{equation*}
    where $\alpha^2>0.24$ and $C$ are some absolute constants.
\end{theorem}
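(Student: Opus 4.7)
The plan is to reduce $\widehat{\mathrm{HSIC}}_b - \mathrm{HSIC}$ to a deterministic $O(1/n)$ bias plus the fluctuation of a U-statistic around its mean, and then concentrate the fluctuation by McDiarmid's bounded-differences inequality, union-bounding across the natural three-term decomposition of $\mathrm{HSIC}$.

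First I would expand the population criterion using four independent copies $(X,Y),(X',Y'),(X'',Y''),(X''',Y''')$ of the joint distribution together with the inner-product rules for mean embeddings in $\calG \otimes \calH$:
\begin{equation*}
\mathrm{HSIC}(P_{XY};\calG,\calH) = \mathbb{E}[k(X,X')l(Y,Y')] + \mathbb{E}[k(X,X')]\,\mathbb{E}[l(Y,Y')] - 2\,\mathbb{E}[k(X,X')l(Y,Y'')].
\end{equation*}
The biased estimator $\widehat{\mathrm{HSIC}}_b = \tfrac{1}{n^2}\mathrm{tr}(KHLH)$ is the V-statistic whose three summands estimate exactly these three expectations, but with replacement. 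Replacing each V-statistic by its companion without-replacement U-statistic changes its value by at most $O(1/n)$: the discarded diagonal / coincident-index terms in a degree-$d$ summand number $O(n^{d-1})$ out of $O(n^d)$ total, and each kernel factor lies in $[0,1]$ by assumption (A1). This deterministic correction absorbs into the $C/n$ term of the bound.

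Next I would control the fluctuation of each of the three resulting U-statistics around its expectation by McDiarmid's inequality. Because $k,l \in [0,1]$ and each sample index $i$ participates in only $O(n^{d-1})$ of the $O(n^d)$ summands of a degree-$d$ U-statistic, resampling a single $Z_i$ changes the statistic by at most $c_i = O(1/n)$. McDiarmid then yields subgaussian concentration $\Prob(|U-\mathbb{E} U|>t) \leq 2\exp(-\alpha^2 n t^2)$ for an explicit absolute constant $\alpha^2$. A two-sided union bound allocating $\delta/3$ to each of the three components accounts for the factor $6/\delta$ inside the logarithm, and inverting produces the $\sqrt{\log(6/\delta)/(\alpha^2 n)}$ term; combining with the bias completes the argument.

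The main obstacle is purely quantitative: tracking the per-coordinate bounded-differences constants simultaneously across U-statistics of degrees $2$, $3$, and $4$ so that a single numerical constant satisfying $\alpha^2 > 0.24$ emerges after the union bound. One can instead invoke a Hoeffding-style inequality specialized to U-statistics (which exploits their decomposition into sums of i.i.d.\ averages of the kernel) to sharpen the constant, at the cost of extra technicality; the bias-plus-McDiarmid route described above already contains all the conceptual content.
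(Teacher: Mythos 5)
The paper does not prove this statement: it is imported verbatim as an auxiliary result (Theorem 3 of \citet{gretton2005measuring}), so there is no in-paper proof to compare against. Your sketch correctly reconstructs the argument from that reference --- the same three-term expansion of $\mathrm{HSIC}$, the deterministic $O(1/n)$ V-to-U-statistic bias absorbed into $C/n$, per-term concentration of bounded U-statistics, and a two-sided union bound over the three terms producing the $6/\delta$ --- with the only divergence being your use of McDiarmid in place of Hoeffding's U-statistic deviation bound, which affects only the value of the absolute constant $\alpha^2$ and which you already flag.
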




\subsection{Proofs for Section~\ref{sec:seq_indep_testing}}\label{appsubsec:proofs_sec_2}

In Section~\ref{appsubsubsec:sup_results}, we prove several intermediate results. The proofs of the main results are deferred to Section~\ref{appsubsubsec:sec_2_main}.

\subsubsection{Supporting Lemmas}\label{appsubsubsec:sup_results}

Before we state the first result, recall the definition of the empirical mean embeddings computed from the first $2(t-1)$ datapoints:
\begin{equation}\label{eq:emp_mean_embeddings}
    \begin{gathered}
        \hat{\mu}_{XY}^{(t)} = \frac{1}{2(t-1)} \sum_{i=1}^{2(t-1)} \varphi(X_{i})\otimes \psi(Y_{i}),\\
        \hat{\mu}_{X}^{(t)} = \frac{1}{2(t-1)} \sum_{i=1}^{2(t-1)} \varphi(X_{i}), \quad
        \hat{\mu}_{Y}^{(t)} = \frac{1}{2(t-1)} \sum_{i=1}^{2(t-1)}  \psi(Y_{i}),
    \end{gathered}
\end{equation}
where we highlight the dependence on the number of processed datapoints. We have the following result.
\begin{lemma}\label{lem:as_conv_norms}
    For the empirical~\eqref{eq:emp_mean_embeddings} and population~\eqref{eq:mean_emb_pop} mean embeddings, it holds that:
    \begin{equation}\label{eq:alm_sure_conv_of_norms}
    \norm{\calG\otimes\calH}{\hat{\mu}_{XY}^{(t)} - \hat{\mu}_X^{(t)}\otimes\hat{\mu}_Y^{(t)}}\overset{\mathrm{a.s.}}{\longrightarrow} \norm{\calG\otimes\calH}{\mu_{XY} - \mu_X\otimes\mu_Y}.
    \end{equation}
\end{lemma}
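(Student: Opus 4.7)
The plan is to invoke the strong law of large numbers (SLLN) for Hilbert-space-valued random variables, combined with continuity of the outer product and the norm. First, since $\calG$ and $\calH$ are separable Hilbert spaces and Assumption (A1) gives $\norm{\calG}{\varphi(X)} = \sqrt{k(X,X)} \le 1$ and $\norm{\calH}{\psi(Y)} \le 1$ almost surely, the random elements $\varphi(X_i)\in\calG$, $\psi(Y_i)\in\calH$, and $\varphi(X_i)\otimes\psi(Y_i)\in\calG\otimes\calH$ are i.i.d.\ with uniformly bounded norms (the last one satisfies $\norm{\calG\otimes\calH}{\varphi(X_i)\otimes\psi(Y_i)} = \norm{\calG}{\varphi(X_i)}\norm{\calH}{\psi(Y_i)}\le 1$). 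In particular, their Bochner expectations $\mu_{XY}, \mu_X, \mu_Y$ are well-defined elements of the respective Hilbert spaces, and the SLLN in separable Hilbert spaces (an immediate consequence of the Mourier / Kolmogorov SLLN in separable Banach spaces under $\Exp{}{\norm{}{\xi}}<\infty$) yields
\begin{equation*}
    \hat{\mu}_{XY}^{(t)} \overset{\mathrm{a.s.}}{\longrightarrow} \mu_{XY} \text{ in } \calG\otimes\calH, \quad \hat{\mu}_{X}^{(t)} \overset{\mathrm{a.s.}}{\longrightarrow} \mu_X \text{ in } \calG, \quad \hat{\mu}_{Y}^{(t)} \overset{\mathrm{a.s.}}{\longrightarrow} \mu_Y \text{ in } \calH.
\end{equation*}

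Next, I would show that the outer-product map $(a,b)\mapsto a\otimes b$ is jointly continuous, which is what lets us pass from marginal convergence to convergence of $\hat{\mu}_X^{(t)}\otimes\hat{\mu}_Y^{(t)}$. Using the identity $\norm{\calG\otimes\calH}{u\otimes v}=\norm{\calG}{u}\norm{\calH}{v}$ and adding and subtracting $\hat{\mu}_X^{(t)}\otimes\mu_Y$, one obtains
\begin{equation*}
    \norm{\calG\otimes\calH}{\hat{\mu}_X^{(t)}\otimes\hat{\mu}_Y^{(t)} - \mu_X\otimes\mu_Y} \le \norm{\calG}{\hat{\mu}_X^{(t)}}\norm{\calH}{\hat{\mu}_Y^{(t)}-\mu_Y} + \norm{\calG}{\hat{\mu}_X^{(t)}-\mu_X}\norm{\calH}{\mu_Y}.
\end{equation*}
Since $\norm{\calG}{\hat{\mu}_X^{(t)}}\le 1$ uniformly in $t$ (by (A1) and Jensen) and both marginal empirical means converge almost surely to their population counterparts, the right-hand side tends to zero almost surely. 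Hence $\hat{\mu}_X^{(t)}\otimes\hat{\mu}_Y^{(t)}\overset{\mathrm{a.s.}}{\to}\mu_X\otimes\mu_Y$ in $\calG\otimes\calH$.

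Combining this with the a.s.\ convergence $\hat{\mu}_{XY}^{(t)}\overset{\mathrm{a.s.}}{\to}\mu_{XY}$ and the triangle inequality gives
\begin{equation*}
    \hat{\mu}_{XY}^{(t)} - \hat{\mu}_X^{(t)}\otimes\hat{\mu}_Y^{(t)} \overset{\mathrm{a.s.}}{\longrightarrow} \mu_{XY} - \mu_X\otimes\mu_Y \quad \text{in } \calG\otimes\calH.
\end{equation*}
The claim~\eqref{eq:alm_sure_conv_of_norms} then follows from the reverse triangle inequality, i.e., the continuity of the norm $\norm{\calG\otimes\calH}{\cdot}$.

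There is no real obstacle here; the only point requiring slight care is justifying the use of the SLLN in an infinite-dimensional Hilbert space, which is routine given separability of $\calG$, $\calH$ (and thus of $\calG\otimes\calH$) together with the uniform boundedness of the integrands supplied by (A1). Everything else is a continuity argument for the bilinear outer product and the norm functional.
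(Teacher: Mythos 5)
Your proof is correct, but it follows a genuinely different route from the paper's. The paper's proof is a reduction: it notes that the two squared norms are exactly $\mathrm{HSIC}(P_{XY};\calG,\calH)$ and its biased plug-in estimator $\widehat{\mathrm{HSIC}}_b^{(t)}$, invokes the finite-sample deviation bound of Gretton et al.\ (restated as Theorem~\ref{thm:concen_hsic}), upgrades it to almost sure convergence of the squared norms via Borel--Cantelli, and concludes with the continuous mapping theorem. You instead establish almost sure convergence of the empirical mean embeddings themselves in the Hilbert norm via the SLLN for i.i.d.\ random elements of a separable Banach space (Mourier), and then propagate this through the bilinear outer product and the continuity of the norm. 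Both arguments are sound, and both implicitly rely on the i.i.d.\ setting (the concentration bound does too), so that is not a discrepancy. What each buys: the paper's route inherits an explicit $O(t^{-1/2})$ rate from the concentration inequality and reuses a published result with minimal new argument; your route is more self-contained and proves the strictly stronger statement that $\hat{\mu}_{XY}^{(t)} - \hat{\mu}_X^{(t)}\otimes\hat{\mu}_Y^{(t)}$ converges to $\mu_{XY}-\mu_X\otimes\mu_Y$ \emph{as an element of} $\calG\otimes\calH$, not merely that the norms converge. That stronger conclusion would in fact let you shortcut much of the paper's proof of Lemma~\ref{lem:alm_sure_conv_g_t}, which currently re-derives convergence of the inner product $\langle \hat{g}_t, g^\star\rangle_{\calG\otimes\calH}$ by hand via the SLLN for U-statistics. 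The only points requiring care in your argument --- Bochner measurability/integrability of $\varphi(X)\otimes\psi(Y)$ (guaranteed by separability and (A1)) and the uniform bound $\norm{\calG}{\hat{\mu}_X^{(t)}}\leq 1$ (triangle inequality) --- are handled correctly.
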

\begin{proof}

We have
\begin{equation*}
\begin{aligned}
    \norm{\calG\otimes\calH}{\mu_{XY} - \mu_X\otimes\mu_Y}^2 &= \mathrm{HSIC}(P_{XY};\calG,\calH),\\
    \norm{\calG\otimes\calH}{\hat{\mu}_{XY}^{(t)} - \hat{\mu}_X^{(t)}\otimes\hat{\mu}_Y^{(t)}}^2 &= \widehat{\mathrm{HSIC}}_b^{(t)}(P_{XY};\calG,\calH),
\end{aligned}
\end{equation*}
where the latter is a biased estimator of HSIC, computed from $2(t-1)$ datapoints from $P_{XY}$. From Theorem~\ref{thm:concen_hsic} and the Borel-Cantelli lemma, it follows that:
\begin{equation*}
    \norm{\calG\otimes\calH}{\hat{\mu}_{XY}^{(t)} - \hat{\mu}_X^{(t)}\otimes\hat{\mu}_Y^{(t)}}^2\overset{\mathrm{a.s.}}{\longrightarrow} \norm{\calG\otimes\calH}{\mu_{XY} - \mu_X\otimes\mu_Y}^2.
\end{equation*}
The result then follows from the continuous mapping theorem.
\end{proof}

\begin{lemma}\label{lem:alm_sure_conv_g_t}
    Suppose that $H_1$ in~\eqref{eq:alt_iid} is true. Then for the oracle~\eqref{eq:hsic_oracle_wf} and plug-in~\eqref{eq:hsic_plugin_wf} witness functions, it holds that:
    \begin{equation}\label{eq:target_ineq}
    \langle \hat{g}_t, g^\star \rangle_{\calG\otimes\calH} \overset{\mathrm{a.s.}}{\longrightarrow} 1.
\end{equation}
As a consequence, $\norm{\calG\otimes\calH}{\hat{g}_t - g^\star} \overset{\mathrm{a.s.}}{\longrightarrow} 0$.
\end{lemma}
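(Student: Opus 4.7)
The plan is to reduce everything to almost-sure convergence of the (unnormalized) empirical witness operator to its population counterpart, and then exploit that both $\hat{g}_t$ and $g^\star$ are unit vectors in $\calG\otimes\calH$. Write $C_{XY} := \mu_{XY}-\mu_X\otimes\mu_Y$ and $\hat{C}_{XY}^{(t)} := \hat{\mu}_{XY}^{(t)}-\hat{\mu}_X^{(t)}\otimes\hat{\mu}_Y^{(t)}$, so that $g^\star = C_{XY}/\norm{\calG\otimes\calH}{C_{XY}}$ and $\hat{g}_t = \hat{C}_{XY}^{(t)}/\norm{\calG\otimes\calH}{\hat{C}_{XY}^{(t)}}$. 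Under $H_1$ and Assumption~\ref{assump:kernels}, the characteristic property~\eqref{eq:charac_cond} gives $\norm{\calG\otimes\calH}{C_{XY}}^2=\mathrm{HSIC}(P_{XY};\calG,\calH)>0$, so $g^\star$ is well-defined; Lemma~\ref{lem:as_conv_norms} additionally ensures that $\norm{\calG\otimes\calH}{\hat{C}_{XY}^{(t)}}$ is bounded away from $0$ for all sufficiently large $t$ almost surely, so $\hat{g}_t$ is eventually well-defined.

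The first step is to show that the inner product $\langle \hat{C}_{XY}^{(t)}, C_{XY}\rangle_{\calG\otimes\calH}$ converges almost surely to $\norm{\calG\otimes\calH}{C_{XY}}^2$. Expanding $\hat{C}_{XY}^{(t)}$ yields
\begin{equation*}
    \langle \hat{C}_{XY}^{(t)}, C_{XY}\rangle_{\calG\otimes\calH}
    = \langle \hat{\mu}_{XY}^{(t)}, C_{XY}\rangle_{\calG\otimes\calH}
    - \langle \hat{\mu}_X^{(t)}\otimes\hat{\mu}_Y^{(t)}, C_{XY}\rangle_{\calG\otimes\calH}.
\end{equation*}
Each term is a linear (resp.\ bilinear) functional of empirical averages of bounded Hilbert-space valued i.i.d.\ random elements (the boundedness coming from (A1)), so by the strong law of large numbers in separable Hilbert spaces one has $\hat{\mu}_{XY}^{(t)} \convas \mu_{XY}$, $\hat{\mu}_X^{(t)} \convas \mu_X$, $\hat{\mu}_Y^{(t)} \convas \mu_Y$. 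Continuity of the inner product, of the tensor product, and the boundedness of all involved quantities then give $\langle \hat{C}_{XY}^{(t)}, C_{XY}\rangle_{\calG\otimes\calH} \convas \langle C_{XY}, C_{XY}\rangle_{\calG\otimes\calH} = \norm{\calG\otimes\calH}{C_{XY}}^2$.

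Combining this with Lemma~\ref{lem:as_conv_norms} (which gives $\norm{\calG\otimes\calH}{\hat{C}_{XY}^{(t)}}\convas \norm{\calG\otimes\calH}{C_{XY}}>0$) and dividing, the continuous mapping theorem yields
\begin{equation*}
    \langle \hat{g}_t, g^\star\rangle_{\calG\otimes\calH}
    = \frac{\langle \hat{C}_{XY}^{(t)}, C_{XY}\rangle_{\calG\otimes\calH}}{\norm{\calG\otimes\calH}{\hat{C}_{XY}^{(t)}}\cdot \norm{\calG\otimes\calH}{C_{XY}}}
    \convas \frac{\norm{\calG\otimes\calH}{C_{XY}}^2}{\norm{\calG\otimes\calH}{C_{XY}}^2} = 1,
\end{equation*}
which establishes~\eqref{eq:target_ineq}. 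For the second claim, since $\hat{g}_t$ and $g^\star$ are unit vectors whenever they are defined,
\begin{equation*}
    \norm{\calG\otimes\calH}{\hat{g}_t - g^\star}^2 = 2 - 2\langle \hat{g}_t, g^\star\rangle_{\calG\otimes\calH} \convas 0,
\end{equation*}
and taking square roots finishes the proof.

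The main technical obstacle is the almost-sure convergence of the bilinear term $\langle \hat{\mu}_X^{(t)}\otimes\hat{\mu}_Y^{(t)}, C_{XY}\rangle$. It is handled cleanly by noting that this quantity factors through the bounded bilinear map $(u,v)\mapsto \langle u\otimes v, C_{XY}\rangle$ applied to $(\hat{\mu}_X^{(t)},\hat{\mu}_Y^{(t)})$, which is jointly continuous, so componentwise a.s.\ convergence of $\hat{\mu}_X^{(t)}$ and $\hat{\mu}_Y^{(t)}$ is enough; the uniform boundedness granted by (A1) prevents any pathology when passing to the limit.
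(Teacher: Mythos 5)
Your proof is correct, and its overall skeleton matches the paper's: both arguments reduce~\eqref{eq:target_ineq} to showing $\langle \hat{\mu}_{XY}^{(t)}-\hat{\mu}_X^{(t)}\otimes\hat{\mu}_Y^{(t)},\, \mu_{XY}-\mu_X\otimes\mu_Y\rangle_{\calG\otimes\calH} \convas \norm{\calG\otimes\calH}{\mu_{XY}-\mu_X\otimes\mu_Y}^2$, handle the denominators via Lemma~\ref{lem:as_conv_norms}, and finish with the unit-vector identity $\norm{\calG\otimes\calH}{\hat g_t-g^\star}^2=2-2\langle\hat g_t,g^\star\rangle_{\calG\otimes\calH}$. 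Where you genuinely diverge is in how that key convergence is established. The paper expands the empirical cross-covariance into scalar sums --- a diagonal average handled by the ordinary SLLN and an off-diagonal double sum handled by Hoeffding's SLLN for U-statistics with bounded kernel --- and never asserts norm convergence of the mean embeddings themselves. You instead invoke the SLLN in separable Hilbert spaces (Mourier's theorem, justified here since (A1) gives $\Exp{}{\norm{\calG}{\varphi(X)}}\le 1$ etc.\ and the paper assumes the RKHSs are separable) to get $\hat{\mu}_{XY}^{(t)}\convas\mu_{XY}$, $\hat{\mu}_X^{(t)}\convas\mu_X$, $\hat{\mu}_Y^{(t)}\convas\mu_Y$ in norm, and then push through the bounded bilinear tensor map. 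Your route is cleaner and in fact proves something strictly stronger --- norm convergence $\norm{\calG\otimes\calH}{\hat{C}_{XY}^{(t)}-C_{XY}}\convas 0$, from which the lemma (and Lemma~\ref{lem:as_conv_norms} itself) follows immediately --- at the price of importing a vector-valued SLLN; the paper's route stays entirely with scalar limit theorems but needs the U-statistic SLLN to control the cross terms. Both are complete; just make sure to cite a precise reference for the Banach/Hilbert-space SLLN if you write this up.
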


\begin{proof}

    Suppose that the alternative in~\eqref{eq:alt_iid} happens to be true. Then since $k$ and $l$ are characteristic kernels, it follows that:
    \begin{equation*}
        \norm{\calG\otimes\calH}{\mu_{XY} - \mu_X\otimes\mu_Y}>0.
    \end{equation*}
    We aim to show that:
    \begin{equation*}
    \anglebrack{\frac{\hat{\mu}_{XY}^{(t)}-\hat{\mu}_X^{(t)}\otimes\hat{\mu}_Y^{(t)}}{\norm{\calG\otimes\calH}{\hat{\mu}_{XY}^{(t)}-\hat{\mu}_X^{(t)}\otimes\hat{\mu}_Y^{(t)}}}, \frac{\mu_{XY}-\mu_X\otimes\mu_Y}{\norm{\calG\otimes\calH}{\mu_{XY}-\mu_X\otimes \mu_Y}}}_{\calG\otimes\calH} \overset{\mathrm{a.s.}}{\longrightarrow}1.
    \end{equation*}
    From Lemma~\ref{lem:as_conv_norms}, we know that: $\norm{\calG\otimes\calH}{\hat{\mu}_{XY}^{(t)} - \hat{\mu}_X^{(t)}\otimes\hat{\mu}_Y^{(t)}}\overset{\mathrm{a.s.}}{\longrightarrow} \norm{\calG\otimes\calH}{\mu_{XY} - \mu_X\otimes\mu_Y}$. Hence it suffices to show that 
    \begin{equation}\label{eq:suffice}
        \anglebrack{\hat{\mu}_{XY}^{(t)}-\hat{\mu}_X^{(t)}\otimes\hat{\mu}_Y^{(t)}, \mu_{XY}-\mu_X\otimes \mu_Y}_{\calG\otimes\calH} \overset{\mathrm{a.s.}}{\longrightarrow}\norm{\calG\otimes\calH}{\mu_{XY} - \mu_X\otimes\mu_Y}^2.
    \end{equation}
    
    \noindent Recall that: $\mu_{XY}-\mu_X\otimes \mu_Y = \Exp{}{\varphi(\tilde{X})\otimes \psi(\tilde{Y})}-\Exp{}{\varphi(\tilde{X})}\otimes \Exp{}{\psi(\tilde{Y})}$. We have:
\begin{equation*}
    \hat{\mu}_{XY}^{(t)}-\hat{\mu}_X^{(t)}\otimes\hat{\mu}_Y^{(t)} = \roundbrack{1-\frac{1}{2(t-1)}}\roundbrack{\frac{1}{2(t-1)}\sum_{i=1}^{2(t-1)} \varphi(X_{i})\otimes \psi(Y_{i}) - \frac{1}{4(t-1)^2-2(t-1)}\sum_{\substack{j,k=1:\\j\neq k}}^{2(t-1)}\varphi(X_{j})\otimes \psi(Y_{k})}.
\end{equation*}
Further, it holds that:
\begin{equation*}
\begin{aligned}
    &\quad \anglebrack{\hat{\mu}_{XY}^{(t)}-\hat{\mu}_X^{(t)}\otimes\hat{\mu}_Y^{(t)},\mu_{XY}-\mu_X\otimes \mu_Y}_{\calG\otimes\calH} \\
    = & \quad\roundbrack{1-\frac{1}{2(t-1)}}\roundbrack{\frac{1}{2(t-1)}\sum_{i=1}^{2(t-1)} \Exp{\tilde{X},\tilde{Y}}{\langle \varphi(\tilde{X}),\varphi(X_{i})\rangle_{\calG}\langle \psi(\tilde{Y}), \psi(Y_{i})\rangle_{\calH}}}\\ 
    - & \quad\roundbrack{1-\frac{1}{2(t-1)}}\roundbrack{\frac{1}{4(t-1)^2-2(t-1)}\sum_{\substack{j,k=1:\\j\neq k}}^{2(t-1)}\Exp{\tilde{X}}{\langle \varphi(\tilde{X}),\varphi(X_{j})\rangle_{\calG}}\Exp{\tilde{Y}}{\langle \psi(\tilde{Y}),\psi(Y_{k})\rangle_{\calH}}},
\end{aligned}
\end{equation*}
    For any $(x,y)\in\calX\times\calY$, we have:
\begin{align*}
    \abs{\Exp{\tilde{X},\tilde{Y}}{\langle \varphi(\tilde{X}),\varphi(x)\rangle_{\calG}\langle \psi(\tilde{Y}), \psi(y)\rangle_{\calH}}} &\leq \Exp{\tilde{X},\tilde{Y}}{\abs{\langle \varphi(\tilde{X}),\varphi(x)\rangle_{\calG}\langle \psi(\tilde{Y}), \psi(y)\rangle_{\calH}}}\\
    &\leq \Exp{\tilde{X},\tilde{Y}}{\sqrt{k(\tilde{X},\tilde{X})k(x,x)l(\tilde{Y},\tilde{Y})k(y,y)}}\\
    &\leq 1,
\end{align*}
and similarly, for any $(x,y)\in\calX\times\calY$ it holds that:
\begin{equation*}
    \abs{\Exp{\tilde{X}}{\langle \varphi(\tilde{X}),\varphi(x)\rangle_{\calG}}\Exp{\tilde{Y}}{\langle \psi(\tilde{Y}),\psi(y)\rangle_{\calH}}}\leq 1.
\end{equation*}
Hence, by the SLLN, it follows that ($(X,Y),(\tilde{X},\tilde{Y})\simiid P_{XY}$):
\begin{equation*}
\begin{aligned}
    \frac{1}{2(t-1)}\sum_{i=1}^{2(t-1)} \Exp{\tilde{X},\tilde{Y}}{\langle \varphi(\tilde{X}),\varphi(X_{i})\rangle_{\calG}\langle \psi(\tilde{Y}), \psi(Y_{i})\rangle_{\calH}} \overset{\mathrm{a.s.}}{\longrightarrow} \quad &\Exp{X,Y,\tilde{X},\tilde{Y}}{\langle \varphi(\tilde{X}),\varphi(X)\rangle_{\calG}\langle \psi(\tilde{Y}), \psi(Y)\rangle_{\calH}} \\
    &= \anglebrack{\mu_{XY}, \mu_{XY}}_{\calG\otimes\calH}.
\end{aligned}
\end{equation*}
Similarly, by the SLLN for U-statistics with bounded kernel~\citep{hoeffding1961slln_ustat}, it follows that:
\begin{equation*}
\begin{aligned}
    &\quad \frac{1}{4(t-1)^2-2(t-1)}\sum_{\substack{j,k=1:\\j\neq k}}^{2(t-1)}\Exp{\tilde{X}}{\langle \varphi(\tilde{X}),\varphi(X_{j})\rangle_{\calG}}\Exp{\tilde{Y}}{\langle \psi(\tilde{Y}),\psi(Y_{k})\rangle_{\calH}} \\
    \overset{\mathrm{a.s.}}{\longrightarrow} &\quad \Exp{X,\tilde{X}}{\langle \varphi(\tilde{X}),\varphi(X)\rangle_{\calG}}\Exp{Y,\tilde{Y}}{\langle \psi(\tilde{Y}),\psi(Y)\rangle_{\calH}} \\
    = & \quad  \anglebrack {\mu_{X}\otimes \mu_{Y}, \mu_{X}\otimes \mu_{Y}}_{\calG\otimes\calH}.
\end{aligned}
\end{equation*}
Hence, we deduce that:
\begin{equation*}
\begin{aligned}
    \anglebrack{\hat{\mu}_{XY}^{(t)}-\hat{\mu}_X^{(t)}\otimes\hat{\mu}_Y^{(t)},\mu_{XY}-\mu_{X}\otimes \mu_{Y}}_{\calG\otimes\calH} \ \overset{\mathrm{a.s.}}{\longrightarrow} & \ \anglebrack{\mu_{XY}, \mu_{XY}}_{\calG\otimes\calH} -  \anglebrack{\mu_{X}\otimes \mu_{Y},\mu_{X}\otimes \mu_{Y}}_{\calG\otimes\calH}\\
    = & \ \anglebrack{\mu_{XY}-\mu_{X}\otimes \mu_{Y}, \mu_{XY}-\mu_{X}\otimes \mu_{Y}}_{\calG\otimes\calH} \\
    = & \  \norm{\calG\otimes\calH}{\mu_{XY} - \mu_X\otimes\mu_Y}^2.
\end{aligned}
\end{equation*}
Recalling~\eqref{eq:suffice}, the proof of~\eqref{eq:target_ineq} is complete. To establish the consequence, simply note that:
\begin{equation*}
    \norm{\calG\otimes\calH}{\hat{g}_t - g^\star} = \sqrt{2\roundbrack{1-\anglebrack{\hat{g}_t,g^\star}_{\calG\otimes\calH}}},
\end{equation*}
and hence the result follows.
\end{proof}

\begin{lemma}\label{lem:partial_avg}
Suppose that $\roundbrack{x_t}_{t\geq 1}$ is a sequence of numbers such that $\lim_{t\to\infty}x_t = x$. Then the corresponding sequence of partial averages also converges to $x$, that is, $\lim_{n\to\infty}\frac{1}{n}\sum_{t=1}^n x_t = x$. This also implies that if $\roundbrack{X_t}_{t\geq 1}$ is a sequence of random variables such that $X_t\overset{\mathrm{a.s.}}{\longrightarrow}X$, then $(\sum_{t=1}^{n} X_t)/n\overset{\mathrm{a.s.}}{\longrightarrow}X$.
\end{lemma}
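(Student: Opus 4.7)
The plan is to treat this as the classical Ces\`aro mean theorem followed by a routine transfer to the almost-sure setting. The deterministic half is the core content; the random-variable consequence is a one-line corollary via a pointwise argument on the probability-one convergence set.

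For the deterministic statement, I would fix $\varepsilon > 0$ and use the hypothesis $x_t \to x$ to pick $N = N(\varepsilon)$ such that $|x_t - x| < \varepsilon/2$ for all $t > N$. Then I would write
\begin{equation*}
\frac{1}{n}\sum_{t=1}^{n} x_t - x \;=\; \frac{1}{n}\sum_{t=1}^{N}(x_t - x) \;+\; \frac{1}{n}\sum_{t=N+1}^{n}(x_t - x),
\end{equation*}
and bound each piece separately. The first sum is a fixed finite quantity $C_N := \sum_{t=1}^{N}(x_t - x)$ divided by $n$, so $|C_N|/n < \varepsilon/2$ once $n$ is large enough (say $n > 2|C_N|/\varepsilon$). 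The second sum has at most $n$ terms each of magnitude less than $\varepsilon/2$, so its average is bounded by $\varepsilon/2$ uniformly in $n > N$. Combining yields $\bigl|\tfrac{1}{n}\sum_{t=1}^{n} x_t - x\bigr| < \varepsilon$ for all sufficiently large $n$, which is the desired convergence.

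For the random-variable consequence, let $\Omega_0 := \{\omega : X_t(\omega) \to X(\omega)\}$, which has $\Prob(\Omega_0) = 1$ by hypothesis. For each $\omega \in \Omega_0$ the sequence $(X_t(\omega))_{t\geq 1}$ of real numbers converges to $X(\omega)$, so by the deterministic part just proved, the partial averages $\tfrac{1}{n}\sum_{t=1}^{n} X_t(\omega)$ converge to $X(\omega)$. This holds on an event of probability one, giving the stated almost-sure convergence.

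There is no real obstacle here: the argument is entirely standard, and the only minor care needed is to ensure the split into ``fixed head'' and ``tail controlled by $\varepsilon$'' is done cleanly so that the bound holds for all sufficiently large $n$ simultaneously. If the ambient application required the limit $x = +\infty$ (as might arise from wealth-process convergence elsewhere in the paper), an analogous argument applies by replacing ``$|x_t - x| < \varepsilon/2$'' with ``$x_t > M$'' for arbitrary $M$, but as stated the lemma covers only the finite-limit case, so no such extension is needed here.
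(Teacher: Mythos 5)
Your proof is correct and follows essentially the same route as the paper's: the standard Ces\`aro head/tail split, with the fixed head vanishing after dividing by $n$ and the tail controlled by $\varepsilon/2$, followed by the pointwise transfer to the almost-sure setting (which the paper leaves implicit). The only cosmetic difference is that you bound the head by the fixed constant $|C_N|/n$ while the paper bounds it via a uniform bound $M$ on $|x_t - x|$ times $n_0/\tilde{n}$; both are equivalent.
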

\begin{proof}
    Fix any $\varepsilon>0$. Since $\roundbrack{x_t}_{t\geq 1}$ is converging, then $\exists M>0$:
    \begin{equation*}
        \abs{x_t-x}\leq M, \quad \forall t \geq 1.
    \end{equation*}
    Now, let $n_0$ be such that $\abs{x_t-x}\leq \varepsilon/2$ for all $n>n_0$. Further, choose any $n_1>n_0$: $Mn_0/n_1\leq \varepsilon/2$. Hence, for any $\tilde{n}>n_1$, it holds that:
    \begin{equation*}
        \begin{aligned}
            \abs{\frac{1}{\tilde{n}}\sum_{t=1}^{\tilde{n}}x_t-x} &\leq \abs{\frac{1}{\tilde{n}}\sum_{t=1}^{n_0}x_t-x} + \abs{\frac{1}{\tilde{n}}\sum_{t=n_0+1}^{\tilde{n}}x_t-x}\\
            &\leq \frac{1}{\tilde{n}}\sum_{t=1}^{n_0}\abs{x_t-x} + \frac{1}{\tilde{n}}\sum_{t=n_0+1}^{\tilde{n}}\abs{x_t-x}\\
            &\leq \frac{n_0}{\tilde{n}}M + \frac{\tilde{n}-n_0}{\tilde{n}}\frac{\varepsilon}{2} ~\leq ~\frac{\varepsilon}{2}+\frac{\varepsilon}{2}
            ~=~ \varepsilon,
        \end{aligned}
    \end{equation*}
    which implies the result.
\end{proof}

Before we state the next result, recall that HSIC-based payoffs are based on the predictable estimates $\{\hat g_i\}_{i\geq 1}$ of the oracle witness function $g^\star$ and have the following form:
\begin{equation}\label{eq:v_t_def}
\begin{aligned}
    f_i(Z_{2i-1},Z_{2i}) &= \frac{1}{2}\roundbrack{\hat g_i(Z_{2i-1})+\hat g_i(Z_{2i})}-\frac{1}{2}\roundbrack{\hat g_i(\tilde{Z}_{2i-1})+\hat g_i(\tilde{Z}_{2i})}, \quad i\geq 1.\\
    f^\star(Z_{2i-1},Z_{2i}) &= \frac{1}{2}\roundbrack{g^\star(Z_{2i-1})+ g^\star(Z_{2i})}-\frac{1}{2}\roundbrack{g^\star(\tilde{Z}_{2i-1})+g^\star(\tilde{Z}_{2i})}.
\end{aligned}
\end{equation}

\begin{lemma}\label{lem:mom_convergence}
Suppose that $H_1$ in~\eqref{eq:alt_iid} is true. Then it holds that:
\begin{align}
    \frac{1}{t}\sum_{i=1}^t f_i(Z_{2i-1},Z_{2i}) &\overset{\mathrm{a.s.}}{\longrightarrow} \Exp{}{f^\star(Z_{1},Z_{2})}, \label{eq:first_moment}\\
    \frac{1}{t}\sum_{i=1}^t \roundbrack{f_i(Z_{2i-1},Z_{2i})}^2 &\overset{\mathrm{a.s.}}{\longrightarrow} \Exp{}{(f^\star(Z_{1},Z_{2}))^2}. \label{eq:second_moment}
\end{align}
\end{lemma}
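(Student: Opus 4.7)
The plan is to decompose each plug-in payoff $f_i$ as the oracle payoff $f^\star$ plus an error term, apply the classical i.i.d.\ strong law of large numbers to the oracle part, and show that the Ces\`aro average of the error vanishes almost surely by combining the RKHS-norm convergence $\|\hat g_i-g^\star\|_{\calG\otimes\calH}\to 0$ from Lemma~\ref{lem:alm_sure_conv_g_t} with the deterministic averaging lemma (Lemma~\ref{lem:partial_avg}).

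Concretely, for the first moment I would write
$$\frac{1}{t}\sum_{i=1}^t f_i(Z_{2i-1},Z_{2i}) = \frac{1}{t}\sum_{i=1}^t f^\star(Z_{2i-1},Z_{2i}) + \frac{1}{t}\sum_{i=1}^t \bigl(f_i-f^\star\bigr)(Z_{2i-1},Z_{2i}).$$
The pairs $\{(Z_{2i-1},Z_{2i})\}_{i\geq 1}$ are i.i.d.\ and $f^\star$ is bounded (it was scaled by $s=1/2$ in~\eqref{eq:hsic_payoff_fn_oracle} precisely so that $f^\star\in[-1,1]$), so the usual SLLN sends the first sum almost surely to $\Exp{}{f^\star(Z_1,Z_2)}$.

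For the error term I would exploit that, under (A1), $k\otimes l$ is bounded by $1$, so for every $h\in\calG\otimes\calH$ and every $(x,y)$ the reproducing property and Cauchy–Schwarz yield
$$|h(x,y)|=|\langle h,\varphi(x)\otimes\psi(y)\rangle_{\calG\otimes\calH}|\leq \|h\|_{\calG\otimes\calH}\sqrt{k(x,x)\,l(y,y)}\leq \|h\|_{\calG\otimes\calH}.$$
Applying this to $h=\hat g_i-g^\star$ at the four evaluation points in the definition~\eqref{eq:v_t_def} of the payoff gives the pointwise bound
$$\bigl|f_i(z,z')-f^\star(z,z')\bigr|\leq 2\,\|\hat g_i-g^\star\|_{\calG\otimes\calH}$$
uniformly in $z,z'$. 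Lemma~\ref{lem:alm_sure_conv_g_t} yields $\|\hat g_i-g^\star\|_{\calG\otimes\calH}\convas 0$, and Lemma~\ref{lem:partial_avg} then upgrades this to $\frac{1}{t}\sum_{i=1}^t\|\hat g_i-g^\star\|_{\calG\otimes\calH}\convas 0$, which dominates the error average and proves~\eqref{eq:first_moment}.

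For~\eqref{eq:second_moment} I would use the factorization $f_i^2-(f^\star)^2=(f_i-f^\star)(f_i+f^\star)$. Since $|f_i|,|f^\star|\leq 1$, we have $|f_i+f^\star|\leq 2$, hence $|f_i^2-(f^\star)^2|\leq 4\|\hat g_i-g^\star\|_{\calG\otimes\calH}$; the SLLN applied to the i.i.d., $[0,1]$-valued sequence $\{(f^\star(Z_{2i-1},Z_{2i}))^2\}_{i\geq 1}$ handles the main term and the same Ces\`aro averaging argument kills the residual. The main conceptual hurdle is that $\hat g_i$ is data-dependent, so the sequence $\{f_i(Z_{2i-1},Z_{2i})\}$ is not i.i.d.\ and the SLLN cannot be applied directly; the deterministic RKHS bound above circumvents this by reducing the problem to averaging a nonnegative sequence that converges almost surely to zero, bypassing any need for a martingale or uniform-convergence argument.
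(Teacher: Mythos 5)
Your proposal is correct and follows essentially the same route as the paper's proof: decompose $f_i = f^\star + (f_i - f^\star)$, apply the SLLN to the bounded i.i.d.\ oracle terms, bound $|f_i - f^\star|$ pointwise by a constant times $\norm{\calG\otimes\calH}{\hat g_i - g^\star}$ via the reproducing property and Cauchy--Schwarz, and invoke Lemma~\ref{lem:alm_sure_conv_g_t} together with the Ces\`aro averaging Lemma~\ref{lem:partial_avg} to kill the error average. The only differences are cosmetic --- your pointwise bound carries a factor $2$ where the paper exploits the tensor-product structure to get the constant $1$, and for~\eqref{eq:second_moment} you use the difference-of-squares factorization where the paper expands the square --- neither of which affects the argument.
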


\begin{proof}
We start by proving~\eqref{eq:first_moment}. Note that:
\begin{equation*}
\begin{aligned}
    f_i(Z_{2i-1},Z_{2i})  &=  \frac{1}{2}\roundbrack{\hat{g}_i(Z_{2i-1})+\hat{g}_t(Z_{2i})}-\frac{1}{2}\roundbrack{\hat{g}_i(\tilde{Z}_{2i-1})+\hat{g}_i(\tilde{Z}_{2i})} \\
    &= \frac{1}{2}\anglebrack{\hat{g}_i, \roundbrack{\varphi(X_{2i})-\varphi(X_{2i-1})}\otimes \roundbrack{\psi(Y_{2i})- \psi(Y_{2i-1})} }_{\calG\otimes\calH}.
\end{aligned}
\end{equation*}
Next, observe that:
\begin{equation*}
\begin{aligned}
    \abs{\frac{1}{t}\sum_{i=1}^t f_i(Z_{2i-1},Z_{2i}) - \Exp{}{f^\star(Z_{1},Z_{2})}} &\leq \abs{\frac{1}{t}\sum_{i=1}^t f_i(Z_{2i-1},Z_{2i}) - \frac{1}{t}\sum_{i=1}^t f^\star(Z_{2i-1},Z_{2i})} \\
    &+ \underbrace{\abs{\frac{1}{t}\sum_{i=1}^t f^\star(Z_{2i-1},Z_{2i}) - \Exp{}{f^\star(Z_{1},Z_{2})}}}_{\overset{\mathrm{a.s.}}{\longrightarrow} \ 0},
\end{aligned}
\end{equation*}
where the second term converges almost surely to 0 by the SLLN. For the first term, we have that:
\begin{equation*}
    \abs{\frac{1}{t}\sum_{i=1}^t f_i(Z_{2i-1},Z_{2i}) - \frac{1}{t}\sum_{i=1}^t f^\star(Z_{2i-1},Z_{2i})} \leq \frac{1}{t}\sum_{i=1}^t \abs{f_i(Z_{2i-1},Z_{2i}) - f^\star(Z_{2i-1},Z_{2i})}.
\end{equation*}
Finally, note that:
\begin{equation}\label{eq:der_chain}
\begin{aligned}
    \abs{f_i(Z_{2i-1},Z_{2i}) - f^\star(Z_{2i-1},Z_{2i})} &=\frac{1}{2} \abs{\anglebrack{\hat{g}_i-g^\star, \roundbrack{\varphi(X_{2i})-\varphi(X_{2i-1})}\otimes \roundbrack{\psi(Y_{2i})- \psi(Y_{2i-1})} }_{\calG\otimes\calH}}\\
    &\leq \norm{\calG\otimes\calH}{\hat{g}_i-g^\star} \overset{\mathrm{a.s.}}{\longrightarrow} 0,
\end{aligned}
\end{equation}
where the convergence result is due to Lemma~\ref{lem:alm_sure_conv_g_t}. The result~\eqref{eq:first_moment} then follows after invoking Lemma~\ref{lem:partial_avg}. Next, we prove~\eqref{eq:second_moment}. Note that:
\begin{equation*}
\begin{aligned}
    \frac{1}{t}\sum_{i=1}^t (f_i(Z_{2i-1},Z_{2i}))^2 &= \frac{1}{t}\sum_{i=1}^t (f_i(Z_{2i-1},Z_{2i})-f^\star(Z_{2i-1},Z_{2i})+f^\star(Z_{2i-1},Z_{2i}))^2\\
    &= \underbrace{\frac{1}{t}\sum_{i=1}^t(f_i(Z_{2i-1},Z_{2i})-f^\star(Z_{2i-1},Z_{2i}))^2}_{\overset{\mathrm{a.s.}}{\longrightarrow} \ 0}\\
    & + \frac{2}{t}\sum_{i=1}^t (f^\star(Z_{2i-1},Z_{2i}))(f_i(Z_{2i-1},Z_{2i})-f^\star(Z_{2i-1},Z_{2i}))\\
    &+ \underbrace{\frac{1}{t}\sum_{i=1}^t (f^\star(Z_{2i-1},Z_{2i}))^2}_{\overset{\mathrm{a.s.}}{\longrightarrow} \ \Exp{}{(f^\star(Z_{1},Z_{2}))^2}} ,
\end{aligned}
\end{equation*}
where the first convergence result is due to~\eqref{eq:der_chain} and Lemma~\ref{lem:partial_avg} and the second convergence result is due to the SLLN. Using~\eqref{eq:der_chain} and Lemma~\ref{lem:partial_avg}, we deduce that:
\begin{equation*}
    \abs{\frac{2}{t}\sum_{i=1}^t (f^\star(Z_{2i-1},Z_{2i}))(f_i(Z_{2i-1},Z_{2i})-f^\star(Z_{2i-1},Z_{2i}))}\leq 2\cdot \frac{1}{t}\sum_{i=1}^t \abs{f_i(Z_{2i-1},Z_{2i})-f^\star(Z_{2i-1},Z_{2i})} \overset{\mathrm{a.s.}}{\longrightarrow} 0,
\end{equation*}
and hence we conclude that the convergence~\eqref{eq:second_moment} holds.

\end{proof}

\subsubsection{Main Results}\label{appsubsubsec:sec_2_main}

\thmoracletest*
\begin{proof}
\begin{enumerate}
    \item Under $H_0$ in~\eqref{eq:null_iid}, we have that:
    \begin{equation*}
    (X_{2t-1},Y_{2t-1})\overset{d}{=} (X_{2t},Y_{2t})\overset{d}{=} (X_{2t-1},Y_{2t})\overset{d}{=} (X_{2t},Y_{2t-1}),
\end{equation*}
and hence, the first part of the Proposition trivially follows from the linearity of expectation. Under distribution drift, we use that at least one of the marginal distributions does not change at each round. For example, suppose that at round $t$, it holds that: $P_{X}^{2t-1}=P_{X}^{2t}$. For the stream of independent observations, we have: $X_{2t}\indep Y_{2t-1}$ and $X_{2t-1}\indep Y_{2t}$. Further, under the $H_0$ in~\eqref{eq:null_noniid_drift}, it holds that: $X_{2t-1}\indep Y_{2t-1}$ and $X_{2t}\indep Y_{2t}$. Hence, we have:
\begin{equation*}
    (X_{2t-1},Y_{2t-1})\overset{d}{=} (X_{2t},Y_{2t-1}) \quad \text{and}\quad (X_{2t-1},Y_{2t})\overset{d}{=} (X_{2t},Y_{2t}), 
\end{equation*}
and hence, we get the result using linearity of expectation.

\item Under the i.i.d. setting, we have 
\begin{equation*}
    \Exp{}{f^\star(Z_{2t-1},Z_{2t})\mid \calF_{t-1}} =\Exp{}{f^\star(Z_{1},Z_{2})}= s \cdot m(P_{XY};\calC),
\end{equation*}
and hence the result follows from the fact that the functional class $\calC$ satisfies the characteristic condition~\eqref{eq:charac_cond}.
\item Let $W:=f^\star(Z_{1},Z_{2})$, and consider $\Exp{H_1}{\log(1+\lambda W)}$. We know that $\Exp{H_1}{W}>0$. We use the following inequality~\citep[Equation (4.12)]{fan2015exp_ineq}: for any $y\geq -1$ and $\lambda\in[0,1)$, it holds:
\begin{equation*}
    \log(1+\lambda y) \geq \lambda y + y^2 \roundbrack{\log(1-\lambda)+\lambda}
\end{equation*}
Hence
\begin{equation*}
\begin{aligned}
    \Exp{}{\log(1+\lambda W)} &\geq \lambda \Exp{}{W} + \Exp{}{W^2} \roundbrack{\log(1-\lambda)+\lambda}.
\end{aligned}
\end{equation*}
Finally, using that $\log(1-x)+x\geq -x^2/(2(1-x))$ for $x\in [0,1)$, we get:
\begin{equation*}
\begin{aligned}
    \Exp{H_1}{\log(1+\lambda^\star W)} \geq \frac{(\Exp{H_1}{W})^2/2}{\Exp{H_1}{W}+\Exp{H_1}{W^2}} > 0,
\end{aligned}
\end{equation*}
where recall that:
\begin{equation*}
    \lambda^\star = \frac{\Exp{}{W}}{\Exp{}{W}+\Exp{}{W^2}}\in (0,1).
\end{equation*}
The wealth process corresponding to the oracle test satisfies:
\begin{equation*}
    \calK_t = \prod_{i=1}^{t} (1+\lambda^\star f^\star(Z_{2i-1},Z_{2i})) = \exp\roundbrack{t\cdot \frac{1}{t}\sum_{i=1}^t\log(1+\lambda^\star f^\star(Z_{2i-1},Z_{2i}))}.
\end{equation*}
By the Strong Law of Large Numbers (SLLN), we have:
\begin{equation*}
    \frac{1}{t}\sum_{i=1}^t\log(1+\lambda^\star f^\star(Z_{2i-1},Z_{2i})) \convas \Exp{}{\log(1+\lambda^\star W)}>0.
\end{equation*}
Hence, we get that $\calK_t \convas +\infty$, and hence, the oracle test is consistent.
\end{enumerate}
\end{proof}

\thmhsicskitconsistency*
\begin{remark}
    While it will be clear from the proof that the i.i.d.\ assumption is sufficient but not necessary for asymptotic power one, the more relaxed sufficient conditions are slightly technical to state and thus omitted.
\end{remark}

\begin{proof}
    \begin{enumerate}
    \item First, let us show that the predictable estimates of the oracle payoff function are bounded when the scaling factor $s=1/2$ is used. Recall that:
    \begin{equation}\label{eq:payoff_proof_rep}
\begin{aligned}
    f_{t}((x',y'),(x,y)) &= \frac{1}{2}\roundbrack{\hat{g}_t(x',y')-\hat{g}_t(x',y)+\hat{g}_t(x,y)-\hat{g}_t(x,y')}\\
    &= \frac{1}{2}\anglebrack{\hat{g}_t,\varphi(x')\otimes\psi(y')-\varphi(x')\otimes\psi(y)+\varphi(x)\otimes\psi(y)-\varphi(x)\otimes\psi(y')}_{\calG\otimes\calH}\\
    &= \frac{1}{2}\anglebrack{\hat{g}_t,\roundbrack{\varphi(x')-\varphi(x)}\otimes \roundbrack{\psi(y')-\psi(y)}}_{\calG\otimes\calH}.
\end{aligned}
\end{equation}
Note that:
\begin{equation*}
\begin{aligned}
\abs{f_{t}((x',y'),(x,y))} & \leq  \frac{1}{2}\norm{\calG\otimes\calH}{\hat{g}_t}\norm{\calG\otimes\calH}{(\varphi(x')-\varphi(x))\otimes(\psi(y')-\psi(y))}\\
     &\leq \frac{1}{2}\norm{\calG\otimes\calH}{(\varphi(x')-\varphi(x))\otimes(\psi(y')-\psi(y))}\\
     & = \frac{1}{2}\norm{\calG}{\varphi(x')-\varphi(x)}\cdot \norm{\calH}{\psi(y')-\psi(y)}\\
     &= \frac{1}{2}\sqrt{2(1-k(x',x))}\cdot \sqrt{2(1-l(y',y))}\\
     & =1.
\end{aligned}
\end{equation*}
and hence, $f_{t}((x',y'),(x,y))\leq [-1,1]$. Next, we show that constructed payoff function yields a fair bet. Indeed, we have that:
\begin{equation*}
    \Exp{}{f_{t}(Z_{2t-1},Z_{2t})\mid \calF_{t-1}} = \anglebrack{\hat{g}_t,\mu_{XY}-\mu_X\otimes\mu_Y}_{\calG\otimes\calH},
\end{equation*}
and in particular, the above implies that $\Exp{H_0}{f_{t}(Z_{2t-1},Z_{2t})\mid \calF_{t-1}} = 0$ for $H_0$ in~\eqref{eq:null_iid}. For $H_0$ in~\eqref{eq:null_noniid_drift}, it is easy to see that the result holds using the form~\eqref{eq:payoff_proof_rep}. We use that $X_{2t-1}\indep Y_{2t-1}$, $X_{2t}\indep Y_{2t}$, $X_{2t}\indep Y_{2t-1}$, $X_{2t-1}\indep Y_{2t}$, and the fact that at least one of the marginal distributions does not change.
Next, we show that for all strategies for selecting betting fractions that are considered in this work, the resulting wealth process is a nonnegative martingale. In case aGRAPA/ONS strategies are used, the resulting wealth process is clearly a nonnegative martingale since betting fractions are predictable. The mixed wealth process $\roundbrack{\calK^{\mathrm{mixed}}_t}_{t\geq 1}$ is a nonnegative martingale under the null $H_0$, and hence
\begin{equation*}
    \begin{aligned}
        \Exp{H_0}{\calK^{\mathrm{mixed}}_t\mid \calF_{t-1}} &= \Exp{}{\int_0^1\calK_{t-1}^\lambda(1+\lambda f_t(Z_{2t-1},Z_{2t}))\nu(\lambda)d\lambda\mid \calF_{t-1} }\\
        &= \int_0^1\calK_{t-1}^\lambda\Exp{H_0}{1+\lambda f_t(Z_{2t-1},Z_{2t})\mid \calF_{t-1}} \nu(\lambda)d\lambda\\
        &= \int_0^1\calK_{t-1}^\lambda \nu(\lambda)d\lambda\\
        &= \calK^{\mathrm{mixed}}_{t-1},
    \end{aligned}
\end{equation*}
where the interchange of conditional expectation and integration is justified by the conditional monotone convergence theorem. The assertion of the Theorem then follows directly from Ville's inequality (Proposition~\ref{thm:villes_ineq}) when $a=1/\alpha$.

    \item Next, we establish the consistency of HSIC-based SKIT with ONS betting strategy.  
Under the ONS betting strategy, for any sequence of outcomes $\roundbrack{f_i}_{i\geq 1}$, $f_i\in[-1,1]$, $i\geq 1$, it holds that (see the proof of Theorem 1 in~\citep{cutkosky2018bb_reductions}):
\begin{equation}\label{eq:ons_regret}
    \log \calK_t(\lambda_0)-\log \calK_t = O\roundbrack{\log\roundbrack{\sum_{i=1}^t f_i^2}},
\end{equation}
where $\calK_t(\lambda_0)$ is the wealth of any constant betting strategy $\lambda_0\in[-1/2,1/2]$ and $\calK_t$ is the wealth corresponding to the ONS betting strategy. It follows that the wealth process corresponding to the ONS betting strategy satisfies
\begin{equation}\label{eq:ons_lower_bound}
    \frac{\log \calK_t}{t} \geq \frac{\log \calK_t(\lambda_0)}{t} - C\cdot \frac{\log t}{t},
\end{equation}
for some absolute constant $C>0$. Next, let us consider:
\begin{equation*}
    \lambda_0 = \frac{1}{2} \roundbrack{\roundbrack{\frac{\sum_{i=1}^t f_i}{\sum_{i=1}^t f_i^2} \wedge 1} \vee 0}.
\end{equation*}
We obtain:
\begin{equation}\label{eq:ons_general_lb}
\begin{aligned}
    \frac{\log\calK_t(\lambda_0)}{t} &= \frac{1}{t}\sum_{i=1}^t \log(1+\lambda_0 f_i) \\
    &\overset{\mathrm{(a)}}{\geq} \frac{1}{t}\sum_{i=1}^t (\lambda_0 f_i - \lambda_0^2 f_i^2)\\
    &= \roundbrack{\frac{\frac{1}{t}\sum_{i=1}^t f_i}{4}\vee 0 }\cdot \roundbrack{\frac{\frac{1}{t}\sum_{i=1}^t f_i}{\frac{1}{t}\sum_{i=1}^t f_i^2} \wedge 1},
\end{aligned}
\end{equation}
where in (a) we used\footnote{A slightly better constant for the growth rate (0.3 in place of 1/4) can be obtained by using the inequality: $\log(1+x)\geq x-\frac{5}{6}x^2$, that holds $\forall x\in [-0.5,0.5]$.} that $\log(1+x)\geq x-x^2$ for $x\in [-1/2,1/2]$. From Lemma~\ref{lem:mom_convergence}, it follows for $f_i=f_i(Z_{2i-1},Z_{2i})$ that:
\begin{equation}
    \frac{\frac{1}{t}\sum_{i=1}^t f_i(Z_{2i-1},Z_{2i})}{4}\cdot \roundbrack{\frac{\frac{1}{t}\sum_{i=1}^t f_i(Z_{2i-1},Z_{2i})}{\frac{1}{t}\sum_{i=1}^t (f_i(Z_{2i-1},Z_{2i}))^2} \wedge 1} \overset{\mathrm{a.s.}}{\longrightarrow} \frac{\Exp{}{f^\star(Z_{1},Z_{2})}}{4}\cdot \roundbrack{\frac{\Exp{}{f^\star(Z_{1},Z_{2})}}{\Exp{}{(f^\star(Z_{1},Z_{2}))^2}} \wedge 1}.
\end{equation}
Further, note that: 
\begin{equation*}
    \Exp{}{f^\star(Z_{1},Z_{2})} =\norm{\calG\otimes\calH}{\mu_{XY} - \mu_X\otimes\mu_Y} = \sqrt{\mathrm{HSIC}(P_{XY};\calG,\calH)},
\end{equation*}
which is positive if the $H_1$ is true. Hence, using~\eqref{eq:ons_lower_bound}, we deduce that the growth rate of the ONS wealth process satisfies
\begin{equation}
    \liminf_{t\to\infty} \frac{\log \calK_t}{t} \geq \frac{\Exp{}{f^\star(Z_{1},Z_{2})}}{4}\cdot \roundbrack{\frac{\Exp{}{f^\star(Z_{1},Z_{2})}}{\Exp{}{(f^\star(Z_{1},Z_{2}))^2}} \wedge 1}.
\end{equation}
We conclude that the test is consistent, that is, if $H_1$ is true, then $\Prob(\tau<\infty)=1$.

\end{enumerate}
\end{proof}

\proporaclebounds*
    
    \begin{proof}
    We start by establishing the upper bound in~\eqref{eq:oracle_lw_bounds}. 
    The fact that $S^\star\leq \Exp{}{f^\star(Z_{1},Z_{2})}/2$ trivially follows from $\Exp{}{\log(1+\lambda f^\star(Z_{1},Z_{2}))}\leq \lambda \Exp{}{f^\star(Z_{1},Z_{2})}\leq \Exp{}{f^\star(Z_{1},Z_{2})}/2$. Since for any $x\in[-0.5,0.5]$, it holds that: $ \log(1+x)\leq x-3x^2/8$, we know that:
    \begin{equation}\label{eq:ub}
    \begin{aligned}
        S^\star &\leq \max_{\lambda\in [-0.5,0.5]} \roundbrack{\lambda \Exp{}{f^\star(Z_{1},Z_{2})}-\frac{3}{8}\lambda^2\Exp{}{(f^\star(Z_{1},Z_{2}))^2}},
    \end{aligned}
    \end{equation}
and by solving the maximization problem, we get the upper bound:
\begin{equation}\label{eq:orig_up_bound}
    S^\star \leq \frac{2}{3} \frac{(\Exp{}{f^\star(Z_1,Z_2)})^2}{\Exp{}{(f^\star(Z_1,Z_2))^2}},
\end{equation}
assuming $(\Exp{}{f^\star(Z_1,Z_2)})^2/\Exp{}{(f^\star(Z_1,Z_2))^2}\leq 3/8$. On the other hand, it always holds that: $S^\star \leq \Exp{}{f^\star(Z_1,Z_2)}/2$. To obtain the claimed bound, we multiply the RHS of~\eqref{eq:orig_up_bound} by two, which completes the proof of~\eqref{eq:oracle_lw_bounds}.

        \end{proof}

\thmnoniid*

\begin{proof}

Recall that at round $t$, the payoff function has form:
\begin{align*}
    f_t((X_{2t-1},Y_{2t-1}),(X_{2t},Y_{2t})) = \frac{1}{2}\langle \hat{g}_t, (\varphi(X_{2t})-\varphi(X_{2t-1})) \otimes (\psi(Y_{2t})-\psi(Y_{2t-1})) \rangle_{\calG\otimes\calH}.
\end{align*}
Let $\calD_t = \curlybrack{(X_i, Y_i)}_{i\leq 2(t-1)}$. To establish validity, we need to show that under $H_0$ in~\eqref{eq:null_noniid},
\begin{equation}\label{eq:suf_cond_non_iid}
    \Exp{}{f_t((X_{2t-1},Y_{2t-1}),(X_{2t},Y_{2t}))\mid \calD_t}=0,
\end{equation}
and hence it suffices to show that:
\begin{equation*}
    \Exp{}{(\varphi(X_{2t})-\varphi(X_{2t-1})) \otimes (\psi(Y_{2t})-\psi(Y_{2t-1}))\mid \calD_t}=0.
\end{equation*}
Due to independence under the null $H_0$, we have:
\begin{align*}
    \Exp{}{\varphi(X_{2t-1})\otimes \psi(Y_{2t-1})\mid \calD_t} &= \Exp{}{\varphi(X_{2t-1})\mid \calD_t}\otimes \Exp{}{\psi(Y_{2t-1})\mid \calD_t}=:\mu_X^{2t-1} \otimes \mu_Y^{2t-1},\\
    \Exp{}{\varphi(X_{2t})\otimes \psi(Y_{2t})\mid \calD_t} &= \Exp{}{\varphi(X_{2t})\mid \calD_t}\otimes \Exp{}{\psi(Y_{2t})\mid \calD_t}=: \mu_X^{2t} \otimes \mu_Y^{2t},
\end{align*}
Consider one of the cross-terms $\varphi(X_{2t})\otimes \psi(Y_{2t-1})$. We have the following:
\begin{align*}
    \Exp{}{\varphi(X_{2t})\otimes \psi(Y_{2t-1})\mid \calD_t} &\overset{a}{=} \Exp{}{\Exp{}{\varphi(X_{2t})\otimes \psi(Y_{2t-1})\mid X_{2t-1},\calD_t}\mid \calD_t}\\
    &\overset{b}{=} \Exp{}{\Exp{}{\varphi(X_{2t})\mid X_{2t-1},\calD_t}\otimes \Exp{}{ \psi(Y_{2t-1})\mid X_{2t-1},\calD_t}\mid \calD_t}\\
    &\overset{c}{=} \Exp{}{\Exp{}{\varphi(X_{2t})\mid X_{2t-1},\calD_t}\otimes \Exp{}{ \psi(Y_{2t-1})\mid \calD_t}\mid \calD_t}\\
    &\overset{d}{=} \Exp{}{\Exp{}{\varphi(X_{2t})\mid X_{2t-1},\calD_t}\mid \calD_t}\otimes \Exp{}{ \psi(Y_{2t-1})\mid \calD_t}\\
    &\overset{e}{=} \Exp{}{\varphi(X_{2t})\mid \calD_t}\otimes \Exp{}{ \psi(Y_{2t-1})\mid \calD_t}\\
    &\overset{f}{=} \mu_X^{2t} \otimes \mu_Y^{2t-1}.
\end{align*}
In the above, (a) uses the law of iterated expectations and conditioning on $X_{2t-1}$, (b) uses the assumption~\eqref{eq:cross_links} about conditional independence, (c) uses the independence null assumption~\eqref{eq:null_iid}, (d) uses that $\Exp{}{ \psi(Y_{2t-1})\mid \calD_t}$ is $\sigma(\calD_t)$-measurable, (e) uses the law of iterated expectations, and (f) uses the definitions of the mean embeddings of conditional distributions. An analogous argument can be used to deduce:
\begin{equation*}
    \Exp{}{\varphi(X_{2t-1})\otimes \psi(Y_{2t})\mid\calD_t} = \mu_X^{2t-1} \otimes \mu_Y^{2t}.
\end{equation*}
We get that:
\begin{align*}
    \Exp{}{(\varphi(X_{2t})- \varphi(X_{2t-1}))\otimes (\psi(Y_{2t})- \psi(Y_{2t-1}))\mid\calD_t} &= \mu_X^{2t-1} \otimes \mu_Y^{2t-1}+\mu_X^{2t} \otimes \mu_Y^{2t}-\mu_X^{2t-1} \otimes \mu_Y^{2t}-\mu_X^{2t} \otimes \mu_Y^{2t-1}\\
     &= (\mu_X^{2t}-\mu_X^{2t-1}) \otimes (\mu_Y^{2t}-\mu_Y^{2t-1}),
\end{align*}
and hence, if either $(X_{2t-1}, X_{2t})$ or $(Y_{2t-1}, X_{2t})$ are exchangeable conditional on $\calD_t$, it follows that either $\mu_X^{2t}=\mu_X^{2t-1}$ or $\mu_Y^{2t}=\mu_Y^{2t-1}$ respectively. This, in turn, implies that~\eqref{eq:suf_cond_non_iid} holds, and hence, the result follows.

\end{proof}

\subsection{Proofs for Section~\ref{sec:alt_measures}}\label{appsubsec:proofs_sec_3}

\thmvaliditycocokcc*

\begin{proof}
 It suffices to show that the proposed payoff functions are bounded. The rest of the proof follows will follow the same steps as the proof of Theorem~\ref{thm:hsic_skit} (for a stream of independent observations) or Theorem~\ref{thm:non_iid} (for time-varying independence null), and we omit the details. Note that:
\begin{equation*}
\begin{aligned}
    \abs{\hat{h}_{t}(y')-\hat{h}_{t}(y)} &= \abs{\langle \hat{h}_{t}, \psi(y')\rangle_{\calH}-\langle \hat{h}_{t}, \psi(y)\rangle_{\calH}}\\
    &= \abs{\langle \hat{h}_{t}, \psi(y')- \psi(y)\rangle_{\calH}}\\
    &\leq \norm{\calH}{\hat{h}_{t}}\norm{\calH}{\psi(y')- \psi(y)}\\
    &\leq\norm{\calH}{\psi(y')- \psi(y)}\\
    &=\sqrt{2(1-l(y,y'))}\\
    &\leq \sqrt{2},
\end{aligned}
\end{equation*}
where we used that $\norm{\calH}{\hat{h}_{t}}\leq 1$ due to normalization. Analogous bound holds for $\abs{\hat{g}_{t}(x')-\hat{g}_{t}(x)}$. We conclude that any predictable estimate of the oracle payoff function for COCO (or KCC) satisfies
\begin{equation*}
    \abs{f_t ((x',y'),(x,y))}\leq 1,
\end{equation*}
as proposed. The fact that the payoff function is fair trivially follows from the definition. Regarding the existence of the oracle payoff, whose mean is positive under $H_1$ in~\eqref{eq:alt_iid}, note that if $k$ and $l$ are characteristic kernels, then COCO and KCC satisfy the characteristic condition~\eqref{eq:charac_cond}; see \citet{bach2003kernel_cca,gretton2005kernel_coco,gretton05a}. Hence, the result follows from Theorem~\ref{thm:oracle_test}. This completes the proof.

\end{proof}

\subsection{Proofs for Section~\ref{sec:alt_bet}}\label{appsubsec:proofs_alt_bet}

\thmvaliditysym*

\begin{proof}
For any $t\geq 1$, we have that the payoffs defined in~\eqref{eq:tanh_bet}, \eqref{eq:rank_bet}, and~\eqref{eq:pred_bet} are bounded: $f_{t}(w)\in [-1,1]$, $\forall w\in \Real$. Due to Proposition~\ref{prop:symmetry}, we know that, under the null, $W_t$ is a random variable that is symmetric around zero (conditional on $\calF_{t-1}$). Hence, for the composition approach, it trivially follows that $\Exp{H_0}{f^{\mathrm{odd}}_t(W_t)\mid \calF_{t-1}}=0$ since a composition with an odd function is used. For the rank and predictive approaches, we use the fact that, under the null, $\mathrm{sign}(W_t)\indep \abs{W_t}\mid \calF_{t-1}$. Since, $\Exp{H_0}{\mathrm{sign}(W_t)\mid \calF_{t-1}}=0$, it then follows that $\Exp{H_0}{f^{\mathrm{rank}}_t(W_t)\mid \calF_{t-1}}=0$. Using that $\mathrm{sign}(W_t)\indep \abs{W_t}\mid \calF_{t-1}$ and by conditioning on the sign of $W_t$, we get:
\begin{equation*}
\begin{aligned}
    \Exp{H_0}{\ell_t(W_t)\mid\calF_{t-1}} &= \frac{1}{2}\Prob_{H_0}\roundbrack{p_t(\abs{W_t})\geq 1/2}+\frac{1}{2}\Prob_{H_0}\roundbrack{p_t(\abs{W_t})< 1/2} =\frac{1}{2}.
\end{aligned}
\end{equation*}
Hence $\Exp{H_0}{1-2\ell_t(W_t)\mid\calF_{t-1}}=0$. The rest of the proof regarding the validity of the symmetry-based SKITs follows the same steps as the proof of Theorem~\ref{thm:hsic_skit}, and we omit the details.

\end{proof}

\clearpage

\section{Selecting Betting Fractions}\label{appsec:betting_fractions}

As alluded to in Remark~\ref{rmk:kelly_bet}, sticking to a single fixed betting fraction, $\lambda_t=\lambda\in[0,1]$, $t\geq 1$, may result in a wealth process that either has a sub-optimal growth rate under the alternative or tends to zero almost surely (see Figure~\ref{fig:two_seeds}). \emph{Mixing} over different betting fractions is a simple approach that often works well in practice. Given a fine grid of values: $\Lambda = \curlybrack{\lambda^{(1)},\dots,\lambda^{(J)}}$, e.g., uniformly spaced values on the unit interval, consider 
\begin{equation}\label{eq:mix_method_wealth}
    \calK_t^{\mathrm{mixed}} = \frac{1}{\abs{\Lambda}} \sum_{\lambda^{(j)}\in\Lambda} \calK_t(\lambda^{(j)}),
\end{equation}
where $\roundbrack{\calK_t(\lambda^{(j)})}_{t\geq 0}$ is a wealth process corresponding to a constant-betting strategy with betting fraction $\lambda^{(j)}$ \footnote{Practically, it is advisable to start with a coarse grid (small $J$) at small $t$ and occasionally add another grid point, so that the grid becomes finer over time. Whenever a grid point is added, it is like adding another stock to a portfolio, and the wealth must be appropriately redistributed; we omit the details for brevity.}.

\begin{figure}[!htb]
\begin{center}
        \subfigure[]{
                \includegraphics[width=0.425\linewidth]{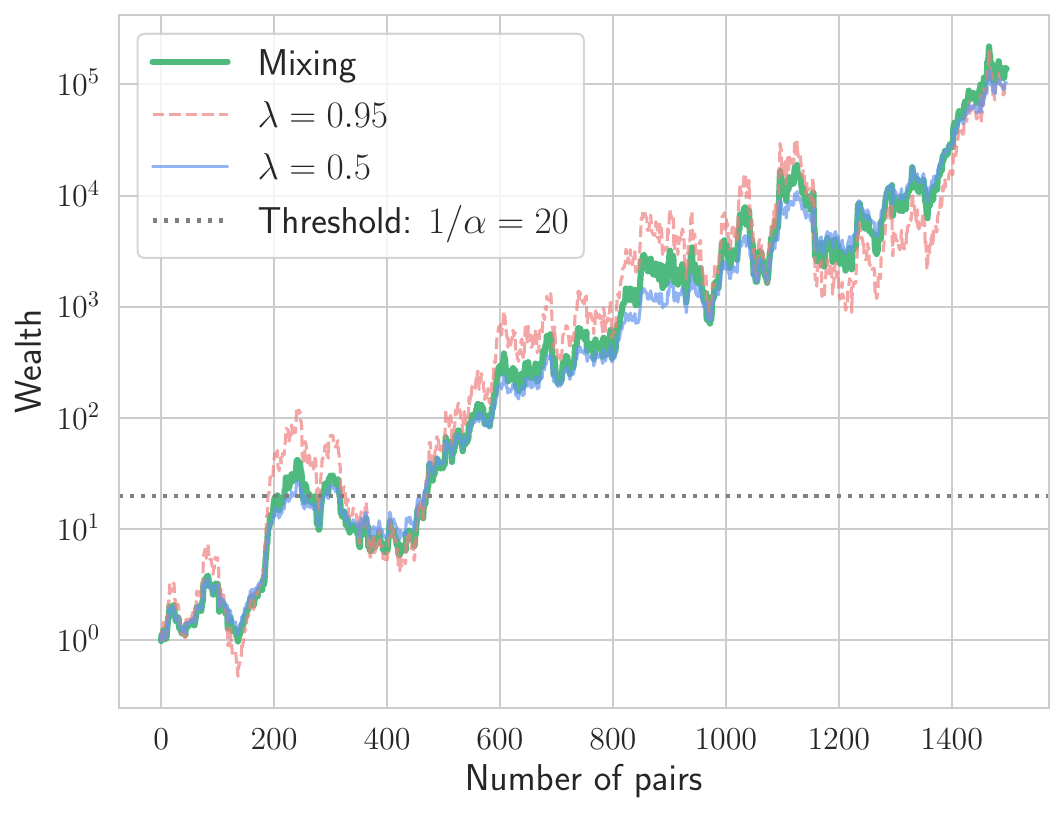}}
        \subfigure[]{
            \includegraphics[width=0.425\linewidth]{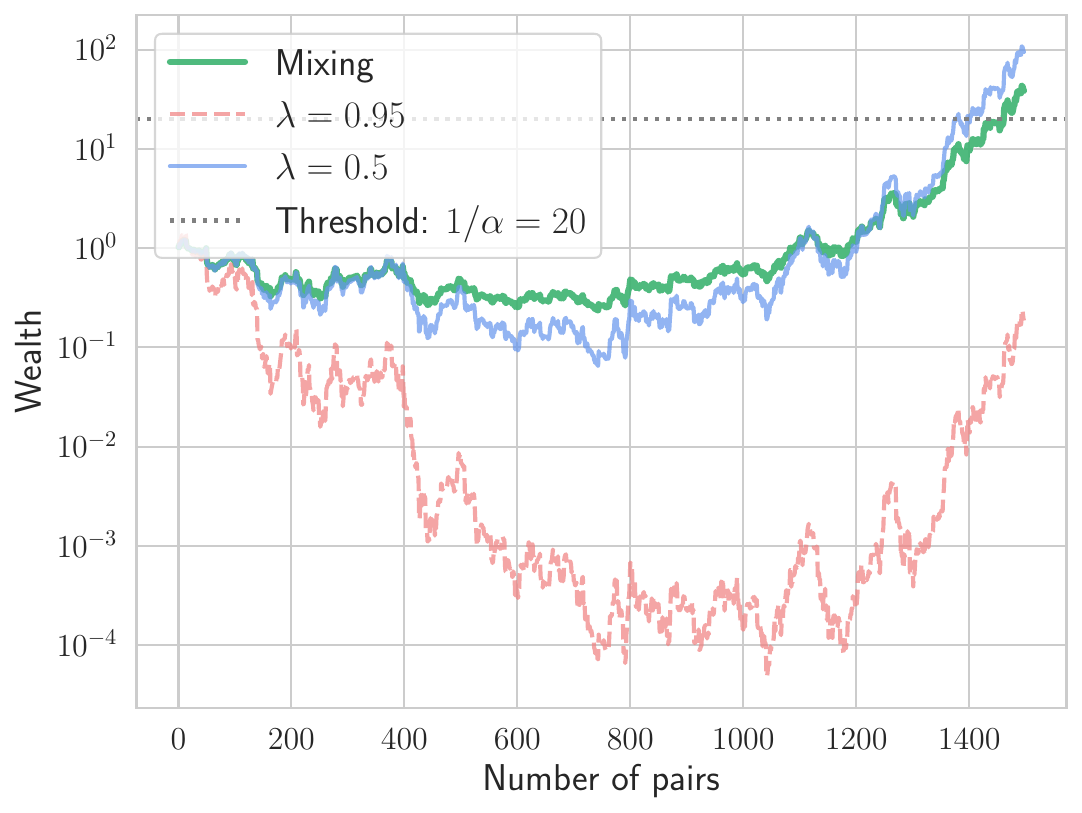}}
        \caption{SKIT with HSIC payoff function on two particular realizations of streams of dependent data: $Y_t=0.1\cdot X_t+\varepsilon_t$, $X_t,\varepsilon_t\sim \mathcal{N}(0,1)$. For both cases, we consider a mixed wealth process for $\Lambda = \curlybrack{0.05,0.1,\dots,0.95}$. We observe that the mixed wealth process follows closely the best of constant-betting strategies with $\lambda\in\curlybrack{0.5,0.95}$.}
        \label{fig:two_seeds}
\end{center}
\end{figure}

While mixing often works well in practice, it introduces additional tuning hyperparameters, e.g., grid size. We consider two compelling approaches for the selection of betting fractions in a predictable way, meaning that $\lambda_t$ depends only on $\curlybrack{(X_i, Y_i)}_{i\leq 2(t-1)}$. In addition to the ONS strategy (Algorithm~\ref{alg:ons}), we also consider \emph{aGRAPA} strategy (Algorithm~\ref{alg:aGRAPA}). The idea that effective betting strategies are ones that maximize a gambler's expected log capital dates back to early works of~\citet{kelly1956new_interp} and~\citet{breiman1962optimal_gambling}. Assuming that the same betting fraction is used, the log capital after round $(t-1)$ is
\begin{equation*}
\begin{aligned}
    \log \calK_{t-1}(\lambda) = \sum_{i=1}^{t-1} \log\roundbrack{1+\lambda f_{i}(Z_{2i-1},Z_{2i})}.
\end{aligned}
\end{equation*}

\begin{algorithm}[!htb]
\caption{aGRAPA strategy for selecting betting fractions}\label{alg:aGRAPA}
\begin{algorithmic}
\State \textbf{Input:} sequence of payoffs $\roundbrack{f_t(Z_{2t-1},Z_{2t})}_{t\geq 1}$, $\lambda^{\mathrm{aGRAPA}}_1 = 0$, $\mu_0^{(1)}=0$, $\mu_0^{(2)}=1$, $c=0.9$.
\For {$t=1,2,\dots$}
\State Set $\mu^{(1)}_t = \mu^{(1)}_{t-1}+f_t(Z_{2t-1},Z_{2t})$;
\State Set $\mu^{(2)}_t = \mu^{(2)}_{t-1} + (f_t(Z_{2t-1},Z_{2t}))^2$;
\State Set $\lambda^{\mathrm{aGRAPA}}_{t+1} = c \wedge\roundbrack{0\vee\roundbrack{\mu^{(1)}_t/\mu^{(2)}_t}}$;
\EndFor
\end{algorithmic}
\end{algorithm}

Following \citet{ws2020est_means}, we set the derivative to zero and use Taylor's expansion 
to get
\begin{equation*}
    \lambda^{\mathrm{aGRAPA}}_{t} =  \roundbrack{\roundbrack{\frac{\sum_{i=1}^{t-1} f_{i}(Z_{2i-1},Z_{2i})}{\sum_{i=1}^{t-1} \roundbrack{f_{i}(Z_{2i-1},Z_{2i})}^2}}\vee 0}\wedge c.
\end{equation*}
Truncation at zero is inspired by the fact that $\Exp{H_1}{f^\star(Z_{2t-1},Z_{2t})\mid\calF_{t-1}}>0$, whereas truncation at $c\in(0,1]$ (e.g., $c=0.9$) is necessary to guarantee that the wealth process is indeed nonnegative.

\clearpage

\section{Omitted Details for Sections~\ref{sec:seq_indep_testing} and~\ref{sec:alt_measures}}\label{appsec:ommited_der}

In this section, we complement the material presented in the main paper by deriving
the forms of the witness functions for the dependence criteria considered in this work.



\paragraph{Oracle Witness Function for HSIC.} Let us derive the form of the oracle witness function for HSIC. Note that:
\begin{equation*}
    \begin{aligned}
        &\sup_{g:\|g\|_{\calG\otimes\calH}\leq 1}\squarebrack{\Exp{P_{XY}}{g(X,Y)}-\Exp{P_X\times P_Y}{g(X',Y')}} \\
        =\quad & \sup_{g:\|g\|_{\calG\otimes\calH}\leq 1}\squarebrack{\Exp{P_{XY}}{\langle g, \varphi(X)\otimes \psi(Y)\rangle_{\calG\otimes\calH}}-\Exp{P_X\times P_Y}{\langle g, \varphi(X')\otimes \psi(Y')\rangle_{\calG\otimes\calH}}}\\ 
       =\quad & \sup_{g:\|g\|_{\calG\otimes\calH}\leq 1}\squarebrack{\langle g, \Exp{P_{XY}}{\varphi(X)\otimes \psi(Y)}\rangle_{\calG\otimes\calH}-\langle g,\Exp{P_X\times P_Y}{ \varphi(X')\otimes \psi(Y')}\rangle_{\calG\otimes\calH}}\\
       =\quad & \sup_{g:\|g\|_{\calG\otimes\calH}\leq 1}\squarebrack{\langle g, \mu_{XY}\rangle_{\calG}-\langle g, \mu_{X}\otimes\mu_{Y}\rangle_{\calG\otimes\calH}}\\
       =\quad &\sup_{g:\|g\|_{\calG\otimes\calH}\leq 1}\langle g,\mu_{XY}-\mu_{X}\otimes\mu_{Y}\rangle_{\calG\otimes\calH},
    \end{aligned}
\end{equation*}
from which it is easy to derive the oracle witness function for HSIC.

\begin{remark}\label{rmk:operator_fn_relation}
Note that in~\eqref{eq:hsic_plugin_wf} the witness function is defined as an operator: $\hat{g}_t:\calX\times\calY\to\Real$. To clarify, for any $z=(x,y)\in\calX\times\calY$, we have
\begin{equation*}
\begin{aligned}
    (\hat{\mu}_{XY}-\hat{\mu}_X\otimes\hat{\mu}_Y)(z) 
    = \ & \frac{1}{2(t-1)} \sum_{i=1}^{2(t-1)} k(X_i,x) l(Y_i,y) -  \ \roundbrack{\frac{1}{2(t-1)} \sum_{i=1}^{2(t-1)} k(X_i,x)} \cdot \roundbrack{\frac{1}{2(t-1)}\sum_{i=1}^{2(t-1)} l(Y_i,y)},
\end{aligned}
\end{equation*}
and the denominator in~\eqref{eq:hsic_plugin_wf} can be expressed in terms of kernel matrices $K,L\in\Real^{2(t-1)\times 2(t-1)}$ with entries $K_{ij} = k(X_i,X_j)$, $L_{ij} = l(Y_i,Y_j)$, $i,j\in\curlybrack{1,\dots,2(t-1)}$, as:
\begin{equation*}
    \norm{\calG\otimes\calH}{\hat{\mu}_{XY}-\hat{\mu}_X\otimes\hat{\mu}_Y} = \frac{1}{2(t-1)}\sqrt{\mathrm{tr}(KHLH)},
\end{equation*}
where $H=\textbf{I}_{2(t-1)}-(1/(2(t-1))\textbf{1}\textbf{1}^\top$ is the centering projection matrix.
\end{remark}

\begin{remark}
    While the empirical witness functions for COCO/KCC~\eqref{eq:coco_witness_fns} are defined as operators, we use those as functions in the definition of the corresponding payoff function. To clarify, for any $x\in\calX$ and $y\in\calY$, we have
    \begin{equation*}
    \begin{aligned}
        \hat{g}_t(x) &= \sum_{i=1}^{2(t-1)} \alpha_i \roundbrack{k(X_i,x)-\frac{1}{2(t-1)}\sum_{j=1}^{2(t-1)} k(X_j,x)},\\ 
        \hat{h}_t(y) &= \sum_{i=1}^{2(t-1)} \beta_i \roundbrack{l(Y_i,y)-\frac{1}{2(t-1)}\sum_{j=1}^{2(t-1)} l(Y_j,y)}.
    \end{aligned}
\end{equation*}
\end{remark}

\paragraph{Minibatched Payoff Function for HSIC.} The minibatched payoff function at round $t$ has the following form:
    \begin{equation*}
        f_t(Z_{b(t-1)+1},\dots,Z_{bt})= \frac{1}{b}\sum_{i=1}^b \hat{g}_t (X_{b(t-1)+i},Y_{b(t-1)+i}) - \frac{1}{b(b-1)} \sum_{\substack{i,j=1 \\ i\neq j}}^{b} \hat{g}_t (X_{b(t-1)+i},Y_{b(t-1)+j}).
    \end{equation*}
Note that:
    \begin{align*}
        f_t(Z_{b(t-1)+1},\dots,Z_{bt}) &= \frac{1}{b}\sum_{i=1}^b \langle \hat{g}_t, \varphi(X_{b(t-1)+i})\otimes\psi(Y_{b(t-1)+i})\rangle_{\calG\otimes\calH}\\
        &- \frac{1}{b(b-1)} \sum_{\substack{i,j=1 \\ i\neq j}}^{b} \langle \hat{g}_t, \varphi(X_{b(t-1)+i})\otimes\psi(Y_{b(t-1)+j})\rangle_{\calG\otimes\calH}\\
        &=\anglebrack{\hat{g}_t, \frac{1}{2b(b-1)}\sum_{\substack{i,j=1\\ i\neq j}}^b \roundbrack{\varphi(X_{b(t-1)+i})-\varphi(X_{b(t-1)+j})}\otimes\roundbrack{\psi(Y_{b(t-1)+i}) -\psi(Y_{b(t-1)+j})}}_{\calG\otimes\calH}.
    \end{align*}
    Let $\calF_{t-1}' = \sigma(\curlybrack{(X_{i},Y_{i})}_{i\leq b(t-1)})$. We have that:
    \begin{equation*}
        \Exp{}{f_t(Z_{b(t-1)+1},\dots,Z_{bt})\mid \calF_{t-1}'} = \langle \hat{g}_t, \mu_{XY}-\mu_X\otimes\mu_Y\rangle_{\calG\otimes\calH},
    \end{equation*}
    and in particular, $\Exp{H_0}{f_t(Z_{b(t-1)+1},\dots,Z_{bt})\mid \calF_{t-1}'}=0$ if the null $H_0$ in~\eqref{eq:null_iid} is true. It suffices to show that the payoff is bounded. Since $\norm{\calG\otimes\calH}{\hat{g}_t}=1$, we can easily deduce that:
    \begin{align*}
        \abs{f_t(Z_{b(t-1)+1},\dots,Z_{bt})} &\leq \frac{1}{2b(b-1)}\sum_{\substack{i,j=1\\ i\neq j}}^b \norm{\calG\otimes\calH}{\roundbrack{\varphi(X_{b(t-1)+i})-\varphi(X_{b(t-1)+j})}\otimes\roundbrack{\psi(Y_{b(t-1)+i}) -\psi(Y_{b(t-1)+j})}}\\
        &= \frac{1}{2b(b-1)}\sum_{\substack{i,j=1\\ i\neq j}}^b \norm{\calG}{\varphi(X_{b(t-1)+i})-\varphi(X_{b(t-1)+j})}\norm{\calH}{\psi(Y_{b(t-1)+i}) -\psi(Y_{b(t-1)+j})}\\
        &= \frac{1}{2b(b-1)}\sum_{\substack{i,j=1\\ i\neq j}}^b \sqrt{2(1-k(X_{b(t-1)+i},X_{b(t-1)+j}))}\sqrt{2(1-l(Y_{b(t-1)+i},Y_{b(t-1)+j}))}\\
        &\leq 1.
    \end{align*}
Hence, we conclude that the wealth process constructed using a minibatched version of the payoff function is also a nonnegative martingale.

\begin{example}\label{example:dist_drift_violation}
For $t\geq 1$, consider
\begin{equation*}
    (X_t,Y_t) = \roundbrack{\frac{V_t+1-1/t}{2}, \frac{V_t'+1- 1/t}{2}},
\end{equation*}
where $V_t,V_t'\simiid \mathrm{Ber}(1/2)$. Note that $\calX=\calY\subseteq [0,1]$, which means that a pair of linear kernels, $k(x,x')=xx'$ and $l(y,y')=yy'$ are nonnegative and bounded by one on $\calX$ and $\calY$ respectively. Note that for a linear kernel,
\begin{equation*}
    \hat{g}_t(x,y) = \hat{g}_t \cdot x\cdot y.
\end{equation*}
Hence,
\begin{align*}
    f_t ((X_{2t-1},Y_{2t-1}),(X_{2t},Y_{2t})) 
    &=\frac{\hat{g}_t}{2}\roundbrack{X_{2t}-X_{2t-1}}\roundbrack{Y_{2t}-Y_{2t-1}}\\
    &=\frac{\hat{g}_t}{8}\roundbrack{V_{2t}-V_{2t-1}+\frac{1}{2t(2t-1)}}\roundbrack{V'_{2t}-V'_{2t-1}+\frac{1}{2t(2t-1)}}.
\end{align*}
In particular, $\Exp{}{f_t ((X_{2t-1},Y_{2t-1}),(X_{2t},Y_{2t}))\mid \calF_{t-1}}\neq 0$,
implying that the wealth process $(\calK_t)_{t\geq 0}$ is no longer a nonnegative martingale.
\end{example}

\paragraph{Witness Functions for COCO.} Let $\Phi$ and $\Psi$ be a pair of matrices whose columns represent embeddings of $X_1,\dots,X_{2(t-1)}$ and $Y_1,\dots,Y_{2(t-1)}$, that is, $\varphi(X_i)=k(X_i,\cdot)$ and $\psi(Y_i)=l(Y_i,\cdot)$ for $i=1,\dots,2(t-1)$. Recall that
\begin{equation*}
    \begin{aligned}
        \hat{g} &= \sum_{i=1}^{2(t-1)} \alpha_i \roundbrack{\varphi(X_i)-\frac{1}{2(t-1)}\sum_{j=1}^{2(t-1)} \varphi(X_j)}=\Phi H \alpha,\\
        \hat{h} &= \sum_{i=1}^{2(t-1)} \beta_i \roundbrack{\psi(Y_i)-\frac{1}{2(t-1)}\sum_{j=1}^{2(t-1)} \psi(Y_j)} = \Psi H \beta,
    \end{aligned}
\end{equation*}
where $H=\textbf{I}_{2(t-1)}-\frac{1}{2(t-1)}11^\top$ is the centering projection matrix. We have
\begin{equation*}
\begin{aligned}
    \langle g, \hat{C}_{XY} h\rangle_{\calG} &= \frac{1}{2(t-1)} (\alpha^\top H \Phi^\top) (\Phi H\Psi^\top) (\Psi H\beta) = \frac{1}{2(t-1)} \alpha^\top HKHLH\beta = \frac{1}{2(t-1)} \alpha^\top \tilde{K}\tilde{L}\beta, \\
    \norm{\calG}{g}^2 &= \alpha^\top \tilde{K} \alpha, \\
    \norm{\calH}{h}^2 &= \beta^\top \tilde{L} \beta,
\end{aligned}
\end{equation*}
where $\tilde{K}:=HKH$ and $\tilde{L}:=HLH$ are centered kernel matrices. Hence, the maximization problem in~\eqref{eq:coco_cross_cov} can be expressed as:
\begin{equation}\label{eq:coco_max_prob}
   \begin{aligned}
    \max_{\alpha,\beta} \quad & \frac{1}{2(t-1)}\alpha^\top \tilde{K}\tilde{L}\beta\\
    \text{subject to} \quad & \alpha^\top \tilde{K} \alpha = 1,\quad  \beta^\top \tilde{L} \beta = 1.\\
\end{aligned}    
\end{equation}
After introducing Lagrange multipliers, it can then be shown that $\alpha$ and $\beta$, which solve~\eqref{eq:coco_max_prob}, exactly correspond to the generalized eigenvalue problem~\eqref{eq:gen_eig_coco}.

\paragraph{Witness Functions for KCC.} Introduce empirical covariance operators:
\begin{equation*}
\begin{aligned}
    \hat{C}_{X} &= \frac{1}{2(t-1)}\sum_{i=1}^{2(t-1)} \varphi(X_i)\otimes\varphi(X_i) - \roundbrack{\frac{1}{2(t-1)}\sum_{i=1}^{2(t-1)} \varphi(X_i)}\otimes\roundbrack{\frac{1}{2(t-1)}\sum_{i=1}^n \varphi(X_i)}=\frac{1}{2(t-1)}\Phi H\Phi^\top, \\
    \hat{C}_{Y} &= \frac{1}{n}\sum_{i=1}^{2(t-1)} \psi(Y_i)\otimes\psi(Y_i) - \roundbrack{\frac{1}{2(t-1)}\sum_{i=1}^{2(t-1)} \psi(Y_i)}\otimes\roundbrack{\frac{1}{2(t-1)}\sum_{i=1}^{2(t-1)} \psi(Y_i)} = \frac{1}{2(t-1)}\Psi H\Psi^\top.
\end{aligned}
\end{equation*}
Then the empirical variance terms can be expressed as:
\begin{equation*}
\begin{aligned}
    \EmpVar{}{g(X)} &= \langle g, \hat{C}_{X} g\rangle_{\calG}= \frac{1}{2(t-1)} (\alpha^\top H \Phi^\top) (\Phi H\Phi^\top) (\Phi H\alpha) = \frac{1}{2(t-1)}\alpha^\top \tilde{K}^2 \alpha, \\
    \EmpVar{}{h(Y)} &= \langle h, \hat{C}_{Y} h\rangle_{\calH}= \frac{1}{2(t-1)} (\beta^\top H \Psi^\top) (\Psi H\Psi^\top) (\Psi H\beta) = \frac{1}{2(t-1)}\beta^\top \tilde{L}^2 \beta.
\end{aligned}
\end{equation*}
Thus, an empirical estimator of the kernel canonical correlation~\eqref{eq:kcc_def} can be obtained by solving:
\begin{equation*}
\begin{aligned}
    \max_{\alpha,\beta} \quad & \frac{1}{2(t-1)}\alpha^\top \tilde{K}\tilde{L}\beta\\
    \text{subject to} \quad & \frac{1}{2(t-1)}\alpha^\top \tilde{K}^2 \alpha+ \kappa_1 \alpha^\top \tilde{K} \alpha = 1,\\
    & \frac{1}{2(t-1)}\beta^\top \tilde{L}^2 \beta+ \kappa_2 \beta^\top \tilde{L} \beta = 1.\\
\end{aligned}   
\end{equation*}
After introducing Lagrange multipliers, it can then be shown that $\alpha$ and $\beta$, which solve~\eqref{eq:kcc_def}, correspond to the generalized eigenvalue problem:
\begin{align*}
    \begin{pmatrix}
    0 & \frac{1}{2(t-1)}\tilde{K}\tilde{L} \\ 
    \frac{1}{2(t-1)}\tilde{L}\tilde{K} & 0
    \end{pmatrix}
    \begin{pmatrix}
    \alpha \\ \beta
    \end{pmatrix} &= \gamma \begin{pmatrix}
    \kappa_1\tilde{K}+\frac{1}{2(t-1)}\tilde{K}^2 & 0 \\ 
    0 & \kappa_2\tilde{L}+\frac{1}{2(t-1)}\tilde{L}^2
    \end{pmatrix} \begin{pmatrix}
    \alpha \\ \beta
    \end{pmatrix},
\end{align*}

\clearpage




\section{Additional Simulations}

This section contains: (a) additional experiments on synthetic dataset and (b) data visualizations of the datasets used in this paper.

\subsection{Test of Instantaneous Dependence}\label{appsubsec:inst_dep}

In Figure~\ref{fig:ins_dep_vis}, we demonstrate it is hard to visually tell the difference between independence and dependence under distribution drift setting~\eqref{eq:ind_test_generalization_indep}. See Example~\ref{ex:dist_drift_ex} for details.

\begin{figure}[!htb]
\begin{center}
        \subfigure[Independent noise.]{
            \includegraphics[width=0.45\linewidth]{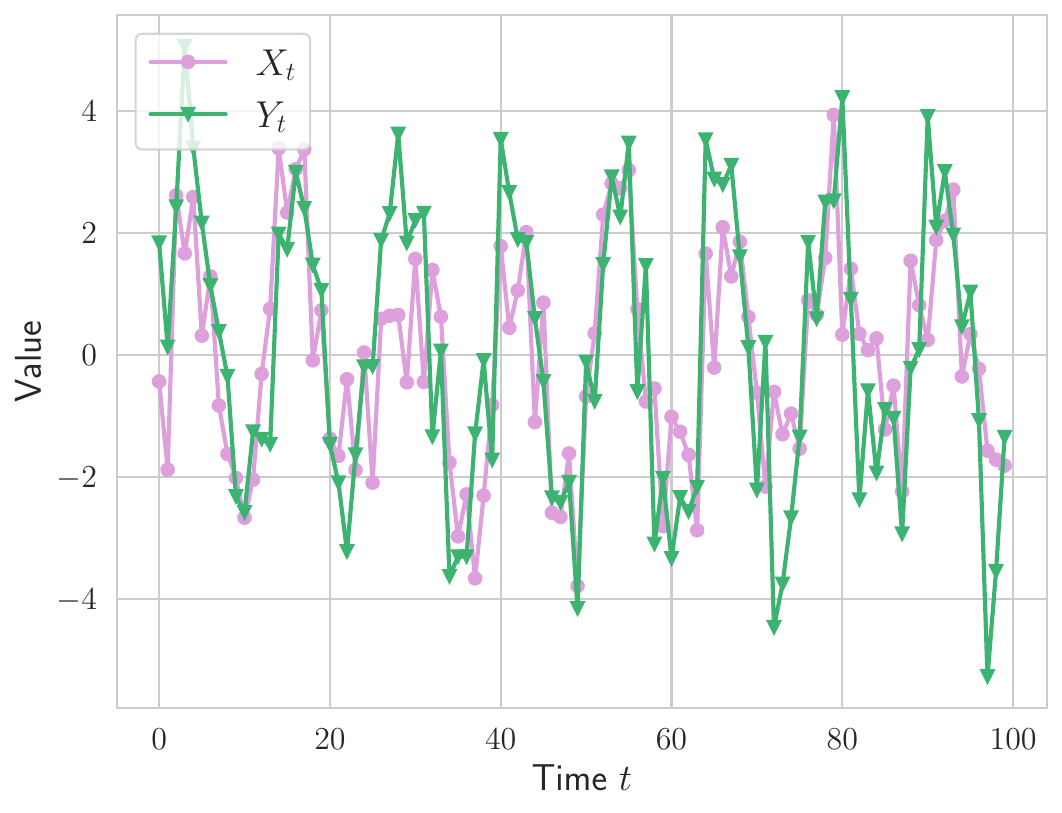}
            \label{subfig:drift_vis_null}}
        \subfigure[Dependent noise.]{
            \includegraphics[width=0.45\linewidth]{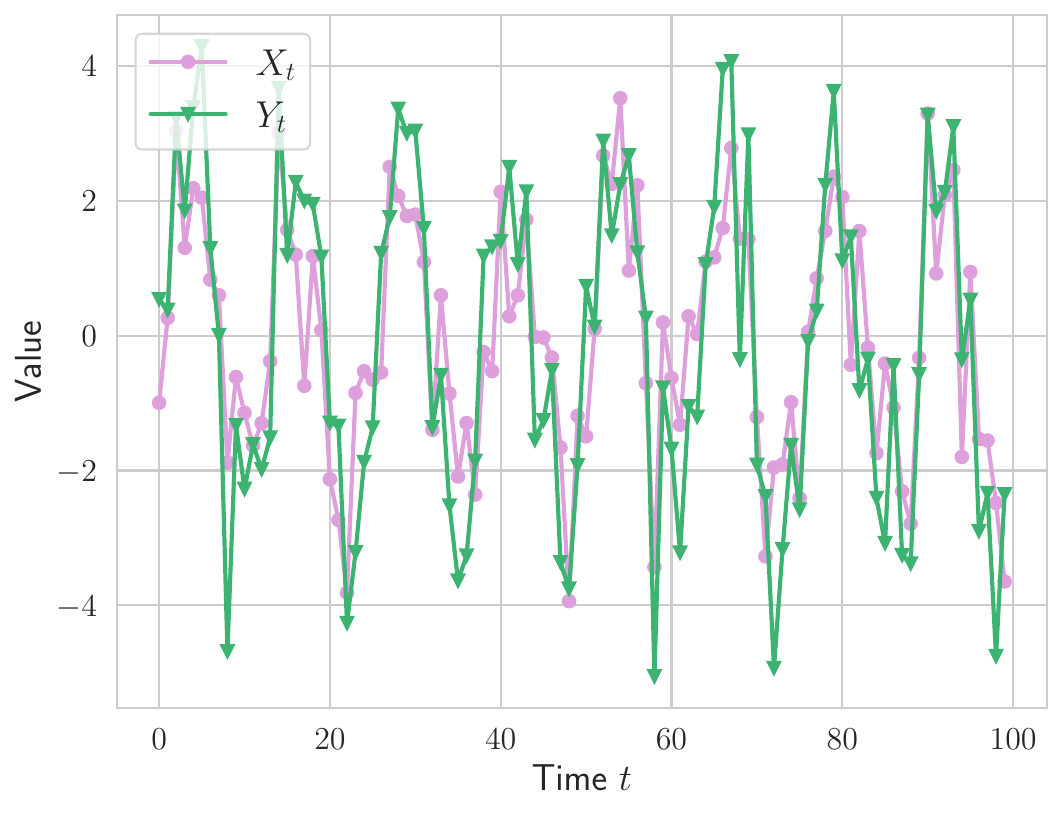}
            \label{subfig:drift_vis_alt}}
        \caption{Sample of independent (subplot (a)) and dependent ($\rho=0.5$, subplot(b)) data according to~\eqref{eq:dist_drift_example}. The purpose of visualizing raw data is to demonstrate that dependence is hard to detect visually, and dependence refers to more than temporal correlation which may be present due to cyclical trends.}
        \label{fig:ins_dep_vis}
\end{center}
\end{figure}

\subsection{Distribution Drift}\label{appsubsec:dist_drift}

In this section, we consider the linear Gaussian model with an underlying distribution drift:
\begin{equation*}
        Y_t =  X_t\beta_t + \varepsilon_t,\quad X_t,\varepsilon_t \sim \mathcal{N}(0,1), \quad t\geq 1,
\end{equation*}
that is, in contrast to the Gaussian linear model (Section~\ref{sec:alt_measures}), $\beta_t$ changes over time. We gradually increase it from $\beta_t=0$ to $\beta_t=0.1$ in increments of 0.02, that is:
\begin{equation*}
\underbrace{\beta_0, \ \dots, \ \beta_{b-1}}_{=0}, \ \underbrace{\beta_b,\ \dots, \ \beta_{2b-1}}_{=0.02}, \ \dots, \ \underbrace{ \beta_{5b-1},\ \dots}_{=0.1}
\end{equation*}
and, starting with $\beta_{5b}$, we keep it equal to 0.1. We consider $b\in \curlybrack{100, 200, 400}$ as possible block sizes. Note that there is a transition from independence (first $b$ datapoints in a stream) to dependence. In Figure~\ref{subfig:gaus_drif}, we show that our test performs well under the distribution drift setting and consistently detects dependence. 
\begin{figure}[!htb]
\centering
        \includegraphics[width=0.425\linewidth]{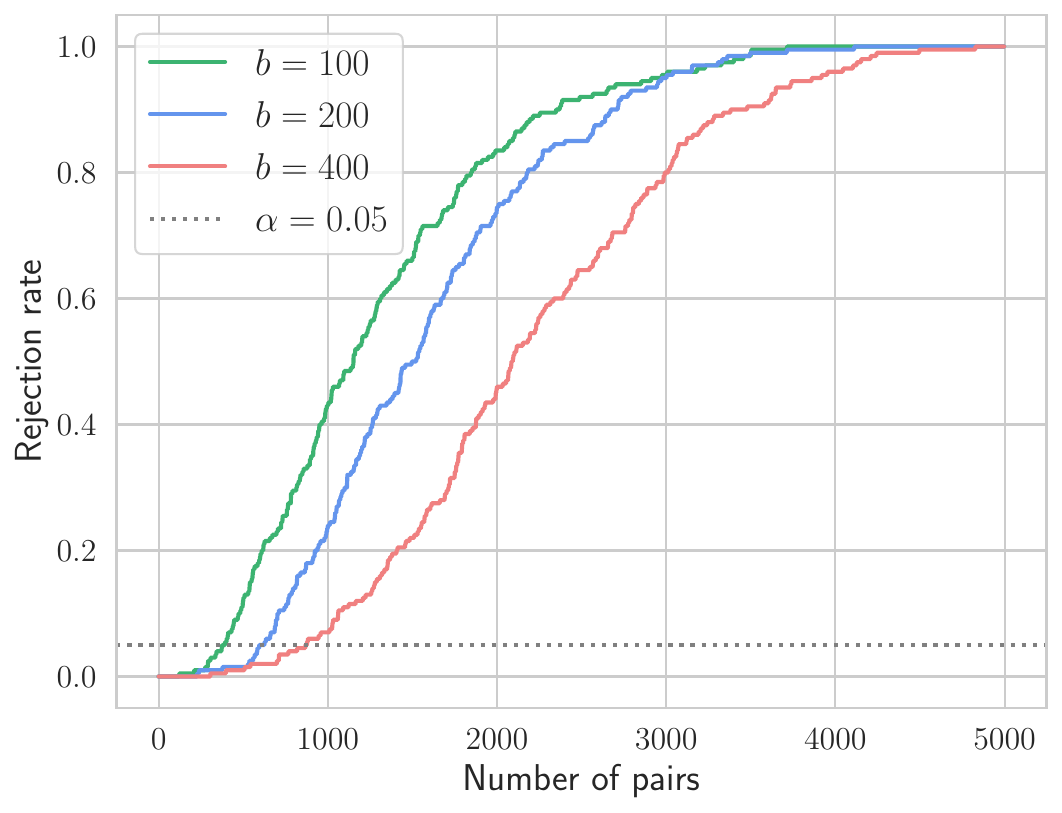}
        \caption{Rejection rate of sequential independence test under distribution drift setting. Focusing on the non-i.i.d.\ time-varying setting, we confirm that our test has high power under the alternative.}
        \label{subfig:gaus_drif}
\end{figure}

\subsection{Symmetry-based Payoff Functions}\label{appsubsec:sym_bet}

In this section, we complement the comparison presented in Section~\ref{sec:alt_bet} between the rank- and composition-based betting strategies (since those require minimal tuning) used with ONS or aGRAPA criteria for selecting betting fractions. We also increase the monitoring horizon to 20000 datapoints. In Figure~\ref{subfig:sym_gaus}, we consider the Gaussian linear model, but in contrast to the setting considered in Section~\ref{sec:alt_bet}, we focus on harder testing settings by considering $\beta\in [0,0.3]$. In Figure~\ref{subfig:sym_sphere}, we compare composition- and rank-based approaches when data are sampled from the spherical model. In both cases, composition and rank-based approaches are similar; none of the payoffs uniformly dominates the other. We also observe that selecting betting fractions via aGRAPA criterion tends to result in a bit more powerful testing procedure. 

\begin{figure}[!htb]
\begin{center}
        \subfigure[]{              
                \includegraphics[width=0.425\linewidth]{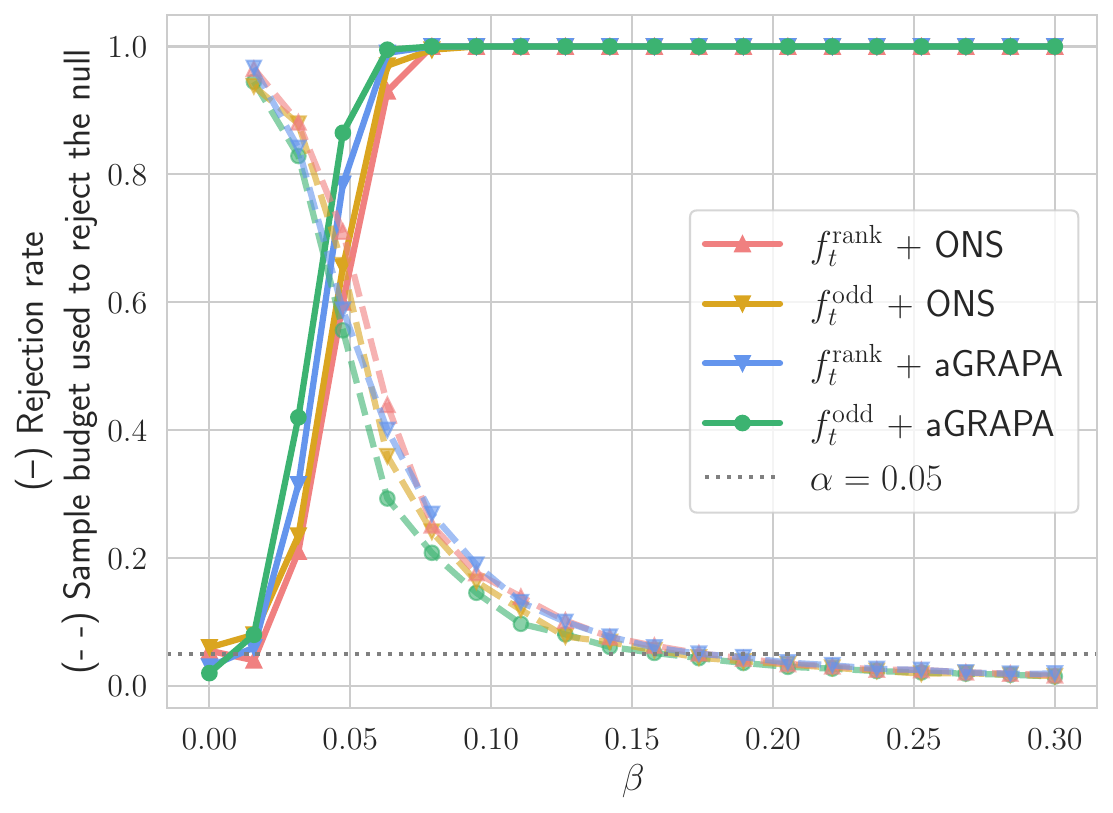}
                \label{subfig:sym_gaus}}
        \subfigure[]{
                \includegraphics[width=0.425\linewidth]{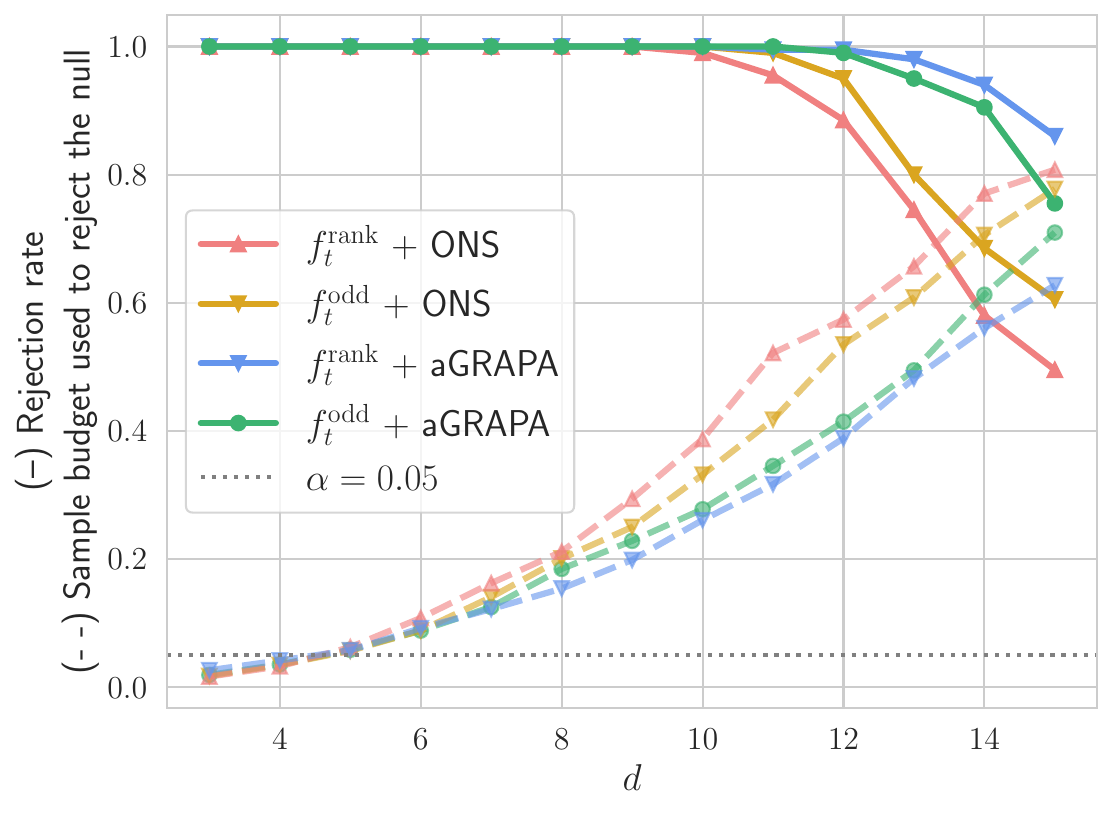}
                \label{subfig:sym_sphere}}
        \caption{(a) Comparison of symmetry-based betting strategies under the Gaussian model. The betting strategy based on composition with an odd function performs only slightly better than the rank-based strategy. (b) SKIT with composition- and rank-based betting strategies under the spherical model. None of the betting strategies uniformly dominates the other. aGRAPA criterion for selecting betting fractions tends to result in a bit more powerful testing procedure.}
        \label{fig:ons_grapa_power_app}
\end{center}
\end{figure}

\subsection{Hard-to-detect Dependence}\label{appsubsec:hard_to_det_vis}

\emph{Hard-to-detect dependence.} Consider the joint density $p(x,y)$ of the form:
\begin{equation}\label{eq:hard_to_detect}
    \frac{1}{4\pi^2} \roundbrack{1+\sin(wx)\sin(wy)}\cdot \indicator{(x,y)\in[-\pi,\pi]^2}.
\end{equation}
With the null case corresponding to $w=0$, the testing problem becomes harder with growing $w$. In Figure~\ref{fig:hard_to_det_vis}, we illustrate the densities and a data sample for the hard-to-detect setting~\eqref{eq:hard_to_detect}.

\begin{figure}[!htb]
\begin{center}
        \subfigure[$w=0$.]{
                \includegraphics[width=0.3\linewidth]{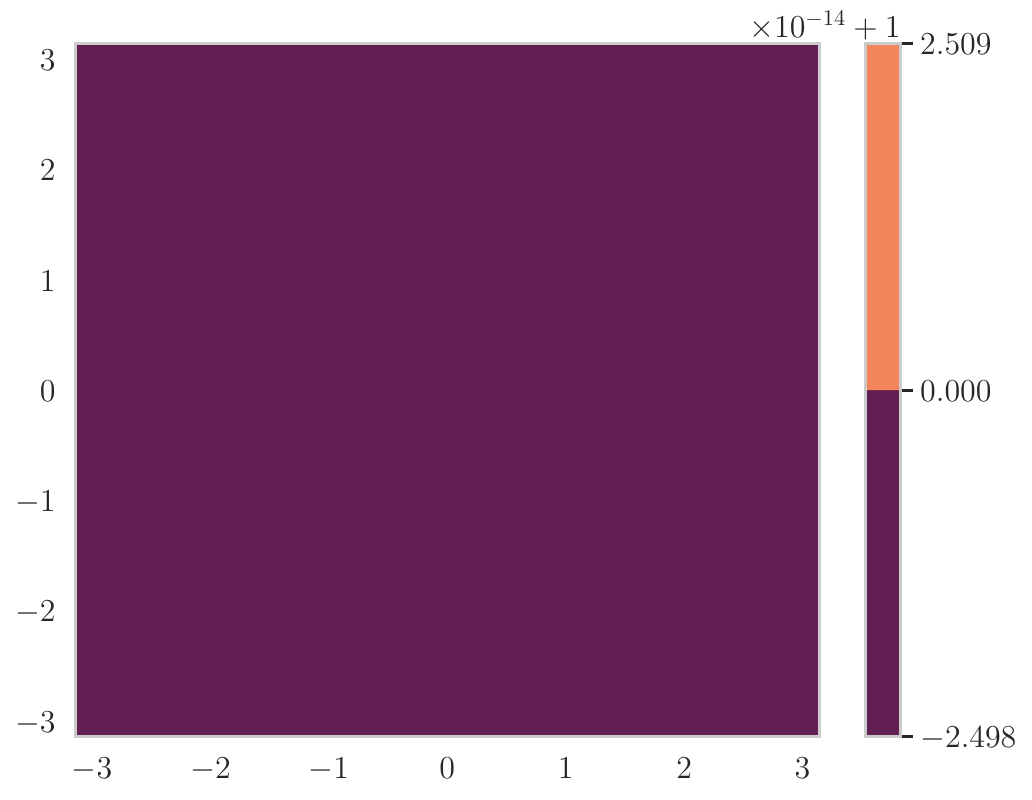}}
        \subfigure[$w=2$.]{
        \includegraphics[width=0.3\linewidth]{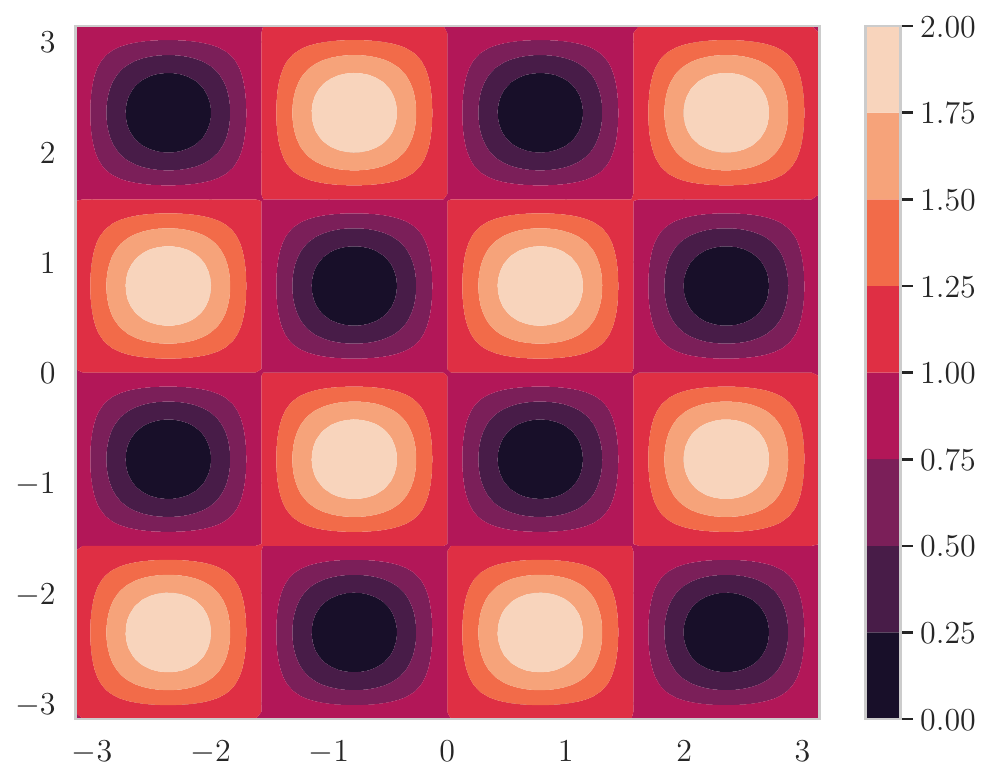}}
        \subfigure[$w=5$]{
                \includegraphics[width=0.3\linewidth]{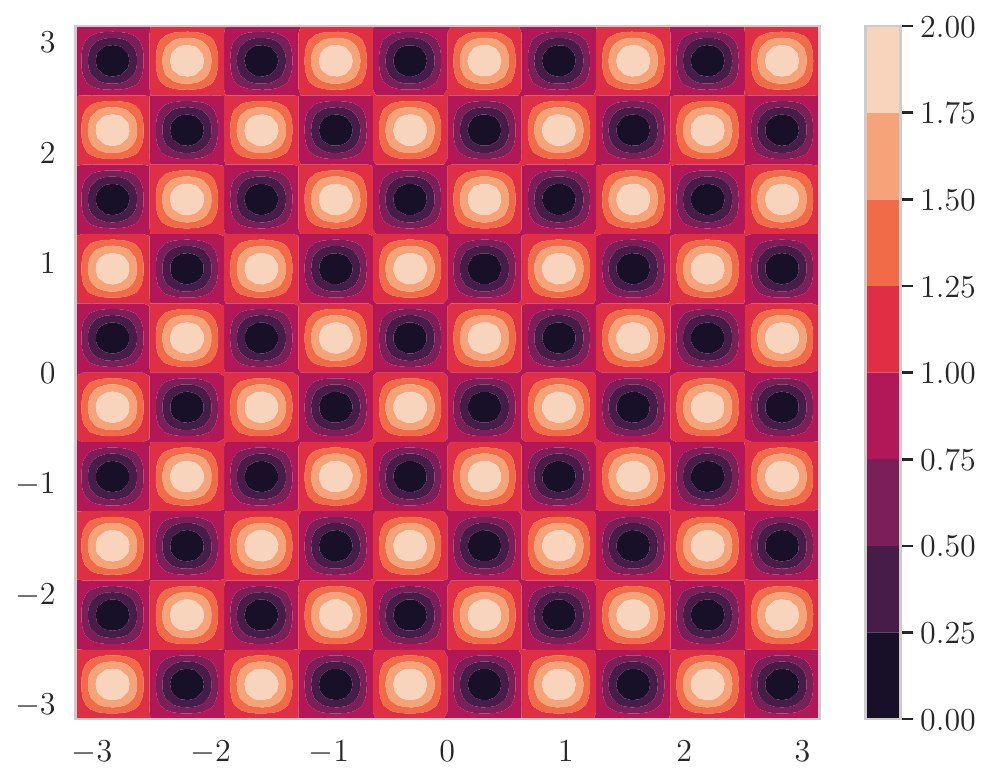}}
        \subfigure[$w=0$.]{
                \includegraphics[width=0.3\linewidth]{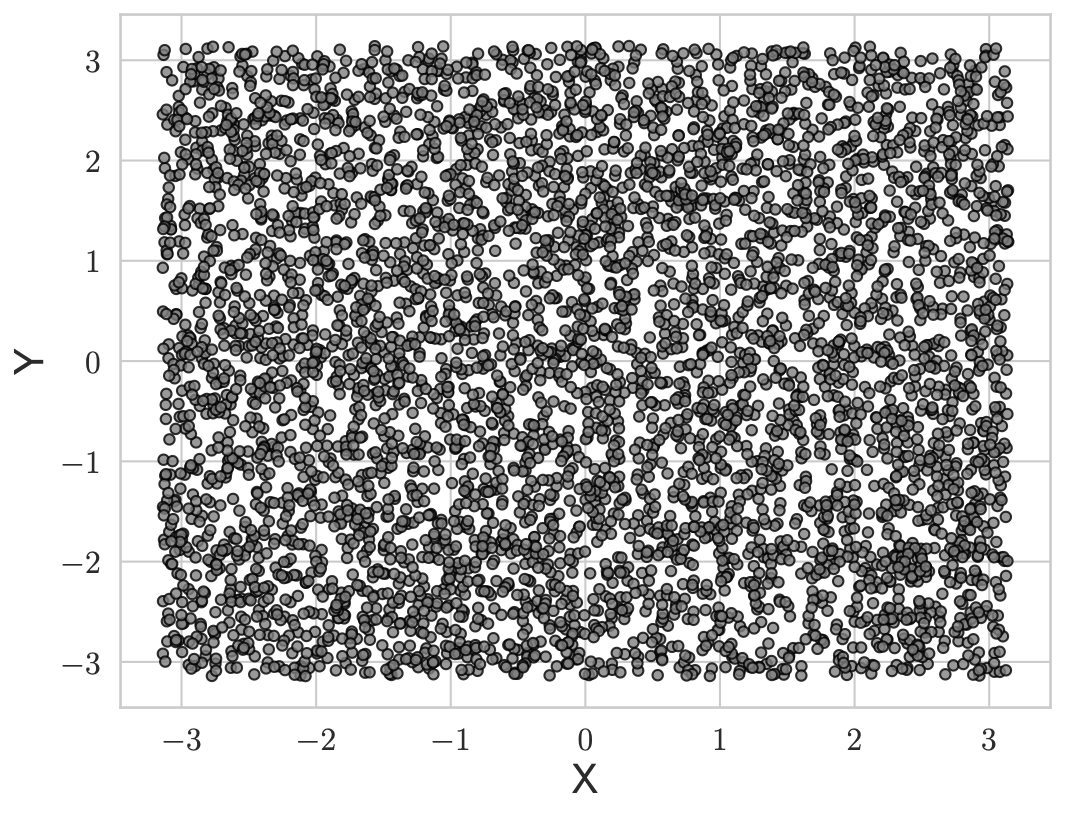}}
        \subfigure[$w=2$.]{
            \includegraphics[width=0.3\linewidth]{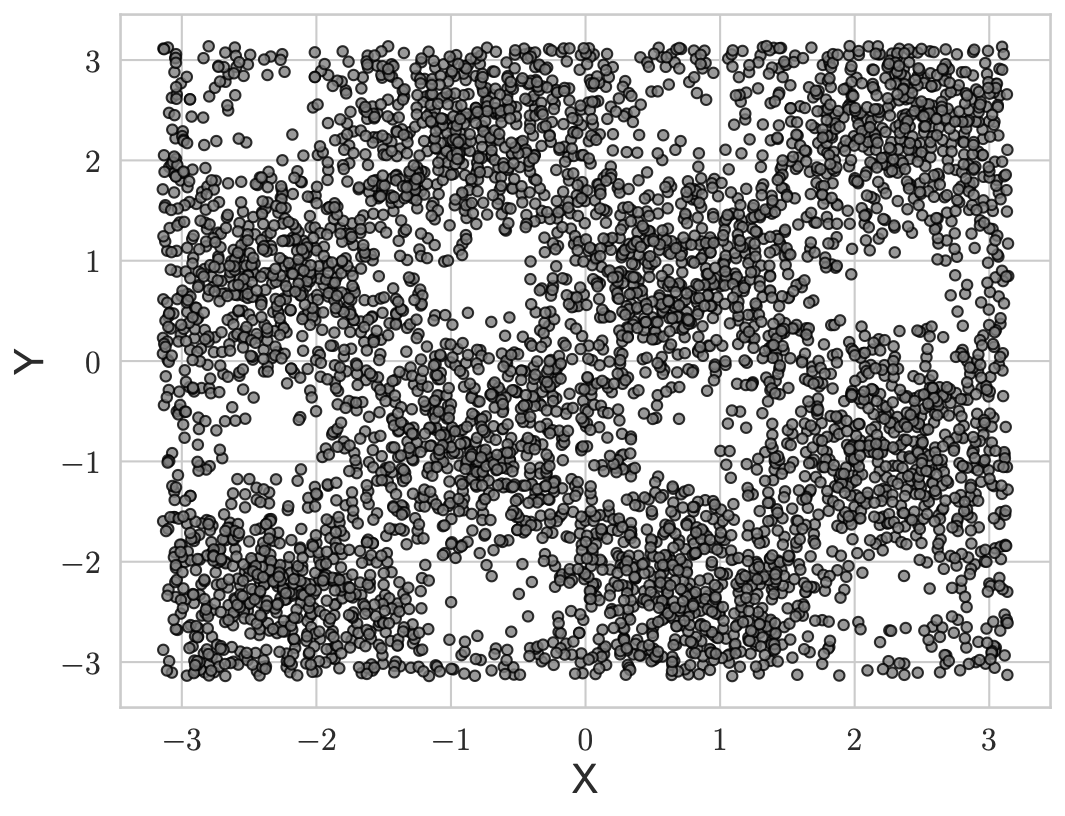}}
        \subfigure[$w=5$]{
                \includegraphics[width=0.3\linewidth]{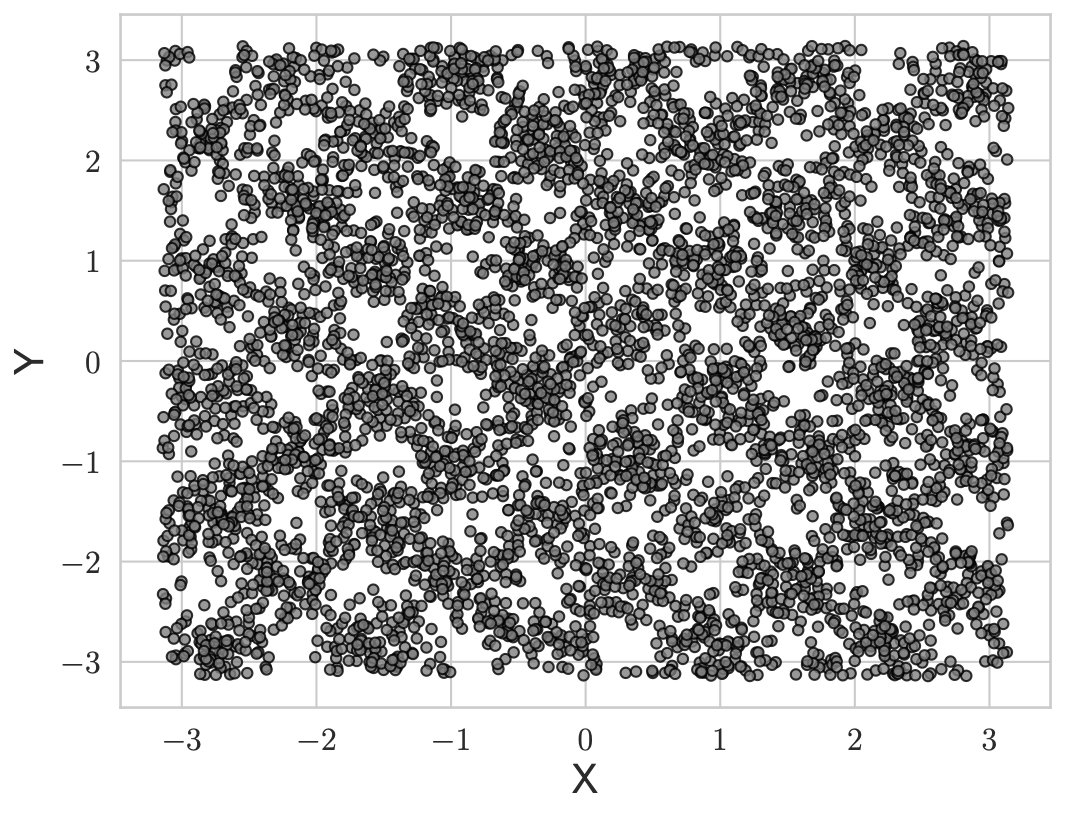}}
        \caption{Visualization of the densities (top) and a dataset of size 5000 (bottom) sampled from the corresponding distribution.}
        \label{fig:hard_to_det_vis}
\end{center}
\end{figure}

We use $\lambda_X = \lambda_Y = 3/(4\pi^2)$ as RBF kernel hyperparameters. For visualization purposes, we stop monitoring after observing 20000 datapoints from $P_{XY}$, and if a SKIT does not reject $H_0$ by that time, we assume that the null is retained. The results are aggregated over 200 runs for each value of $w$. In Figure~\ref{subfig:hard_to_sim_comp}, where the null case corresponds to $w=0$, we confirm that SKITs have time-uniform type I error control. The average rejection rate starts to drop for $w\geq 3$, meaning that observing 20000 points from $P_{XY}$ does not suffice to detect dependence. 

\begin{figure}[!htb]
\begin{center}
 \includegraphics[width=0.45\linewidth]{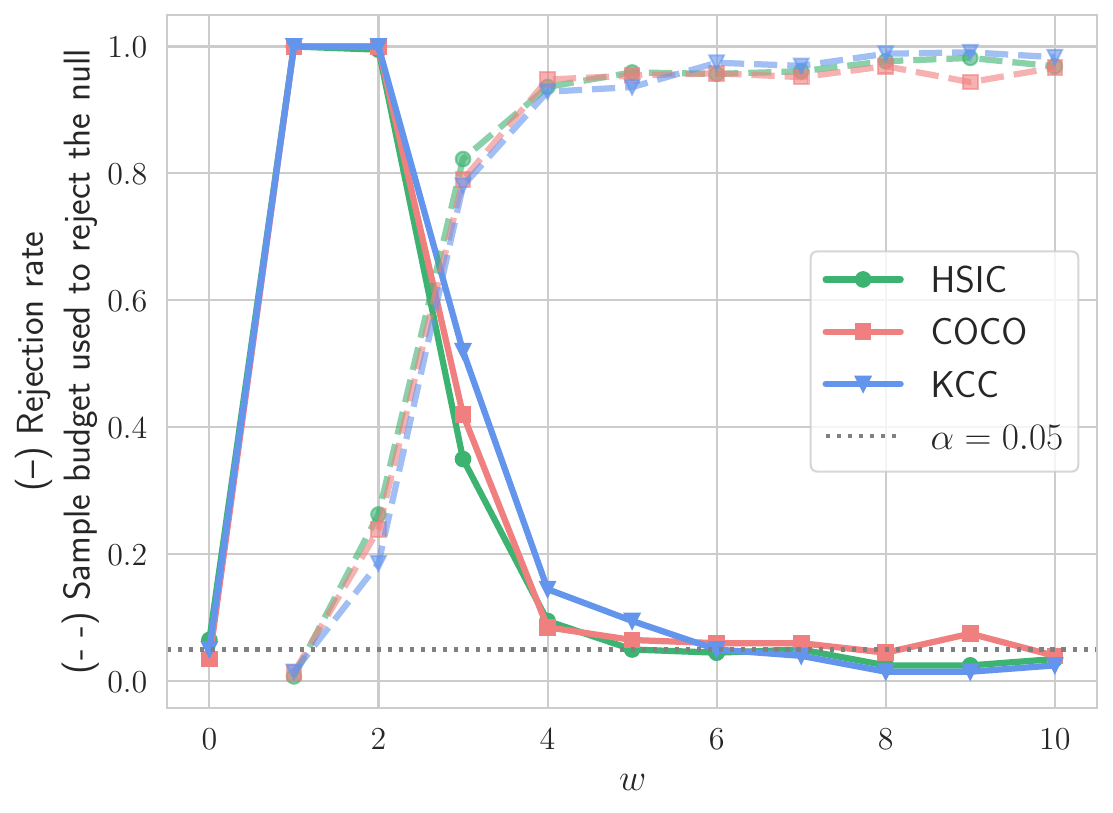}
        \caption{Rejection rate (solid) and fraction of samples used before the null hypothesis was rejected (dashed) for hard-to-detect dependence model. By inspecting the rejection rate for $w=0$ (independence holds), we confirm that the type I error is controlled. Further, SKIT is adaptive to the complexity of a problem (larger $w$ corresponds to a harder setting).}
        \label{subfig:hard_to_sim_comp}
\end{center}
\end{figure}

\subsection{Additional Results for Real Data}\label{appsubsec:real_data}

In Figure~\ref{fig:real_data_seasonal}, we illustrate that the average daily temperature in selected cities share similar seasonal patterns. We repeat the same experiment as in Section~\ref{sec:alt_bet}, but for four cities in South Africa: Cape Town (CT), Port Elizabeth (PE), Durban (DRN), and Bloemfontein (BFN). In Figures~\ref{subfig:europe_martingales} and~\ref{subfig:south_africa_martingales}, we illustrate the resulting wealth processes for each pair of cities and for each region. Finally, we illustrate the pairs of cities for which the null has been rejected in Figure~\ref{subfig:south_africa_map}.

\begin{figure}[!htb]
\begin{center}
 \subfigure[]{              
            \includegraphics[width=0.425\linewidth]{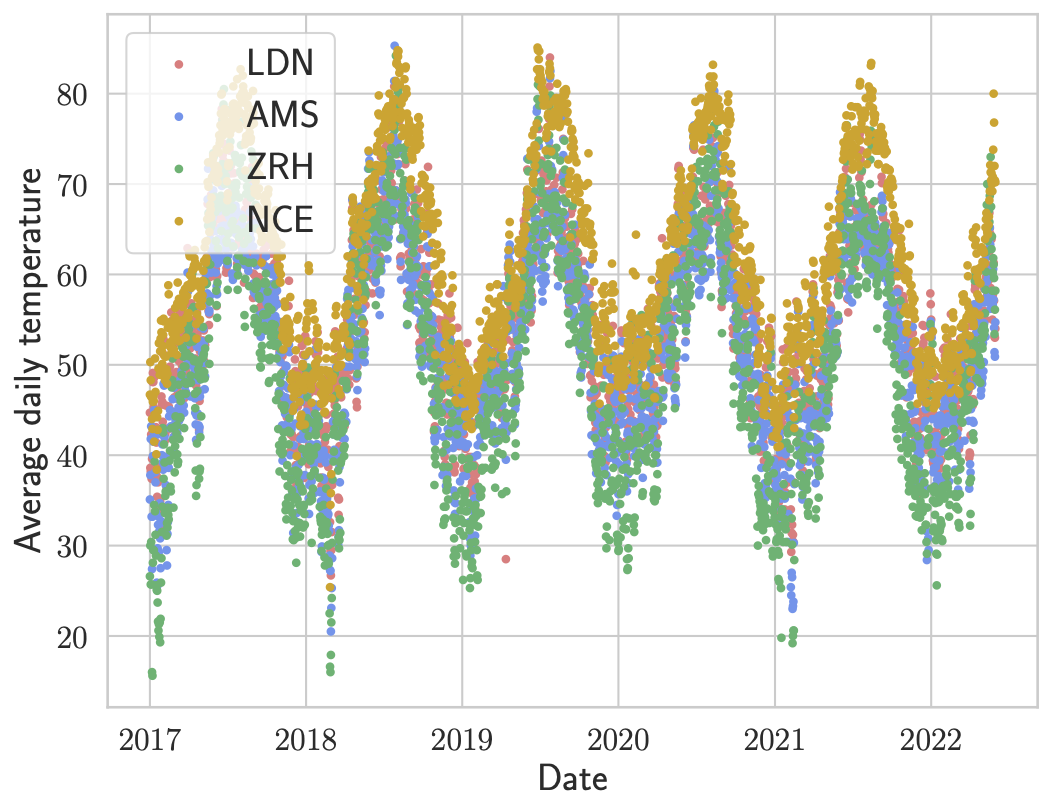}
            \label{subfig:europe_seasonal}}
\subfigure[]{         
            \includegraphics[width=0.425\linewidth]{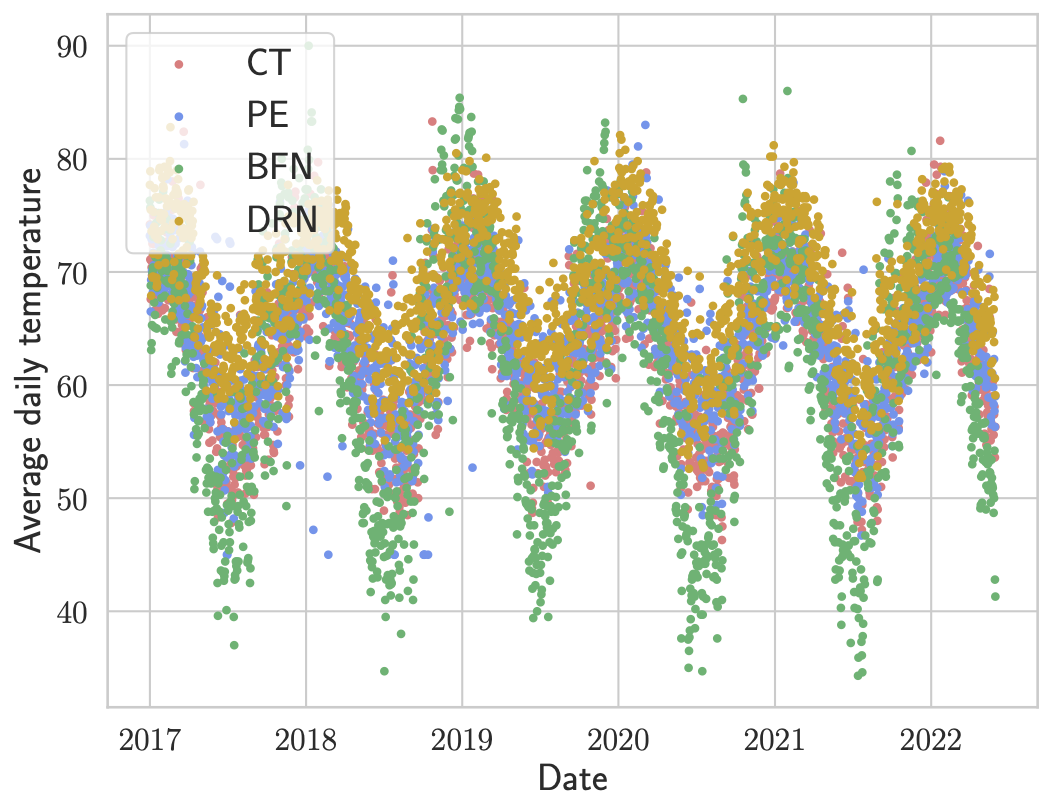}
            \label{subfig:south_africa_seasonal}}
                \subfigure[]{          
            \includegraphics[width=0.425\linewidth]{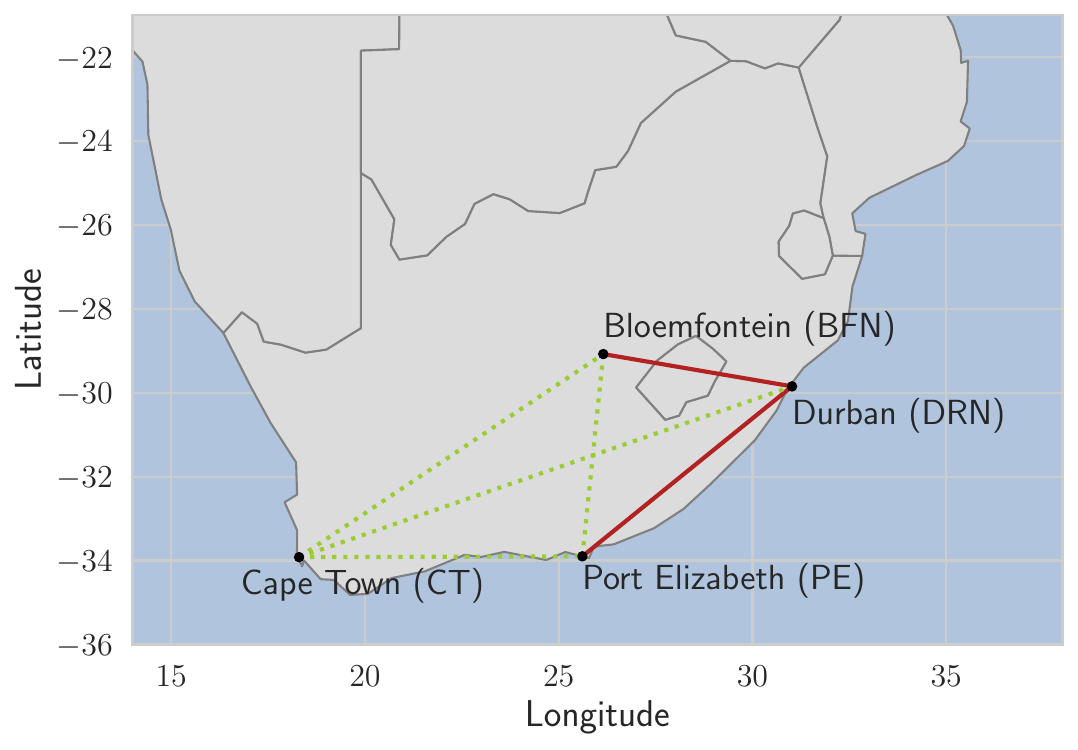}
            \label{subfig:south_africa_map}}
            \subfigure[]{               
            \includegraphics[width=0.425\linewidth]{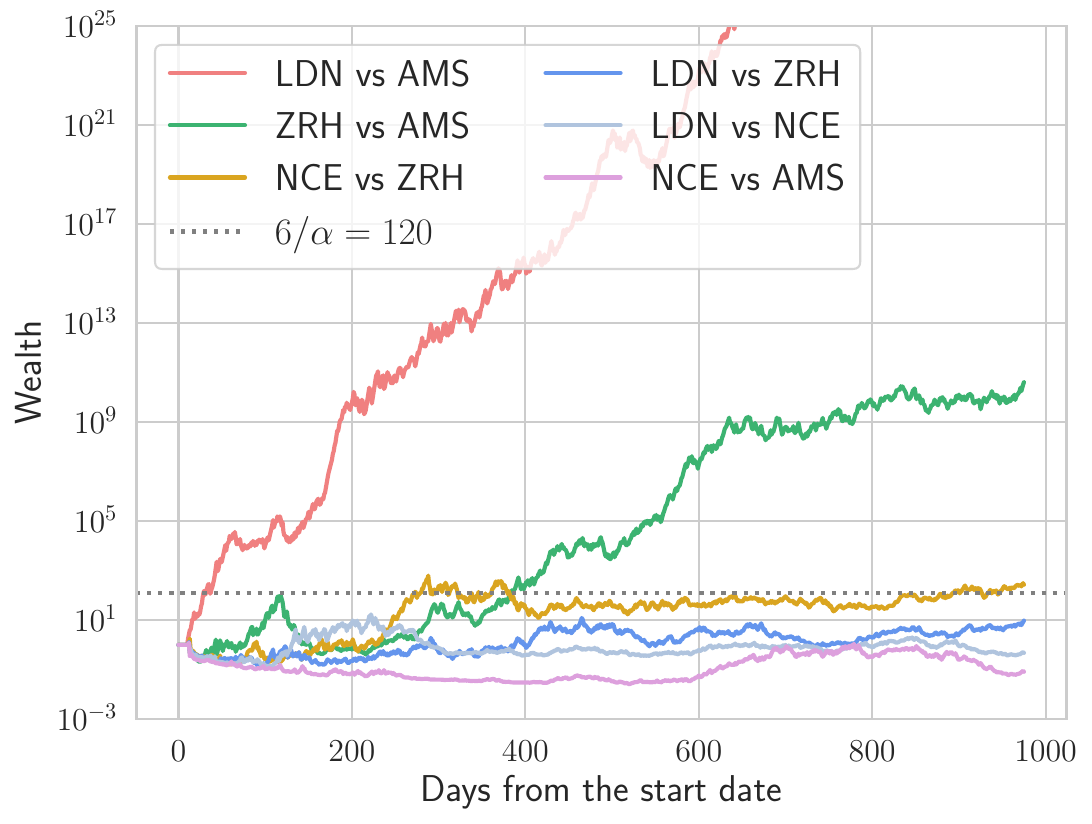}
            \label{subfig:europe_martingales}}
    \subfigure[]{             
            \includegraphics[width=0.425\linewidth]{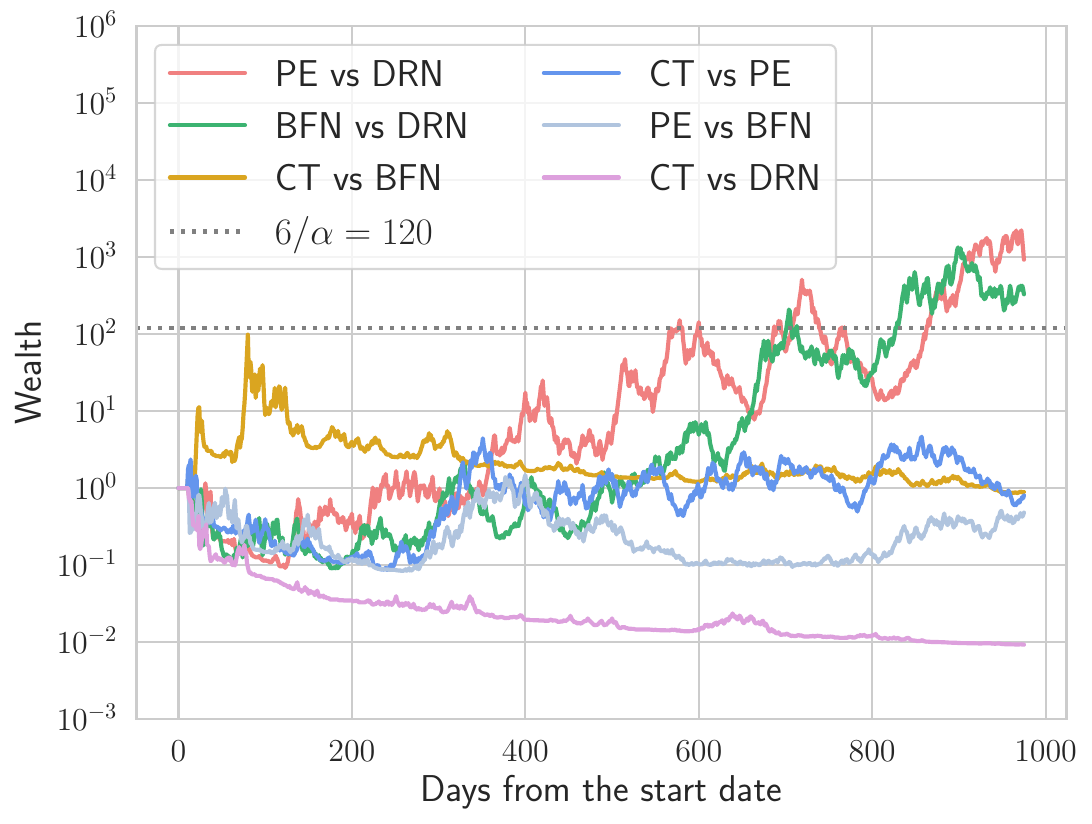}
            \label{subfig:south_africa_martingales}}
    \caption{Temperatures for selected cities in Europe (subplot (a)) and South Africa (subplot (b)) share similar seasonal patterns. Map (subplot (c)) where solid red lines connect those cities for which the null is rejected. SKIT supports our conjecture about dependent temperature fluctuations for geographically close cities. For completeness, we also plot wealth processes for SKIT used on weather data for Europe (subplot (d)) and South Africa (subplot (e)).  }
    \label{fig:real_data_seasonal}
\end{center}
\end{figure}


\subsection{Experiment with MNIST data}\label{appubsec:mnist_exp}

In this section, we analyze the performance of SKIT on high-dimensional real data. This experiment is based on MNIST dataset~\citep{lecun1998mnist} where pairs of digits are observed at each step; under the null one sees digits $(a,b)$ where $a$ and $b$ are uniformly randomly chosen, but under the alternative one sees $(a, a')$, i.e., two different images of the same digit. To estimate kernel hyperparameters, we deploy the median heuristic using 20 pairs of images. 

We illustrate the results in Figure~\ref{subfig:mnist_vis}. Under the null, our test does not reject more often than the required 5\%, but its power increases with sample size under the alternative, reaching power one after processing $\approx 500$ pairs of digits (points from $P_{XY}$) on average.

\begin{figure}[!htb]
        \centering
        \includegraphics[width=0.425\textwidth]{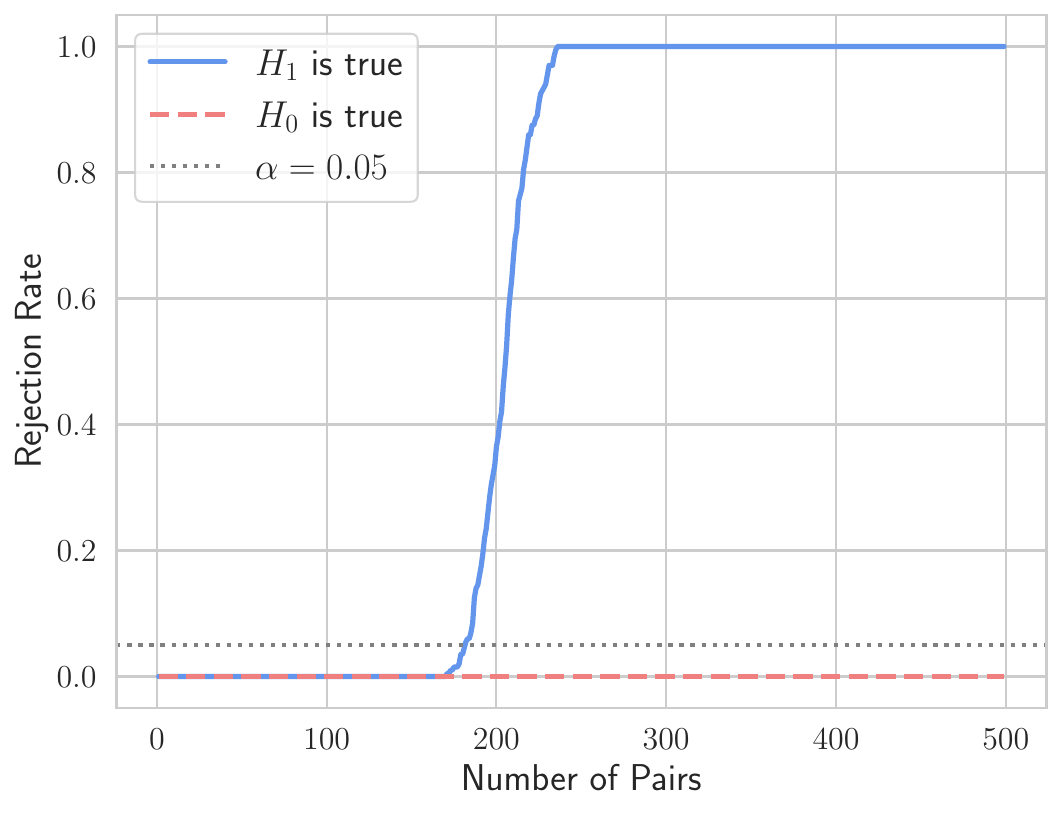}
        \caption{Rejection rate for SKIT on MNIST data. Under the null (red dashed line), our test does not reject more often than the required 5\%, but its power increases with sample size under the alternative (blue solid line). Each pair corresponds to two points from $P_{XY}$, and hence, SKIT reaches power one after processing $\approx 500$ pairs of images on average.}
        \label{subfig:mnist_vis}
    \end{figure}

\clearpage

\section{Scaling Sequential Testing Procedures}

Updating the wealth process at each round requires evaluating the payoff function at a new pair of observations (and hence computing the witness function corresponding to a chosen dependence criterion). In this section, we provide details about the ways of reducing the computational complexity of this step, which are necessary to scale the proposed sequential testing frameworks to moderately large sample sizes. Note that the proposed implementation of COCO allows updating kernel hyperparameters on the fly. In contrast, linear-time updates for HSIC require fixing kernel hyperparameters in advance.

\subsection{Incomplete/Pivoted Cholesky Decomposition for COCO and KCC}\label{appsubsec:coco_updates}

Suppose that we want to evaluate COCO payoff function on the next pair of points $(X_{2t-1},Y_{2t-1}),(X_{2t},Y_{2t})$. In order to do so, we need to compute $g_{1,t}$ and $g_{2,t}$, that is solve the generalized eigenvalue problem. Note that solving generalized eigenvalue problem at each iteration could be computationally prohibitive. One simple way is to use a random subsample of datapoints when performing witness function estimation, e.g., once the sample size $n$ exceeds $n_s$, e.g., $n_s=25$, we randomly subsample (without replacement) a sample of size $n_s$ to estimate witness functions. Alternatively, a common approach is to reduce computational burden through incomplete Cholesky decomposition. The idea is to use the fact that kernel matrices tend to demonstrate rapid spectrum decay, and thus low-rank approximations can be used to scale the procedures. Suppose that $K \approx G_1G_1^T$ and $L \approx G_2G_2^T$ where $G_i$'s are lower triangular matrices of size $n\times M$ ($M$ depends on the preset approximation error level). After computing Cholesky decomposition, we center both matrices via left multiplication by $H$ and compute SVDs of $HG_1$ and $HG_2$, that is, $HG_1=U_1\Lambda_1V_1^\top$ and $HG_2=U_2\Lambda_2V_2^\top$. We have:
\begin{equation*}
    \begin{aligned}
        \tilde{K} \approx U_1\Lambda_1^2U_1^\top, \quad 
        \tilde{L} \approx U_2\Lambda_2^2U_2^\top.
    \end{aligned}
\end{equation*}
Our goal is to find the largest eigenvalue/eigenvector pair for $Ax=\gamma Bx$ for a PD matrix $B$. Since:
\begin{equation*}
    Ax=\gamma Bx \Longleftrightarrow B^{-1/2}AB^{-1/2} (B^{1/2} x)=\gamma (B^{1/2}x),
\end{equation*}
it suffices to leading eigenvalue/eigenvector pair for:
\begin{equation*}
    B^{-1/2}AB^{-1/2}y=\gamma y.
\end{equation*}
Then $x=B^{-1/2}y$ is a generalized eigenvector for the initial problem. 

\paragraph{COCO.} For COCO, we have:
\begin{equation*}
\begin{aligned}
    B&= \begin{pmatrix}
    \tilde{K} & 0 \\ 
    0 & \tilde{L}
    \end{pmatrix}  \approx \begin{pmatrix}
    U_1\Lambda_1^2U_1^\top & 0 \\ 
    0 & U_2\Lambda_2^2U_2^\top
    \end{pmatrix} = \begin{pmatrix}
    U_1 & 0 \\ 
    0 & U_2 \end{pmatrix}\begin{pmatrix}
    \Lambda_1^2 & 0 \\ 
    0 & \Lambda_2^2
    \end{pmatrix}\begin{pmatrix}
    U_1 & 0 \\ 
    0 & U_2
    \end{pmatrix}^\top \\
    \Longrightarrow B^{-1/2} & \approx \begin{pmatrix}
    U_1 & 0 \\ 
    0 & U_2 \end{pmatrix}\begin{pmatrix}
    \Lambda_1^{-1} & 0 \\ 
    0 & \Lambda_2^{-1}
    \end{pmatrix}\begin{pmatrix}
    U_1 & 0 \\ 
    0 & U_2
    \end{pmatrix}^\top=: \mathcal{U}\Lambda^{-1}\mathcal{U}^\top.
\end{aligned}
\end{equation*}
We also have:
\begin{equation*}
\begin{aligned}
    A &\approx \begin{pmatrix}
    0 & \frac{1}{n}U_1\Lambda_1^2U_1^\top U_2\Lambda_2^2U_2^\top \\ 
    \frac{1}{n} U_2\Lambda_2^2U_2^\top U_1\Lambda_1^2U_1^\top & 0
    \end{pmatrix} \\
    &= \begin{pmatrix}
    U_1 & 0 \\ 
    0& U_2
    \end{pmatrix}\begin{pmatrix}
    0 & \frac{1}{n}\Lambda_1^2U_1^\top U_2\Lambda_2^2 \\ 
    \frac{1}{n} \Lambda_2^2U_2^\top U_1\Lambda_1^2 & 0
    \end{pmatrix}\begin{pmatrix}
    U_1 & 0 \\ 
    0& U_2
    \end{pmatrix}^\top.
\end{aligned}
\end{equation*}
Thus we have:
\begin{equation*}
\begin{aligned}
    B^{-1/2}AB^{-1/2} &\approx  \begin{pmatrix}
    U_1 & 0  \\ 
     0&  U_2
    \end{pmatrix}\begin{pmatrix}
    0 & \frac{1}{n}\Lambda_1U_1^\top U_2\Lambda_2 \\ 
    \frac{1}{n} \Lambda_2U_2^\top U_1\Lambda_1 & 0
    \end{pmatrix}\begin{pmatrix}
    U_1 & 0  \\ 
     0&  U_2
    \end{pmatrix}^\top.
\end{aligned}
\end{equation*}
Hence, we only need to compute the leading eigenvector (say, $z^*$) for:
\begin{equation*}
    \begin{pmatrix}
    0 & \frac{1}{n}\Lambda_1U_1^\top U_2\Lambda_2 \\ 
    \frac{1}{n} \Lambda_2U_2^\top U_1\Lambda_1 & 0
    \end{pmatrix} \in \Real^{(M_1+M_2)\times (M_1+M_2)}.
\end{equation*}
It implies that the leading eigenvector for $B^{-1/2}AB^{-1/2}$ is then $\mathcal{U}z^*$, and the solution for the generalized eigenvalue problem is given by:
\begin{equation*}
    \mathcal{U} \Lambda^{-1} z^* = \begin{pmatrix}
    U_1 \Lambda_1^{-1} z^*_1\\
    U_2 \Lambda_2^{-1} z^*_2
    \end{pmatrix}=:\begin{pmatrix}
        \alpha_0\\
        \beta_0
    \end{pmatrix}.
\end{equation*}
Next, we need to normalize this vector of coefficients appropriately, i.e., we need to guarantee that $\norm{2}{\tilde{K}^{1/2} \alpha} = 1$ and $\norm{2}{\tilde{L}^{1/2} \beta} = 1$, and thus re-normalizing naively is quadratic in $n$. Instead, note that in order to compute incomplete Cholesky decomposition, we choose a tolerance parameter $\delta$ so that: $\norm{*}{PKP^\top-G_1G_1^\top}=\norm{*}{K-G_1G_1^\top}\leq \delta$ (nuclear norm). Let $\Delta=K-G_1G_1^\top$. We know that:
\begin{equation*}
     \alpha^\top \tilde{K} \alpha =  \alpha^\top HK H \alpha = \alpha^\top H (\Delta+G_1G_1^\top) H \alpha = \alpha^\top H \Delta H \alpha+\alpha^\top H G_1G_1^\top H \alpha
\end{equation*}
First, note that $\alpha^\top H \Delta H \alpha\leq \delta \|H\alpha\|_2^2$. Next,
\begin{equation*}
    G_1^\top H = V_1 \Lambda_1 U_1^\top.
\end{equation*}
Given an initial vector of parameters $\alpha_0$ and $\beta_0$, vectors of coefficients can be normalized in linear time using
\begin{equation*}
\begin{aligned}
    \alpha &= \frac{\alpha_0}{\sqrt{\norm{2}{G_1^\top H\alpha_0}^2+\delta\norm{2}{H\alpha_0}^2}}=\frac{U_1\Lambda_1^{-1}z_1^*}{\sqrt{\norm{2}{V_1z_1^*}^2+\delta\norm{2}{H\alpha_0}^2}}=\frac{U_1\Lambda_1^{-1}z_1^*}{\sqrt{\norm{2}{z_1^*}^2+\delta\norm{2}{H\alpha_0}^2}}, \\ 
    \beta &= \frac{\beta_0}{\sqrt{\norm{2}{G_2^\top H\beta_0}^2+\delta\norm{2}{H\beta_0}^2}}=\frac{U_2\Lambda_2^{-1}z_2^*}{\sqrt{\norm{2}{V_2 z_2^*}^2+\delta\norm{2}{H\beta_0}^2}}=\frac{U_2\Lambda_2^{-1}z_2^*}{\sqrt{\norm{2}{ z_2^*}^2+\delta\norm{2}{H\beta_0}^2}}.
\end{aligned}
\end{equation*}
For small $\delta$, we essentially normalize by $\alpha_0^\top \tilde{K} \alpha_0$ and $\beta_0^\top \tilde{L} \beta_0$ as expected. It also makes sense to use $\delta=n\cdot \delta_0$. Still, re-estimating the witness functions after processing $2t$, $t\geq 1$ points is computationally intensive. In contrast to HSIC, for which there are no clear benefits of skipping certain estimation steps, for COCO we estimate the witness functions after processing $2t^2$, $t\geq 1$ points.

\paragraph{KCC.} For KCC, we have:
\begin{equation*}
\begin{aligned}
    B &= \begin{pmatrix}
    \kappa_1\tilde{K}+\frac{1}{n}\tilde{K}^2 & 0 \\ 
    0 & \kappa_2\tilde{L}+\frac{1}{n}\tilde{L}^2
    \end{pmatrix} \\
    &\approx \begin{pmatrix}
    \kappa_1U_1\Lambda_1^2U_1^\top+\frac{1}{n}U_1\Lambda_1^4U_1^\top & 0 \\ 
    0 & \kappa_2U_2\Lambda_2^2U_2^\top+\frac{1}{n}U_2\Lambda_2^4U_2^\top
    \end{pmatrix}\\
    &= \begin{pmatrix}
    U_1\Lambda_1^2\roundbrack{\kappa_1\textbf{I}_n+\Lambda_1^2\frac{1}{n}}U_1^\top  & 0 \\ 
    0 & U_2\Lambda_2^2\roundbrack{\kappa_2\textbf{I}_n+\Lambda_2^2\frac{1}{n}}U_2^\top 
    \end{pmatrix}\\
    &= \calU \begin{pmatrix}
    \Lambda_1^2\roundbrack{\kappa_1\textbf{I}_n+\Lambda_1^2\frac{1}{n}}  & 0 \\ 
    0 & \Lambda_2^2\roundbrack{\kappa_2\textbf{I}_n+\Lambda_2^2\frac{1}{n}}
    \end{pmatrix} \calU^\top,
\end{aligned}
\end{equation*}
which implies that:
\begin{equation*}
    B^{-1/2} \approx \calU \begin{pmatrix}
    \roundbrack{\kappa_1\textbf{I}_n+\Lambda_1^2\frac{1}{n}}^{-1/2}\Lambda_1^{-1}  & 0 \\ 
    0 & \roundbrack{\kappa_2\textbf{I}_n+\Lambda_2^2\frac{1}{n}}^{-1/2}\Lambda_2^{-1}
    \end{pmatrix} \calU^\top.
\end{equation*}
Recall that:
\begin{equation*}
\begin{aligned}
    A &\approx \calU\begin{pmatrix}
    0 & \frac{1}{n}\Lambda_1^2U_1^\top U_2\Lambda_2^2 \\ 
    \frac{1}{n} \Lambda_2^2U_2^\top U_1\Lambda_1^2 & 0
    \end{pmatrix}\calU^\top.
\end{aligned}
\end{equation*}
Thus,
\begin{equation*}
\begin{aligned}
    B^{-1/2}AB^{-1/2} &\approx  \calU \begin{pmatrix}
    0 & M_*\\ 
    M_*^\top & 0
    \end{pmatrix} \calU^\top, \\
    \text{where} \quad M_* &= \frac{1}{n}\roundbrack{\kappa_1\textbf{I}_n+\Lambda_1^2\frac{1}{n}}^{-1/2}\Lambda_1U_1^\top U_2\Lambda_2\roundbrack{\kappa_2\textbf{I}_n+\Lambda_2^2\frac{1}{n}}^{-1/2}.
\end{aligned}
\end{equation*}
Equivalently,
\begin{equation*}
    M_* = \frac{1}{n}\rho_{\kappa_1}(\Lambda_1) \Lambda_1U_1^\top U_2\Lambda_2\rho_{\kappa_2}(\Lambda_2), \quad \text{where} \quad \rho_\kappa(x) = \frac{1}{\sqrt{x^2/n+\kappa}}.
\end{equation*}
Hence, we only need to compute the leading eigenvector (say, $z^*$) for:
\begin{equation*}
\begin{pmatrix}
    0 & M_*\\ 
    M_*^\top & 0
    \end{pmatrix}
\end{equation*}
It implies that the leading eigenvector for $B^{-1/2}AB^{-1/2}$ is then $\mathcal{U}z^*$. For the initial generalized eigenvalue problem, an approximate solution (due to using low-rank approximations of kernel matrices) is given by:
\begin{equation*}
    B^{-1/2}\mathcal{U}z^* = \begin{pmatrix}
        U_1  \rho_{\kappa_1}(\Lambda_1)\Lambda_1^{-1} z^*_1\\
        U_2  \rho_{\kappa_2}(\Lambda_2)\Lambda_2^{-1} z^*_2\\
    \end{pmatrix}= \begin{pmatrix}
        U_1  \Lambda_1^{-1}\rho_{\kappa_1}(\Lambda_1) z^*_1\\
        U_2  \Lambda_2^{-1}\rho_{\kappa_2}(\Lambda_2) z^*_2\\
    \end{pmatrix}=:\begin{pmatrix}
        \alpha_0\\
        \beta_0
    \end{pmatrix}.
\end{equation*}
Next, we need to normalize this vector of coefficients appropriately, i.e., we need to guarantee that $\norm{2}{\tilde{K}^{1/2} \alpha} = 1$ and $\norm{2}{\tilde{L}^{1/2} \beta} = 1$, and thus re-normalizing naively is quadratic in $n$. Instead, note that in order to compute incomplete Cholesky decomposition, we choose a tolerance parameter $\delta$ so that: $\norm{*}{PKP^\top-G_1G_1^\top}=\norm{*}{K-G_1G_1^\top}\leq \delta$ (nuclear norm). Let $\Delta=K-G_1G_1^\top$. We know that:
\begin{equation*}
     \alpha^\top \tilde{K} \alpha =  \alpha^\top HK H \alpha = \alpha^\top H (\Delta+G_1G_1^\top) H \alpha = \alpha^\top H \Delta H \alpha+\alpha^\top H G_1G_1^\top H \alpha
\end{equation*}
First, note that $\alpha^\top H \Delta H \alpha\leq \delta \|H\alpha\|_2^2$. Next,
\begin{equation*}
    G_1^\top H = V_1 \Lambda_1 U_1^\top.
\end{equation*}
Given an initial vector of parameters $\alpha_0$ and $\beta_0$, vectors of coefficients can be normalized in linear time using
\begin{equation*}
\begin{aligned}
    \alpha &= \frac{\alpha_0}{\sqrt{\norm{2}{G_1^\top H\alpha_0}^2+\delta\norm{2}{H\alpha_0}^2}}=\frac{U_1  \rho_{\kappa_1}(\Lambda_1)\Lambda_1^{-1} z^*_1}{\sqrt{\norm{2}{V_1  \rho_{\kappa_1}(\Lambda_1) z^*_1}^2+\delta\norm{2}{H\alpha_0}^2}}=\frac{U_1  \rho_{\kappa_1}(\Lambda_1)\Lambda_1^{-1} z^*_1}{\sqrt{\norm{2}{\rho_{\kappa_1}(\Lambda_1)z_1^*}^2+\delta\norm{2}{H\alpha_0}^2}}, \\ 
    \beta &= \frac{\beta_0}{\sqrt{\norm{2}{G_2^\top H\beta_0}^2+\delta\norm{2}{H\beta_0}^2}}=\frac{U_2  \rho_{\kappa_2}(\Lambda_2)\Lambda_2^{-1} z^*_2}{\sqrt{\norm{2}{V_2 \rho_{\kappa_2}(\Lambda_2)z_2^*}^2+\delta\norm{2}{H\beta_0}^2}}=\frac{U_2  \rho_{\kappa_2}(\Lambda_2)\Lambda_2^{-1} z^*_2}{\sqrt{\norm{2}{ \rho_{\kappa_2}(\Lambda_2)z_2^*}^2+\delta\norm{2}{H\beta_0}^2}}.
\end{aligned}
\end{equation*}

\subsection{Linear-time Updates of the HSIC Payoff Function}\label{appsubsec:hsic_updates}

Suppose that we want to evaluate HSIC payoff function on the next pair of points $(X_{2t+1},Y_{2t+1}),(X_{2t+2},Y_{2t+2})$. In order to do so, we need to compute: $\hat{g}_t(X_{2t+2},Y_{2t+2})$. It is clear that the computational of evaluating $\hat{\mu}_{XY} (x,y)$ and $(\hat{\mu}_X\otimes \hat{\mu}_Y) (x,y)$ on a given pair $(x,y)$ is linear in $t$. However, we also need to compute the normalization constant:
\begin{equation}\label{eq:norm_const}
    \norm{\calG\otimes\calH}{\hat{\mu}_{XY}-\hat{\mu}_X\otimes \hat{\mu}_Y}.
\end{equation}
Recall that:
\begin{equation*}
    \norm{\calG\otimes\calH}{\hat{\mu}_{XY}^{(t)}-\hat{\mu}_X^{(t)}\otimes \hat{\mu}_Y^{(t)}}^2 = \frac{1}{(2t)^2}\tr{K^{(t)}H^{(t)}L^{(t)}H^{(t)}},
\end{equation*}
where $K^{(t)}$ and $L^{(t)}$ are kernel matrices corresponding to the first $2t$ pairs, $H^{(t)} := \textbf{I}_{2t}-\frac{1}{2t}\textbf{1}_{2t} \textbf{1}_{2t}^\top$. Instead of computing the normalization constant naively, we next establish a more efficient way of computing~\eqref{eq:norm_const} in time linear in $t$ by caching certain values. Introduce:
    \begin{equation*}
        \begin{aligned}
            \Delta^{(t)}_1 &= \sum_{i,j = 1}^{2t} K_{ij}L_{ij} = \tr{K^{(t)}L^{(t)}},\\
            \Delta^{(t)}_2 &= \sum_{i,j}^{2t} K_{ij} = \textbf{1}_{2t}^\top K^{(t)} \textbf{1}_{2t},\\
            \Delta^{(t)}_3 &= \sum_{i,j}^{2t} L_{ij} = \textbf{1}_{2t}^\top L^{(t)} \textbf{1}_{2t},\\
            \Delta^{(t)}_4 &= \sum_{i=1}^{2t} \sum_{j,q=1}^{2t} K_{ij}L_{iq} = \textbf{1}_{2t}^\top K^{(t)}L^{(t)} \textbf{1}_{2t}.
        \end{aligned}
    \end{equation*}
We have:
\begin{equation*}
    \norm{\calG\otimes\calH}{\hat{\mu}_{XY}^{(t+1)}-\hat{\mu}_X^{(t+1)}\otimes \hat{\mu}_Y^{(t+1)}}^2 = \frac{1}{(2t+2)^2}\Delta^{(t+1)}_1 + \frac{1}{(2t+2)^4}\Delta^{(t+1)}_2\cdot\Delta^{(t)}_3-\frac{2}{(2t+2)^3}\Delta^{(t+1)}_4.
\end{equation*}
Next, we show how to speed up computations via caching certain intermediate values. Kernel matrices have the following structure:
\begin{equation*}
    K^{(t+1)} = \begin{pmatrix} K^{(t)} & K_{\cdot,2t+1} & K_{\cdot,2t+2} \\
    K_{\cdot,2t+1}^\top & K_{2t+1,2t+1} & K_{2t+1,2t+2}\\
    K_{\cdot,2t+2}^\top &  K_{2t+2,2t+1} & K_{2t+2,2t+2}
    \end{pmatrix}, \quad L^{(t+1)} = \begin{pmatrix} L^{(t)} & L_{\cdot,2t+1} & L_{\cdot,2t+2} \\
    L_{\cdot,2t+1}^\top & L_{2t+1,2t+1} & L_{2t+1,2t+2}\\
    L_{\cdot,2t+2}^\top &  L_{2t+2,2t+1} & L_{2t+2,2t+2}
    \end{pmatrix},
\end{equation*}
where $K_{\cdot,2t+1},K_{\cdot,2t+2},L_{\cdot,2t+1},L_{\cdot,2t+2}\in \Real^{2t}$ contain kernel function evaluations: 
\begin{equation*}
    K_{\cdot,m} = \begin{pmatrix}
    k(X_1,X_{m})\\
    \vdots\\
    k(X_{2t},X_{m})
    \end{pmatrix}, \quad L_{\cdot,m} = \begin{pmatrix}
    l(Y_1,Y_{m})\\
    \vdots\\
    l(Y_{2t},Y_{m})
    \end{pmatrix}, \quad m\in\curlybrack{2t+1,2t+2}.
\end{equation*}
First, it is easy to derive that:
\begin{equation*}
\begin{aligned}
    \tr{K^{(t+1)}L^{(t+1)}} &= \tr{K^{(t)} L^{(t)}} + 2(L_{\cdot,2t+1}^\top K_{\cdot,2t+1})+ 2(L_{\cdot,2t+2}^\top K_{\cdot,2t+2})+ \\
    & + K_{2t+1,2t+1}L_{2t+1,2t+1}+K_{2t+2,2t+2}L_{2t+2,2t+2}\\
    &+K_{2t+1,2t+2} L_{2t+2,2t+1} + K_{2t+2,2t+1}L_{2t+1,2t+2}.
\end{aligned}
\end{equation*}
Thus, if the value $\tr{K^{(t)} L^{(t)}}$ is cached, then $\tr{K^{(t+1)} L^{(t+1)}}$ can be computed in linear time. Note that:
\begin{equation*}
    \begin{aligned}
        K^{(t+1)} \textbf{1}_{2t+2} = \begin{pmatrix}
        K^{(t)} \textbf{1}_{2t} +  k_{\cdot,2t+1} + k_{\cdot,2t+2} \\
        K_{\cdot,2t+1}^\top \textbf{1}_{2t} + K_{2t+1,2t+1}+ K_{2t+1,2t+2}\\
        K_{\cdot,2t+2}^\top \textbf{1}_{2t} + K_{2t+2,2t+1}+ K_{2t+2,2t+2}
        \end{pmatrix},
    \end{aligned}
\end{equation*}
which can be computed in linear time if $K^{(t)} \textbf{1}_{2t}$ is stored (similar result holds for $L^{(t+1)} \textbf{1}_{2t+2}$). It thus follows that $\textbf{1}_{2t+2}^\top K^{(t+1)} \textbf{1}_{2t+2}$, $\textbf{1}_{2t+2}^\top L^{(t+1)} \textbf{1}_{2t+2}$ and $\textbf{1}_{2t+2}^\top K^{(t+1)} L^{(t+1)} \textbf{1}_{2t+2}$ can all be computed in linear time. To sum up, we need to cache $\tr{K^{(t)}L^{(t)}}$, $K^{(t)} \textbf{1}_{2t}$, $L^{(t)} \textbf{1}_{2t}$ to compute the normalization constant in linear time.

\end{document}